\documentclass[letterpaper]{article}
\usepackage[preprint]{aaai2027}
\usepackage{natbib}

\usepackage{graphicx}
\usepackage{amsmath,amssymb,amsfonts,amsthm}
\usepackage{bm}
\usepackage{mathtools}
\usepackage{mathrsfs}
\usepackage{xparse}
\usepackage{booktabs}
\usepackage{hyperref}

\def\eqref#1{equation~\ref{#1}}
\def\1{\bm{1}}

\DeclareMathAlphabet{\mathsfit}{\encodingdefault}{\sfdefault}{m}{sl}
\SetMathAlphabet{\mathsfit}{bold}{\encodingdefault}{\sfdefault}{bx}{n}

\NewDocumentCommand{\var}{O{s} m O{}}{%
  \ensuremath{#1_{#2}^{#3}}% add \vphantom{<bizarre sup>}
}

\providecommand{\mainlabelprefix}{}
\newcommand{\mainref}[1]{\ref{\mainlabelprefix#1}}
\newcommand{\BigO}{\mathcal O}
\newcommand{\commentout}[1]{}

\definecolor{light-gray}{gray}{0.80}

\newtheorem{mytheorem}{Theorem}
\newtheorem{theorem}[mytheorem]{Theorem}

\newtheorem{assumption}{Assumption}
\newtheorem{lemma}{Lemma}
\theoremstyle{definition}

\newtheorem{proposition}{Proposition}
\newtheorem{remark}{Remark}
\theoremstyle{plain}
\newtheorem{remk}{Remark}

\newcommand\atsign{@}

\title{Sketched Linear Contrastive Learning: \\ Approximation, Optimization, and Statistical Scaling}

\hypersetup{
  pdftitle={Sketched Linear Contrastive Learning: Approximation, Optimization, and Statistical Scaling},
  pdfauthor={Ziyan Chen, Zhongzhu Zhou, Dingxuan Zhou}
}

\author{%
  Ziyan Chen\textsuperscript{\rm 1},
  Zhongzhu Zhou \textsuperscript{\rm 1,2},
  Dingxuan Zhou\textsuperscript{\rm 1}
}

\affiliations{
  \textsuperscript{\rm 1}The University of Sydney\\
  \textsuperscript{\rm 2}Together AI\\
  \{ziyan.chen,zhongzhu.zhou,dingxuan.zhou\}\atsign sydney.edu.au
}

\begin{document}
\raggedbottom

\maketitle

\begin{abstract}
Scaling laws study how model performance varies with model size, data size, learning steps and compute. While recent theoretical work has established scaling laws for sketched linear regression, much less is understood for contrastive representation learning. In this paper, we study a sketched linear model for contrastive learning under a paired Gaussian latent-variable setup. The learner observes only sketched views of two correlated variables and trains a bilinear contrastive score by full-batch empirical gradient descent. We analyze a negative-population contrastive objective under aligned power-law spectra and a contrastive source condition, with the risk decomposition framework into irreducible risk, approximation error, GD bias, GD variance, and a cross term. Our main theorem gives an explicit scaling law with respect to sketch dimension \(M\), sample size \(N\), and effective optimization horizon \(L_{\mathrm{eff}}\gamma\). Compared with standard linear-regression scaling laws, the contrastive setting must learn interactions between two views, and this changes how optimization and finite-sample noise scale. This provides a first theoretical step toward understanding scaling behavior in contrastive learning and offers guidance for compute allocation.
\end{abstract}
\section{Introduction}
\label{sec:introduction}

Scaling laws describe how prediction error changes with model size, data size, optimization time and compute. A representative example is the neural language-model law of \citet{kaplan2020scaling}, where the loss is modeled in power decays of number of parameters \(P\) and dataset size \(T\):
\[
    L(P,T)
    =
    L_\infty
    +
    \left(\frac{P_c}{P}\right)^{\alpha_P}
    +
    \left(\frac{T_c}{T}\right)^{\alpha_T}.
\]
Related empirical laws have been observed across language, vision, translation, speech, and multimodal modeling~\citep{hestness2017deep,henighan2020scaling,
hoffmann2022training,zhai2022scaling,muennighoff2023scaling}. 
% These laws help forecast the returns from additional compute, data, or parameters before expensive training runs.
However, empirical fits alone do not explain where the exponents come from, which part of the risk they describe, or how algorithmic choices affect them. This has motivated theoretical scaling-law analyses in simplified but statistically transparent models~\citep{hutter2021learning,sharma2020neural,maloney2022solvable,bahri2024explaining,bordelon2024dynamical,atanasov2024scaling,paquette2024phases,dohmatob2024tale}. In particular, recent work on sketched linear regression derives provable laws under power-law covariance spectra and source conditions, connecting approximation, optimization, data, and compute~\citep{lin2024scaling,lin2025datareuse,chen2026minibatch}. Inspired by this framework, we ask whether an analogous scaling-law theory can be developed for contrastive learning.

Contrastive learning is a central paradigm in modern representation learning. Its basic principle is to learn representations that assign high scores to related pairs and low scores to unrelated pairs. This idea appears in early metric-learning objectives~\citep{hadsell2006dimensionality}, mutual-information and noise-contrastive formulations such as CPC and InfoNCE~\citep{oord2018representation}, and self-supervised visual methods such as SimCLR, MoCo, and supervised contrastive learning~\citep{chen2020simple,he2020momentum,khosla2020supervised}. It also underlies large-scale language--image pretraining: CLIP aligns visual and textual representations by contrasting matched image--text pairs against mismatched pairs~\citep{radford2021learning}, and related systems such as ALIGN and LiT show that contrastive pretraining can support strong zero-shot transfer, retrieval, and robust visual classification at scale~\citep{jia2021scaling,zhai2022lit}. Thus contrastive learning aims to recover a representation geometry useful for classification, retrieval, transfer, and multimodal alignment.

Despite this empirical success, the theoretical understanding of contrastive learning scaling laws remains incomplete. Existing theory explains several important aspects, including why contrastive objectives can recover latent factors, how augmentations affect representations, and how contrastive learning differs from generative or reconstruction-based unsupervised learning~\citep{arora2019theoretical,
wang2020understanding,tosh2021contrastive,zimmermann2021contrastive,
haochen2021provable,ji2021power}. Recent work also studies statistical consistency and generalization through function-approximation and finite-sample analyses~\citep{li2026statistical}. However, these results do not directly explain how contrastive risk scales with compute-relevant quantities. In parallel, empirical studies of contrastive language--image pretraining have found predictable scaling in CLIP-like models. For example, \citet{cherti2023reproducible} report power-law scaling for OpenCLIP across zero-shot classification, retrieval, linear probing, and fine-tuning, while \citet{li2023inverse} identify an inverse scaling law for CLIP training based on reducing image/text token length as encoders grow. 
% In addition, more recent work uses scaling-law fits to compare open vision--language pretraining procedures and datasets~\citep{nezhurina2025scaling}. 
These empirical findings suggest observable contrastive scaling, but they do not provide a systematic theoretical analysis through risk decomposition.

This paper develops a theoretical scaling-law model for contrastive learning in a sketched linear setting. We study a Gaussian paired-view model and a bilinear contrastive score trained on sketched inputs with a negative-population contrastive objective~\citep{wang2020understanding}. The resulting problem is analytically tractable, but exhibits a feature absent from ordinary linear regression: linear regression learns how individual input directions contribute to a response, whereas contrastive learning must learn relations between two views. In our bilinear model, these relations are represented by interactions between pairs of spectral directions. 
% Even though it is stylized and should not be read as a direct quantitative evaluation of modern contrastive systems such as CLIP, The setup is transparent and tractable

% The setup is transparent and tractable, and is motivated by modern contrastive algorithms such as CLIP, but it is stylized and should not be read as a direct quantitative evaluation of such systems.

Our main contributions are as follows.
\begin{itemize}
    \item \textbf{A sketched linear contrastive learning setup.}
    We propose a Gaussian paired-view model for contrastive learning. The model is transparent and tractable to permit risk decompositions and bounding, while retaining the pairwise alignment structure of contrastive learning.

    \item \textbf{Scaling laws for empirical GD.}
    We derive explicit scaling laws in sketch dimension, sample size, and the number of GD steps through the effective horizon \(L_{\mathrm{eff}}\gamma\), by bounding approximation error, GD bias, GD variance, and the cross term. A key new quantity is the product effective dimension, which counts active pairs of spectral directions rather than active single directions.

    \item \textbf{Guidance for compute-resource allocation.}
    The final scaling law separates the effects of model size, data size, and optimization time, giving a transparent rule for balancing these resources under a compute budget in the contrastive setting.
\end{itemize}

To the best of our knowledge, this is the first provable scaling-law analysis for a contrastive learning objective under risk decomposition. A central novelty is the product effective dimension, which has no analogue in ordinary linear regression and reflects the fact that contrastive learning activates pairs of spectral directions. Even though our setup is stylized and should not be read as a direct quantitative evaluation of modern contrastive systems such as CLIP, it provides a theoretical starting point for understanding why contrastive objectives can exhibit scaling behavior.

\paragraph{Notation.}
For two positive quantities \(f\) and \(g\), we write \(f\lesssim g\)
(equivalently, \(f=\BigO(g)\)) and \(f\gtrsim g\) (equivalently,
\(f=\Omega(g)\)) if the inequality holds up to an absolute constant. We write
\(f\asymp g\) or \(f=\Theta(g)\) when both bounds hold. For matrices \(A\) and
\(B\) of compatible dimensions, \(\langle A,B\rangle:=\operatorname{tr}(A^\top
B)\) denotes the Frobenius inner product. We use \(\|\cdot\|\) for the operator
norm of a matrix and the Euclidean norm of a vector, and \(\|\cdot\|_F\) for the Frobenius norm. For positive semidefinite matrices \(A\) and \(B\), \(A\preceq
B\) means that \(B-A\) is positive semidefinite. If \(\Sigma\succeq0\), we write
\(\|u\|_\Sigma^2:=u^\top\Sigma u\) and \(\|A\|_{\Sigma,\Sigma}^2:=\operatorname{tr}(A^\top\Sigma A\Sigma)\). Also
\(\mu_i(\Sigma)\) denotes the \(i\)-th largest eigenvalue of a symmetric matrix
\(\Sigma\), and \(\mathbb E_{\mathcal D}\) denotes expectation over the training
sample, conditional on the sketch when the sketch is fixed. Finally, we define the empty product as the identity map.

\section{Preliminary}
\label{sec:preliminary}

\paragraph{Contrastive-learning background.}
Contrastive learning starts from paired views \(\{a_i\}_{i=1}^n,\{b_j\}_{j=1}^n\), where \(a_i\) is a positive view matched with \(b_j\) if \(i=j\), otherwise a negative view. For instance, in image-caption matching, \((a_i, b_i)\) is a pair of matched image and caption. A common formulation trains encoders \(f_\theta\) and \(g_\theta\), which map to the same encoding dimension, through a similarity score
\[
    s_\theta(u,v)
    :=
    \frac{\langle f_\theta(u),g_\theta(v)\rangle}{\tau},
    \quad u \in \{a_i\}_{i=1}^n,
    \quad v \in \{b_j\}_{j=1}^n.
\]
The goal is to learn representations that assign high scores to matched pairs and low scores to mismatched pairs, thereby recovering encoders \(f_\theta\) and \(g_\theta\), which are useful for downstream tasks, for example modal-feature extraction~\citep{oord2018representation,radford2021learning}. 
% Our sketched bilinear model can be viewed as replacing nonlinear encoders \(f_\theta\),\(g_\theta\) with a bilinear interaction matrix.

\subsection*{Problem Setup}
\label{subsec:problem-setup}
Following the alignment idea in contrastive learning, we consider a sketched bilinear model, where we replace the nonlinear encoders \(f_\theta\),\(g_\theta\) with a bilinear interaction matrix. Figure~\ref{fig:problem-setup} gives a visual summary of the latent paired views, sketching map, bilinear score, and empirical GD procedure. We start with introducing the sketched data setup, followed by bilinear model structure, objective function, and recursive gradient descent.

\begin{figure*}[t]
    \centering
    \begin{minipage}{0.95\textwidth}
        \centering
        \includegraphics[width=\linewidth]{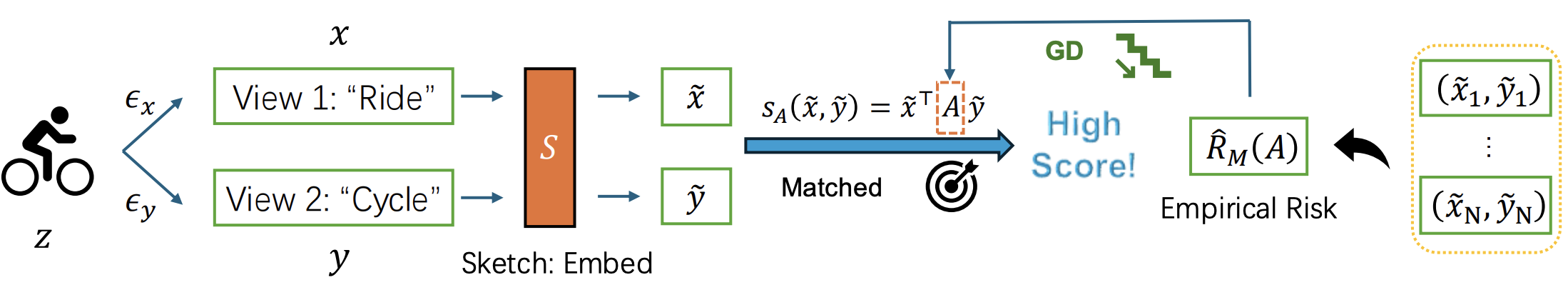}
        \small
        Starting from a shared latent variable \(z\), it generates two noisy matched views \(x\) and \(y\); the fixed sketch \(S\) maps them to \(\widetilde x\) and \(\widetilde y\). The sketched bilinear score \(s_A(\widetilde x,\widetilde y)=\widetilde x^\top A\widetilde y\) is then trained by empirical GD on paired sketched samples to minimize \(\widehat R_M(A)\).
    \end{minipage}
    \caption{Illustration of the sketched bilinear contrastive-learning setup.}
    \label{fig:problem-setup}
\end{figure*}

\paragraph{Latent-variable contrastive setup.}
Let \(D\in \mathbb N\cup\{\infty\}\) be the ambient dimension. Consider the model
\[
    z\sim \mathcal N(0,\Lambda_z),\qquad
    \epsilon_x,\epsilon_y \overset{\mathrm{i.i.d.}}{\sim}
    \mathcal N(0,\Lambda_\epsilon),
\]
with \(z \in \mathbb{R}^D\) the latent variable, and \(\epsilon_x,\epsilon_y \in \mathbb{R}^D\) independent perturbations. They generate two matched views based on \(z\):
\[
    x=z+\epsilon_x,\qquad y=z+\epsilon_y .
\]
The two views share a latent component: for example, two texts $x$ and $y$ describing the same event $z$, but with independent view-specific noise. Define the marginal covariance and the cross-covariance by
\begin{align*}
    H:=\mathbb E[xx^\top]
      =\mathbb E[yy^\top]
      =\Lambda_z+\Lambda_\epsilon, \quad
    C:=\mathbb E[xy^\top]=\Lambda_z.
\end{align*}

Let \(S\in\mathbb R^{M\times D}\) be a fixed full-row-rank sketching matrix, where \(M<D\) represents the model dimension (as well as input data dimension). Its spectral alignment properties are specified in Assumption~\ref{ass:contrastive-source}.
For a paired input \((x,y)\), the learner observes only the sketched pair $(\widetilde x=Sx,\ \widetilde y=Sy)$, which simulates limitations in observation. For the example of two texts describing the same event, this sketch can be viewed as an embedding-dimension bottleneck: a finite-dimensional token representation cannot describe the text from all perspectives.

\paragraph{Bilinear contrastive score.}
For an unsketched setup, we consider a bilinear score function with interaction matrix \(W\) between paired observations,
\[
    s_W(x,y):=x^\top Wy,
    \qquad W\in\mathbb R^{D\times D}.
\]
Let \(p_{+}\) be the joint distribution of a positive pair \((x,y)\), with marginals \(p_x\) and \(p_y\).  A large-negative-sample objective can be written as \citep{wang2020understanding}
\[
    \mathcal L(s)
    :={}
    \mathbb E_{(x,y)\sim p_{+}}[\log
    \mathbb E_{y'\sim p_y}
    \exp\left(\frac{s(x,y')-s(x,y)}{\tau}\right)],
\]
with \(\tau\) the temperature. For the score \(s_W(x,y)=x^\top Wy\), the independent negative \(y'\sim\mathcal N(0,H)\) makes \(x^\top Wy'\) Gaussian with conditional variance \(x^\top WHW^\top x\). Hence
\[
\begin{aligned}
    \mathcal L(W)
    &={}
    -\frac1\tau\langle W,C\rangle
    +
    \mathbb E_x
    \left[
        \log
        \mathbb E_{y'}
        \exp\left(\frac{x^\top Wy'}{\tau}\right)
    \right] \\
    &={}
    -\frac1\tau\langle W,C\rangle
    +
    \frac{1}{2\tau^2}
    \operatorname{tr}(W^\top HWH).
\end{aligned}
\]
Since the temperature only rescales \(W\), replacing \(W/\tau\) by \(W\) gives the equivalent quadratic objective:
\[
    R(W)
    :=
    -\langle W,C\rangle
    +\frac12 \operatorname{tr}(W^\top HWH),
    \qquad W\in\mathbb R^{D\times D}.
\]
Here the cross-covariance term \(-\langle W,C\rangle\) represents positive-pair
alignment, while the marginal term \(\operatorname{tr}(W^\top HWH)\) comes from the Gaussian moment-generating function for independent negatives. Its full population minimizer is
\[
    W^\star=H^{-1}CH^{-1}.
\]

For the sketched model \(W=S^\top A S\), the learnable score matrix is \(A \in \mathbb{R}^{M\times M}\). Here we define the sketched marginal covariance and sketched cross covariance as
\[
    \Sigma:=SHS^\top,
    \qquad
    C_M:=SCS^\top.
\]
Then the risk of the sketched-space matrix parameter is
\[
    R_M(A) 
    :=R(S^\top A S)
    =
    -\langle A,C_M\rangle
    +\frac12\operatorname{tr}(A^\top \Sigma A\Sigma).
\]
Similarly, its minimizer can be shown to be
\[
    A^\star
    =
    \Sigma^{-1}C_M\Sigma^{-1}
    =
    (SHS^\top)^{-1}SCS^\top(SHS^\top)^{-1}.
\]

In the abstraction of our setup, \(S\) plays the role of a finite-dimensional representation bottleneck, while the bilinear parameter \(A\) learns which coordinates of the two views should interact. Thus \(s_A(\widetilde x,\widetilde y)=\widetilde x^\top A \widetilde y\) is a contrastive similarity score between learned embeddings.

\paragraph{Full-Batch Empirical Gradient Descent.}

Under the population risk, the excess risk of an estimated parameter \(A\), describing the quality difference between the estimated parameter and the best possible parameter, is defined as
\[
    R_M(A)-R_M(A^\star)
    =
    \frac12\|A-A^\star\|_{\Sigma,\Sigma}^2.
\]
Given \(N\) pairs of sketched sample \((\widetilde x_i,\widetilde y_i)_{i=1}^N\), define their empirical sketched marginal and cross covariance as
\[
    \widehat\Sigma_x
    :=
    \frac1N\sum_{i=1}^N\widetilde x_i\widetilde x_i^\top,
    \
    \widehat\Sigma_y
    :=
    \frac1N\sum_{i=1}^N\widetilde y_i\widetilde y_i^\top,
    \
    \widehat C
    :=
    \frac1N\sum_{i=1}^N\widetilde x_i\widetilde y_i^\top.
\]
% and empirical sketched cross covariance as
% \[
%     \widehat C
%     :=
%     \frac1N\sum_{i=1}^N\widetilde x_i\widetilde y_i^\top.
% \]
We consider full-batch empirical GD on empirical loss
\[
    \widehat R_M(A)
    =
    -\langle A,\widehat C\rangle
    +
    \frac12\operatorname{tr}
    \left(
        A^\top\widehat\Sigma_xA\widehat\Sigma_y
    \right).
\]
Consequently, the GD recursion is defined as
\[
A_t
=
A_{t-1}
-
\gamma_t
\left(
\widehat\Sigma_xA_{t-1}\widehat\Sigma_y-\widehat C
\right),
\qquad
t=1,\ldots,L,
\]
with initialization \(A_0=0\), where \(\gamma_t\) is the learning rate at step $t$ and \(L\) is the number of full-batch GD steps. Here we use a geometrically decaying stepsize schedule for the convergence of the optimization procedure
\[
    \gamma_t=\frac{\gamma}{2^\ell},
    \qquad
    \ell=\left\lfloor \frac{t}{L_{\mathrm{eff}}}\right\rfloor,
    \qquad
    t=1,\ldots,L,
\]
where we define the effective steps as $L_{\mathrm{eff}}:=\left\lfloor \frac{L}{\log L}\right\rfloor .$

\subsection*{Assumptions}
\label{subsec:assumptions}

% ============================================================
% Assumptions for the contrastive approximation scaling
% ============================================================
\begin{assumption}[Aligned power-law spectra]
\label{ass:aligned-power-laws}
The covariance operators \(\Lambda_z\) and \(\Lambda_\epsilon\) are simultaneously diagonalizable: there exists an orthonormal basis \(\{u_i\}_{i=1}^D\) such that
\[
    \Lambda_z=
    \sum_{i=1}^D\lambda_{z,i}u_iu_i^\top,
    \qquad
    \Lambda_\epsilon=
    \sum_{i=1}^D\lambda_{\epsilon,i}u_iu_i^\top .
\]
Also, there exist constants \(a,b>1\) with \(a>b+\frac12\), and constants \(0<c_0 \le c_1\) such that
\[
    c_0 i^{-a}\le \lambda_{z,i} \le c_1 i^{-a},
    \qquad
    c_0 i^{-b}\le \lambda_{\epsilon,i} \le c_1 i^{-b},
    \qquad i\ge 1.
\]
\end{assumption}

This assumption fixes a common spectral coordinate system for the latent variable and perturbation. Power-law spectra of this form are widely used in sketched linear-regression and kernel-learning scaling analyses
\citep{caponnetto2007optimal,lin2017distributed,bahri2024explaining,lin2024scaling,lin2025datareuse}. The power-law decay models the fact that most signal and noise energy is concentrated in leading spectral directions.

\begin{assumption}[GD stepsize schedule and effective horizon]
\label{ass:gd-schedule}
The initial stepsize \(\gamma>0\) is deterministic and satisfies
\(\gamma\le c_\gamma\) for a sufficiently small constant
\(c_\gamma>0\). Also, the effective horizon is non-degenerate, namely
\(R:=L_{\mathrm{eff}}\gamma\gtrsim 1.\)
\end{assumption}

This assumption ensures that the full-batch GD dynamics are stable, since it keeps the empirical risk contraction of GD, while still running for a meaningful amount of optimization time. The condition rules out the degenerate regime in which the effective optimization horizon vanishes. Without this condition, the algorithm would have essentially no opportunity to learn the sketched population minimizer.

\begin{assumption}[Spectral truncation sketch and contrastive source condition]
\label{ass:contrastive-source}
Let \(\delta:=a-b\). We assume that the fixed sketch preserves the
leading population spectral order in the sketched target. Specifically, there
exists an ordered orthonormal basis \(\{v_i\}_{i=1}^M\) of the sketched space,
indexed by the first \(M\) population eigendirections, such that
\[
\begin{aligned}
    \Sigma=
    \sum_{i=1}^M\mu_i v_iv_i^\top,
    \qquad
    C_M=
    \sum_{i=1}^M\kappa_i\mu_i v_iv_i^\top .
\end{aligned}
\]
Here the eigenvalues and source coefficients obey
\[
    c_\mu i^{-b}\le \mu_i\le C_\mu i^{-b}, \quad
    \kappa_i
    =\frac{\lambda_{z,i}}{\lambda_{z,i}+\lambda_{\epsilon,i}}\asymp i^{-\delta},
\]
where \(i=1,\ldots,M\), with constants independent of \(M\). In addition, all population eigendirections outside the order \(M\) are discarded by the sketch:
\[
    S u_i=0,
    \qquad i>M .
\]
\end{assumption}

Informally, Assumption~\ref{ass:contrastive-source} says that the compressed representation keeps an ordered set of useful contrastive directions, and that the alignment signal decays along lower-energy directions.

Under this condition, the first \(M\) spectral directions of \(H\) and \(C\), along with their order, are preserved after sketching into \(\Sigma\) and \(C_M\), while the remaining population directions belong to the null space of \(S\). Source conditions and spectral truncation assumptions are widely used in
linear-regression scaling analyses~\citep{lin2024scaling,lin2025datareuse}.
The condition is imposed directly on the sketched covariance pair
\((\Sigma,C_M)\), and it is not automatic for an arbitrary sketch: even when \(H\) and \(C\) commute, the matrices \(SHS^\top\) and \(SCS^\top\) need not commute. In normal linear regression setup of the form \(y=x^\top w^*+\xi\), the target \(w^*\) is not automatically dependent on the distribution of \(x\). In comparison, the target depends on the sketched covariance in our contrastive setup, since
\(A^\star=\Sigma^{-1}C_M\Sigma^{-1}\). The condition therefore makes explicit that the sketched target inherits the same ordered source alignment as the
population target.

\section{Main Results}
\label{sec:main-results}

We now state the main scaling law for empirical gradient descent in the sketched contrastive learning problem. The population risk is decomposed into irreducible risk, approximation error, GD bias, GD variance, and in addition a cross term, adapting the error decomposition idea of~\citet{lin2024scaling,lin2025datareuse}. 
% The cross term is present in the exact GD decomposition, but for upper bounds it is harmless and can be absorbed into the bias and variance terms. 
Let the empirical cross covariance residual be \(\widehat E:=\widehat C-\widehat\Sigma_xA^\star\widehat\Sigma_y\). During derivation, the GD bias and variance filters are
\begin{align*}
    & \widehat{\mathscr B}_L(B)
    :=
    \left[\prod_{t=1}^L
    \left(I-\gamma_t\widehat\Sigma_x(\cdot)\widehat\Sigma_y\right)\right](B), \\
    & \widehat{\mathscr V}_L(E)
    :=
    \sum_{t=1}^L\gamma_t
    \left[\prod_{s=t+1}^L
    \left(I-\gamma_s\widehat\Sigma_x(\cdot)\widehat\Sigma_y\right)\right](E),
\end{align*}
Below we give the risk decomposition for empirical GD.

\begin{proposition}[Risk decomposition for empirical GD]
\label{prop:main-risk-decomposition}
Let \(A_L\) be the \(L\)-step empirical GD iterate. Write
\[
    B_L^\star:=\widehat{\mathscr B}_L(A^\star),
    \qquad
    V_L:=\widehat{\mathscr V}_L(\widehat E).
\]
Then the population risk has decomposition
\begin{align*}
    \mathbb E_{\mathcal D}
    \left[
        R_M(A_L)
    \right]
    & =\underbrace{R(W^\star)}_{\mathclap{\operatorname{Irreducible}}} +
    \underbrace{R_M(A^\star)-R(W^\star)}_{\mathclap{\operatorname{Approx}}}
    +
    \underbrace{
    \frac12\mathbb E_{\mathcal D}
    \left[\left\|B_L^\star\right\|_{\Sigma,\Sigma}^2\right]
    }_{\mathclap{\operatorname{Bias}}} \\
    & +
    \underbrace{
    \frac12\mathbb E_{\mathcal D}
    \left[\left\|V_L\right\|_{\Sigma,\Sigma}^2\right]
    }_{\mathclap{\operatorname{Var}}} -
    \underbrace{
    \mathbb E_{\mathcal D}
    \left[
        \left\langle B_L^\star,V_L\right\rangle_{\Sigma,\Sigma}
    \right]
    }_{\mathclap{\operatorname{Cross}}}.
\end{align*}
\end{proposition}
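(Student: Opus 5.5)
We split the risk into a population part and an algorithmic part. By the quadratic identity already noted, $R_M(A)-R_M(A^\star)=\tfrac12\|A-A^\star\|_{\Sigma,\Sigma}^2$, so
\[
R_M(A_L)=R(W^\star)+\bigl(R_M(A^\star)-R(W^\star)\bigr)+\tfrac12\|A_L-A^\star\|_{\Sigma,\Sigma}^2 .
\]
The first two terms are deterministic given the fixed sketch, so they pass through $\mathbb E_{\mathcal D}$ unchanged. It therefore suffices to write $A_L-A^\star$ exactly in terms of the two filters and expand the square.

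For the error recursion, write $\mathcal T(B):=\widehat\Sigma_xB\widehat\Sigma_y$, a linear map on $\mathbb R^{M\times M}$. Since $\widehat C=\mathcal T(A^\star)+\widehat E$ by the definition of $\widehat E$, the GD step gives
\[
A_t-A^\star=(I-\gamma_t\mathcal T)(A_{t-1}-A^\star)+\gamma_t\widehat E .
\]
We unroll this linear inhomogeneous recursion from $A_0-A^\star=-A^\star$, using the convention that the empty product is the identity. This yields
\[
A_L-A^\star=-\Bigl[\prod_{t=1}^L(I-\gamma_t\mathcal T)\Bigr](A^\star)+\sum_{t=1}^L\gamma_t\Bigl[\prod_{s=t+1}^L(I-\gamma_s\mathcal T)\Bigr](\widehat E)=-B_L^\star+V_L .
\]
A short induction on $L$ makes this rigorous. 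The only care needed is the ordering of the operator products, and since all factors are polynomials in the same map $\mathcal T$, they commute, so the ordering is immaterial.

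Finally, we expand $\tfrac12\|-B_L^\star+V_L\|_{\Sigma,\Sigma}^2$ using the bilinear form $\langle A,B\rangle_{\Sigma,\Sigma}:=\operatorname{tr}(A^\top\Sigma B\Sigma)$. This form is symmetric: $\operatorname{tr}(B^\top\Sigma A\Sigma)=\operatorname{tr}((B^\top\Sigma A\Sigma)^\top)=\operatorname{tr}(\Sigma A^\top\Sigma B)=\operatorname{tr}(A^\top\Sigma B\Sigma)$. The expansion gives
\[
\tfrac12\|B_L^\star\|_{\Sigma,\Sigma}^2+\tfrac12\|V_L\|_{\Sigma,\Sigma}^2-\langle B_L^\star,V_L\rangle_{\Sigma,\Sigma}.
\]
Taking $\mathbb E_{\mathcal D}$ and using linearity gives the stated decomposition. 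There is no real obstacle; the main point to check is the sign convention, namely that the initialization produces $-B_L^\star$ and hence a minus sign on the cross term. Note that the cross term need not vanish, because $\widehat E$ and the filter both depend on the same sample. That is why it is kept explicitly rather than discarded.
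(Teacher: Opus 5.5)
Your proposal is correct and matches the paper's proof essentially step for step. Both split $R_M(A_L)$ into $R(W^\star)$, the approximation gap and $\tfrac12\|A_L-A^\star\|_{\Sigma,\Sigma}^2$, unroll the error recursion from $A_0-A^\star=-A^\star$ to get $A_L-A^\star=-B_L^\star+V_L$ (the content of Lemma~\ref{lem:gd-bias-var-cross}), then expand the square and take $\mathbb E_{\mathcal D}$. Your checks on commutativity of the filter factors and symmetry of $\langle\cdot,\cdot\rangle_{\Sigma,\Sigma}$ are details the paper leaves implicit, and your exact-equality conclusion is cleaner than the appendix, which ends with an inequality from the Cauchy--Schwarz step that belongs to the later upper-bound remark rather than to the proposition itself.
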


Here the bias, variance and the cross term come from decomposing the excess risk. Note that for the cross term in the above proposition, using Cauchy-Schwarz, 
\[
    |\operatorname{Cross}|
    \le
    2\sqrt{\operatorname{Bias}\operatorname{Var}}
    \le
    \operatorname{Bias}+\operatorname{Var}.
\]
Consequently, for an upper bound analysis,
\[
    \mathbb E_{\mathcal D}
    \left[
        R_M(A_L)
    \right]
    -\operatorname{Irreducible}
    \lesssim
    \operatorname{Approx}
    +
    \operatorname{Bias}
    +
    \operatorname{Var}.
\]

We next combine the approximation, GD-bias, and GD-variance estimates. Recall the effective horizon \(R:=L_{\mathrm{eff}}\gamma\) and effective decay power \(\delta:=a-b\).
We obtain the following theorem of contrastive scaling law.

\begin{theorem}[Scaling law for empirical GD in sketched contrastive learning]
\label{thm:main-gd-scaling}
Suppose Assumptions~\ref{ass:aligned-power-laws}, \ref{ass:gd-schedule}, and \ref{ass:contrastive-source} hold, with \(\frac12<\delta<b+\frac12\). Assume the effective horizon satisfies
\[
    1\lesssim R\lesssim \min\left\{\frac{N}{M},\,M^{2b}\right\}.
\]
% The full-population irreducible risk is
% \[
% \begin{aligned}
%     R_{\mathrm{irr}}
%     &:={}
%     R(W^\star) \\
%     &={}
%     -\frac12
%     \sum_{i=1}^D
%     \left(
%         \frac{\lambda_{z,i}}
%         {\lambda_{z,i}+\lambda_{\epsilon,i}}
%     \right)^2, \\
%     -R_{\mathrm{irr}}
%     &={}
%     \Theta(1).
% \end{aligned}
% \]
% Thus \(R_{\mathrm{irr}}=-\Theta(1)\) is a constant risk floor independent of
% \(M\), \(N\), and \(R\). 
% Then there exists a covariance event \(\mathcal E_{\mathrm{cov}}(R)\), holding with probability at least \(1-\exp(-\Omega(M))\), on which the population risk decomposed in Proposition~\ref{prop:main-risk-decomposition} satisfies
Then there exists a covariance event \(\mathcal E_{\mathrm{cov}}(R)\) holding with probability at least \(1-\exp(-\Omega(M))\)
over the training sample, such that the population risk satisfies
\begin{align*}
    &\mathbb E_D[R_M(A_L)\mid \mathcal E_{\mathrm{cov}}(R)]
    =
    \underbrace{-\Theta(1)}_{\mathclap{\operatorname{Irreducible}}}
    +
    \underbrace{
    \Theta\left(M^{1-2\delta}\right)
    }_{\mathclap{\operatorname{Approx}}}
    +
    \underbrace{
    \Theta\left(R^{\frac{1-2\delta}{2b}}\right)
    }_{\mathclap{\operatorname{Bias}}}
    \\
    &\quad
    +
    \underbrace{
    \Theta\left(
    \frac{D_{\times}(R,M)}{N}
    \right)
    }_{\mathclap{\operatorname{Var}}}
    +
    \underbrace{
    \BigO\left(
    R^{\frac{1-2\delta}{4b}}
    \sqrt{D_{\times}(R,M)/N}
    \right)
    }_{\mathclap{\operatorname{Cross}}}.
\end{align*}
Here
\[
    D_{\times}(R,M)
    =
    \begin{cases}
    R^{1/b}\log(eR), & 1\le R^{1/b}\le M, \\
    R^{1/b}\left(1+\log\dfrac{M^2}{R^{1/b}}\right),
        & M<R^{1/b}<M^2, \\
    M^2, & R^{1/b}\ge M^2.
    \end{cases}
\]
The hidden constants in \(\Theta(\cdot)\) and \(\BigO(\cdot)\) depend only on exponents and constants in assumptions.
\end{theorem}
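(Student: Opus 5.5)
Starting from Proposition~\ref{prop:main-risk-decomposition}, I would bound the five terms one at a time and work in the eigenbasis \(\{v_i\}\) of Assumption~\ref{ass:contrastive-source}. There \(A^\star=\Sigma^{-1}C_M\Sigma^{-1}\) is diagonal with entries \(\kappa_i/\mu_i\), and \(\|A\|_{\Sigma,\Sigma}^2=\sum_{i,j}\mu_i\mu_j A_{ij}^2\). The key structural fact is that the population GD operator \(A\mapsto \Sigma A\Sigma\) acts on the entry \((i,j)\) as multiplication by \(\mu_i\mu_j\). So each coordinate pair is its own scalar GD problem with curvature \(\mu_i\mu_j\asymp (ij)^{-b}\). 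This is where the product effective dimension comes from.

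\textbf{Irreducible and approximation terms.} By Assumption~\ref{ass:aligned-power-laws}, \(R(W^\star)=-\frac12\langle W^\star,C\rangle=-\frac12\sum_i\kappa_i^2\). Since \(\kappa_i\asymp i^{-\delta}\) with \(\delta>\frac12\), this sum is \(-\Theta(1)\). Under the truncation \(Su_i=0\) for \(i>M\), we get \(R_M(A^\star)=-\frac12\sum_{i\le M}\kappa_i^2\). Hence \(\operatorname{Approx}=\frac12\sum_{i>M}\kappa_i^2\asymp M^{1-2\delta}\), with matching upper and lower bounds.

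\textbf{Covariance event and reduction to population filters.} I would define \(\mathcal E_{\mathrm{cov}}(R)\) as the event that the empirical covariances are multiplicatively close to \(\Sigma\) on the relevant scale. Concretely, \(\|(\Sigma+\lambda I)^{-1/2}(\widehat\Sigma_\bullet-\Sigma)(\Sigma+\lambda I)^{-1/2}\|\le\frac12\) for \(\bullet\in\{x,y\}\), with \(\lambda\asymp R^{-1/2}\) or a comparable threshold. Matrix Bernstein for Gaussian sketched vectors in dimension \(M\), together with \(R\lesssim N/M\), gives probability \(1-\exp(-\Omega(M))\). On this event \(\widehat\Sigma_x\otimes\widehat\Sigma_y\) is comparable to \(\Sigma\otimes\Sigma\) above the threshold, so the empirical filters \(\widehat{\mathscr B}_L,\widehat{\mathscr V}_L\) can be sandwiched between population filters with constants. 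The geometric schedule then behaves like a spectral filter \(\phi(\mu_i\mu_j)\), where \(\phi(x)\approx e^{-Rx}\) for bias and \(\phi(x)\approx\min(1,Rx)\) for the accumulated variance, exactly as in \citet{lin2024scaling}. I expect this comparison to be the main obstacle. The Kronecker operator couples two independent-looking but correlated empirical matrices (\(x_i,y_i\) share \(z_i\)), and a two-sided \(\Theta\) requires lower bounds too. As a first attempt I would condition on the event, bound \(\widehat\Sigma_x\otimes\widehat\Sigma_y\succeq c(\Sigma\otimes\Sigma)-(\text{small})\) on the head \(\mu_i\mu_j\gtrsim 1/R\), and treat the tail crudely. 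The assumption \(R\lesssim M^{2b}\) ensures the threshold stays within the sketched spectrum.

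\textbf{Bias and variance.} For the bias, \(\sum_{i,j}\mu_i\mu_j(A^\star_{ij})^2 e^{-2R\mu_i\mu_j}\) collapses to the diagonal \(\sum_i \kappa_i^2 e^{-2R\mu_i^2}\). Only \(i\gtrsim R^{1/(2b)}\) survives, so the bias is \(\asymp R^{(1-2\delta)/(2b)}\), using \(\delta<b+\frac12\) to control the head. For the variance, \(\widehat E\) is a centered average of \(\widetilde x\widetilde y^\top-\widetilde x\widetilde x^\top A^\star\widetilde y\widetilde y^\top\) up to lower-order corrections, with covariance comparable to \(\Sigma\otimes\Sigma/N\) by Gaussian fourth moments. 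The variance then becomes \(\frac1N\sum_{i,j\le M}\min(1,R\mu_i\mu_j)^2\cdot\Theta(1)\asymp \frac1N\#\{(i,j)\le M:(ij)^b\lesssim R\}\) plus a tail sum. Counting lattice points under the hyperbola \(ij\le R^{1/b}\) gives \(R^{1/b}\log R^{1/b}\) when \(R^{1/b}\le M\). When \(M<R^{1/b}<M^2\), truncating at \(i,j\le M\) gives \(R^{1/b}(1+\log(M^2/R^{1/b}))\), and beyond that the count is \(M^2\). Together these give \(D_\times(R,M)\). Finally, the cross term follows from Cauchy--Schwarz as noted after Proposition~\ref{prop:main-risk-decomposition}.
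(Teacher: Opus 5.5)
\textbf{Gap: the covariance event and the filter reduction.} Your skeleton matches the paper's: Proposition~\ref{prop:main-risk-decomposition}, the exact tail formula for the approximation term, the diagonal bias sum with cutoff $R^{1/(2b)}$, the lattice count for $d_{\mathrm{prod}}$, and Cauchy--Schwarz for the cross term. The reduction step you flag as the main obstacle, however, fails as you set it up.

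A regularized event $\|(\Sigma+\lambda I)^{-1/2}(\widehat\Sigma_\bullet-\Sigma)(\Sigma+\lambda I)^{-1/2}\|\le\frac12$ with $\lambda\asymp R^{-1/2}$ is the right scale for single directions. That is all the bias sees, since its threshold is $\mu_i^2\asymp 1/R$. It is not the right scale for the product operator $B\mapsto\widehat\Sigma_xB\widehat\Sigma_y$ that governs the variance.

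The active set $\{\mu_i\mu_j\gtrsim 1/R\}=\{ij\lesssim R^{1/b}\}$ is a hyperbolic region. Its square part $i,j\lesssim R^{1/(2b)}$ holds only $O(R^{1/b})$ pairs. Everything producing the logarithms in $D_{\times}(R,M)$ lies in the two arms, where one marginal eigenvalue is far below $\lambda$. For example, when $R^{1/b}\le M$ the pair $(1,j)$ with $\mu_j\asymp R^{-1}$ is active. Your event fixes $\widehat\Sigma_y$ in direction $v_j$ only up to an additive $\lambda/2\gg\mu_j$. So the empirical curvature of that pair is known only up to $\lambda\mu_1\asymp R^{-1/2}$, a factor $R^{1/2}$ above the population value $\mu_1\mu_j\asymp R^{-1}$.

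Thus $\widehat\Sigma_x\otimes\widehat\Sigma_y\succeq c\,\Sigma\otimes\Sigma-(\text{small})$ fails on exactly the pairs that carry the product effective dimension. Neither side of $\operatorname{Var}_L\asymp D_{\times}(R,M)/N$ can be transferred through your event. The paper instead uses the unregularized whitened event $\|\Sigma^{-1/2}\widehat\Sigma_\sharp\Sigma^{-1/2}-I\|\le c_{\mathrm{rel}}R^{-1/2}$ on all of $\mathbb R^M$. Since $\Sigma^{-1/2}\widetilde x\sim\mathcal N(0,I_M)$ exactly, the standard $\sqrt{M/N}$ Gaussian covariance bound with $N\gtrsim RM$ gives this event with probability $1-\exp(-\Omega(M))$. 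That, not Bernstein at an effective dimension, is where both the hypothesis $R\lesssim N/M$ and the $\exp(-\Omega(M))$ come from.

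\textbf{Tolerance and the two-sided bounds.} The size of the tolerance also matters. With a constant tolerance such as $\frac12$ you only get a Loewner sandwich $c\,\mathscr H\preceq\widehat{\mathscr H}\preceq C\,\mathscr H$ of non-commuting operators. That sandwich does not transfer to $\prod_t(I-\gamma_t\widehat{\mathscr H})$ or $\sum_t\gamma_t\prod_{s>t}(I-\gamma_s\widehat{\mathscr H})$ evaluated at specific matrices, so ``sandwiching the filters'' is not available. The paper splits the work:
\begin{itemize}
\item The bias upper bound needs only constant-factor full-space norm equivalence. It uses the spectral calculus of $\widehat{\mathscr H}$ itself in the $\widehat\Sigma_x,\widehat\Sigma_y$-norm: $s\psi_L(s)^2\lesssim 1/R$ on the head $P_kA^\star P_k$, and contraction on the tail.
\item The two-sided variance-filter comparison and the bias lower bound are perturbative. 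In a telescoping expansion, a relative error $c_{\mathrm{rel}}R^{-1/2}$ accumulated over $\sum_t\gamma_t\asymp R$ costs only $O(c_{\mathrm{rel}})$.
\item The bias lower bound additionally projects onto $\operatorname{span}\{v_iv_i^\top:i>k\}$, so that head-to-tail leakage needs two perturbation factors.
\end{itemize}

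\textbf{Linearization of $\widehat E$.} Your description of $\widehat E$ is also off. $\widehat\Sigma_xA^\star\widehat\Sigma_y$ is a product of two sample averages, not the average of $\widetilde x_n\widetilde x_n^\top A^\star\widetilde y_n\widetilde y_n^\top$. The mean of that single-sample product differs from $C_M$ by an $O(1)$ Isserlis correction, because $x_n$ and $y_n$ are correlated. The correct first-order noise in whitened form is $G_{xy}-G_xK_\star-K_\star G_y$, and $G_xK_\star G_y$ is lower order when $N\gtrsim M$. The variance lower bound then needs an explicit check that the off-diagonal coordinates of this noise have variance bounded below.
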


\paragraph{Main takeaway.}
The theorem gives a three-way tradeoff. Increasing \(M\) makes the representation richer and reduces approximation error; increasing \(R\) lets GD learn more signal but also activates more finite-sample noise; increasing \(N\) suppresses this noise. The contrastive-specific feature is that learning occurs over pairs of directions \((i,j)\), rather than individual directions \(i\).

\paragraph{Interpretation of the terms.}

The irreducible risk is the best value
achievable by the full, unsketched contrastive loss. It is constant with respect
to \(M\), \(N\), and \(R\), and remains even with full dimension,
infinite data, and perfect optimization.

The approximation term is the price of using an \(M\)-dimensional full-row-rank
sketch that satisfies the contrastive source condition. A larger admissible
sketch can represent more effective contrastive spectral directions of the
population target, so this term decreases as \(M\) grows. It comes from
model-size limitation rather than optimization or sampling.

The GD-bias term measures incomplete training. With more effective optimization
time \(R=L_{\mathrm{eff}}\gamma\), gradient descent learns more from the samples, so the bias decreases. In this sense, increasing the number of steps makes the learning process more complete.

The GD-variance term measures finite-sample noise learned by the algorithm. More
data dilutes the noise, therefore lowers the variance through the factor \(1/N\). However, longer optimization time activates more directions, including noisy directions, so the variance generally grows with \(R\). Thus optimization time
creates a bias--variance tradeoff: more steps reduce learning incompleteness but increase the amount of fitted empirical noise.

Finally, the cross term records the interaction between the remaining bias and
the learned noise. It is not a separate scaling bottleneck, since by
Cauchy--Schwarz it is controlled by the geometric mean of bias and variance.

Note that the horizon conditions in Theorem~\ref{thm:main-gd-scaling} have two roles: \(R\lesssim M^{2b}\) keeps the GD-bias term in an unsaturated (not fully
learned) regime, while \(R\lesssim N/M\) ensures the empirical marginal
covariances can be replaced, with small perturbation, by the population covariance, with high probability (namely covariance event \(\mathcal E_{\mathrm{cov}}(R)\)). Previous sketched linear-regression scaling-law papers propose covariance replacement lemmas for this purpose~\citep{lin2024scaling,lin2025datareuse}, but our setup is different since the bilinear setup requires uniform control over view--view interactions.

\paragraph{Proof sketch.}
For the approximation term, the contrastive source condition is imposed directly
on the sketched covariance pair \((\Sigma,C_M)\) generated by the 
full-row-rank sketch. It requires the sketched whitened contrastive signal to
retain an ordered source profile \(\kappa_i\asymp i^{-\delta}\) over its
\(M\) effective directions. Thus the part not represented by the sketch is controlled by the remaining source tail, giving the rate
\(M^{1-2\delta}\).

For the GD-bias term, one can think of gradient descent as learning coordinates
from easiest to hardest. Here we use curvature to denote the steepness of the risk along a parameter direction. Specifically, under view-view setup, the Hessian
maps a perturbation \(B\) to \(\Sigma B\Sigma\). Hence, for
\(\Sigma v_i=\mu_i v_i\), the matrix direction \(v_iv_j^\top\) has curvature
\(\mu_i\mu_j\). Large-curvature directions have strong gradient
signals, so they are learned well during optimization, while low-curvature
directions are learned slowly. Also, the source condition shows the amount of target signal in each direction. For the upper bound, we split the target into a learnable head, whose residual is reduced by the GD filter, and a high-frequency tail, whose remaining mass is small. The lower bound keeps the part
of the target that lies in low-curvature directions, showing that finite-time GD
cannot learn that part faster.

The GD-variance term is complementary. In our bilinear contrastive problem, noise is indexed by pairs of directions \((i,j)\), where a pair becomes active when its product curvature \(\mu_i\mu_j\) is large enough relative to the optimization horizon. This leads to the product effective
dimension
\[
    d_{\mathrm{prod}}(R,M)
    :=
    \sum_{i,j\le M}
    \min\{1,(R\mu_i\mu_j)^2\},
\]
Equivalently, \(d_{\mathrm{prod}}(R,M)\) counts how many cross-view feature interactions in the \(M\times M\) bilinear grid have become learnable by time \(R\). Its power-law scale is \(D_{\times}(R,M)\). The lower bound shows that many of
these active pairs carry empirical fluctuations, giving the same rate as the upper bound.
\section{Experiments}
\label{sec:experiments}

\begin{figure*}[!t]
    \centering

    \begin{minipage}{0.48\textwidth}
        \centering
        \includegraphics[width=\linewidth]{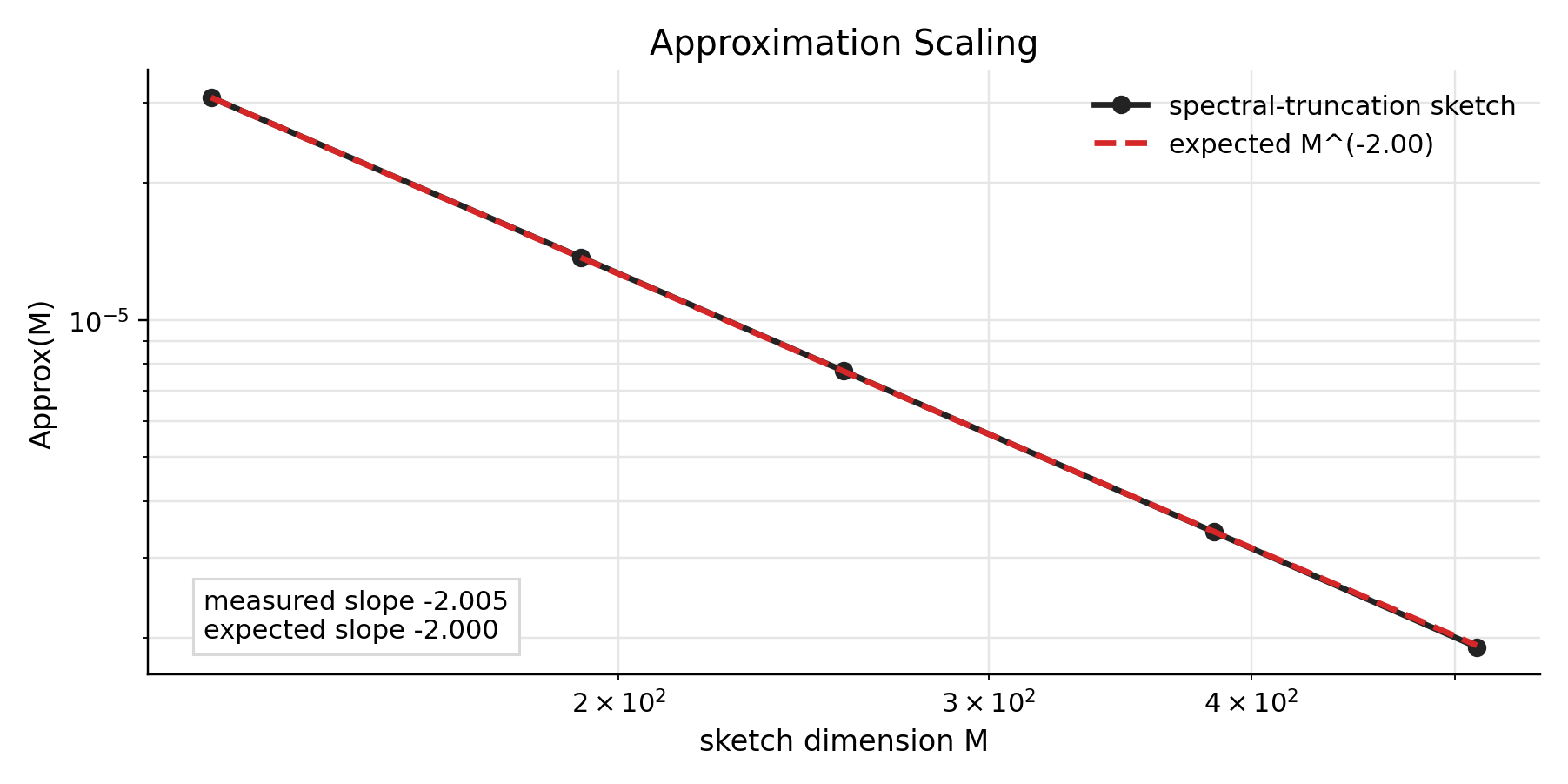}
    \end{minipage}
    \hfill
    \begin{minipage}{0.48\textwidth}
        \centering
        \includegraphics[width=\linewidth]{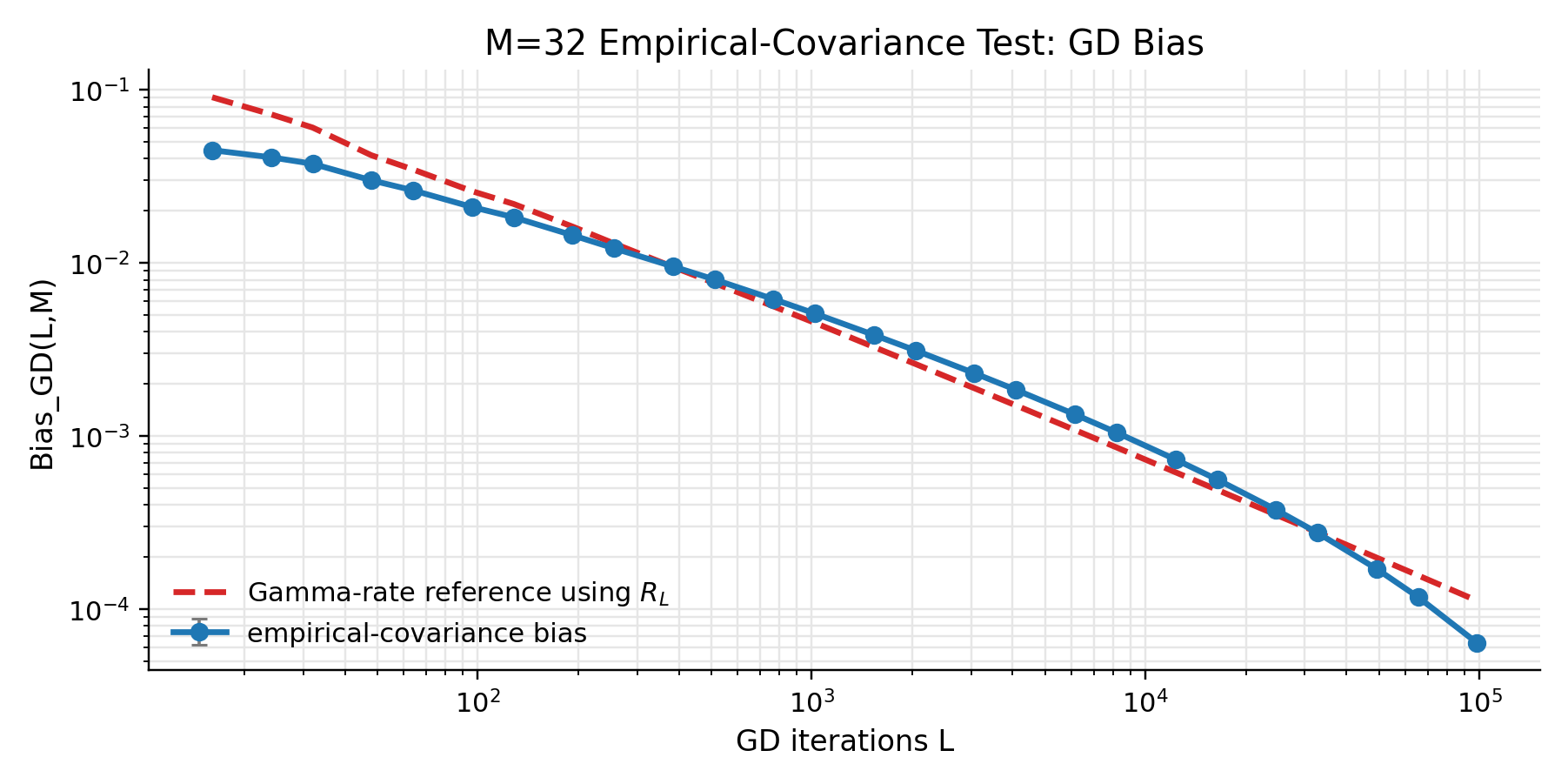}
    \end{minipage}

    \vspace{0.5em}

    \begin{minipage}{0.48\textwidth}
        \centering
        \includegraphics[width=\linewidth]{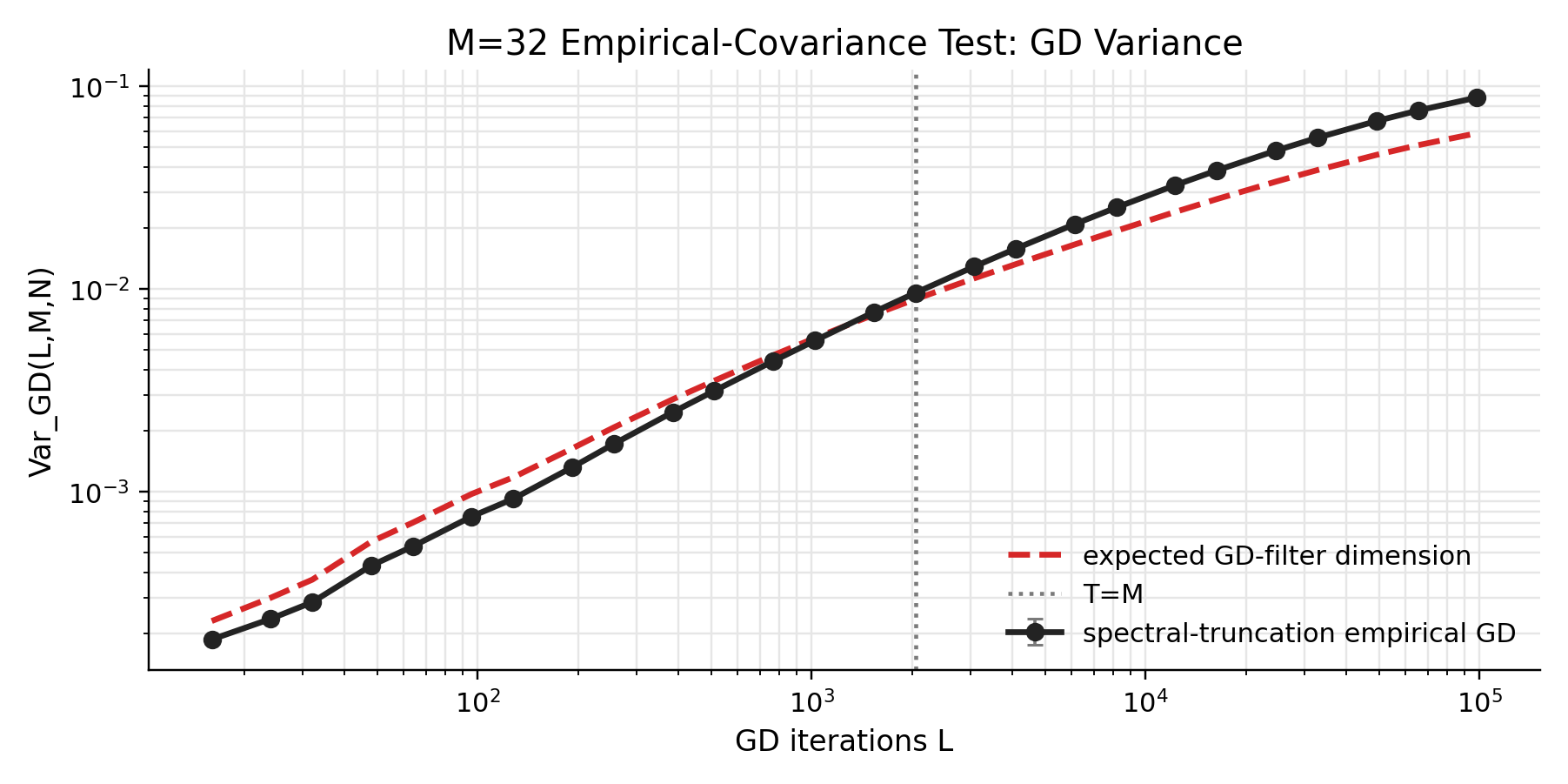}
    \end{minipage}
    \hfill
    \begin{minipage}{0.48\textwidth}
        \centering
        \includegraphics[width=\linewidth]{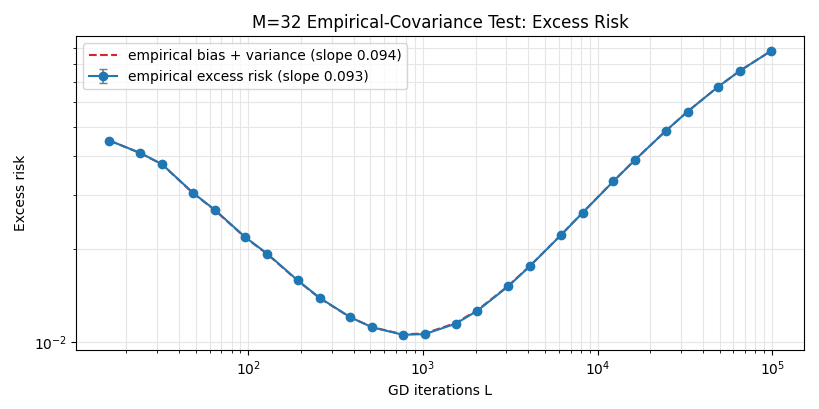}
    \end{minipage}

    \caption{
    Synthetic verification of the sketched contrastive scaling law. Top-left:
    approximation error versus sketch dimension \(M\). Top-right: GD bias versus
    the number of GD iterations \(L\), compared with the bias
    reference. Bottom-left: GD empirical variance versus \(L\) compared with the variance reference, with vertical markers indicating variance-regime transitions.
    Bottom-right: empirical excess risk along GD compared with the empirical
    bias-plus-variance.
    }
    \label{fig:synthetic-experiments}
\end{figure*}

% \subsection*{Synthetic setup}
% \label{subsec:synthetic-experiments}
\paragraph{Experiment Setups.}
We run synthetic experiments under the paired Gaussian latent-variable model of Theorem~\ref{thm:main-gd-scaling}. The two views share a latent component, have independent Gaussian noises with shared power-law eigenspectra, and are observed through a fixed full-row-rank sketch; the sketched bilinear model is then trained by full-batch GD. For the optimization panels, we enforce the source alignment and control the empirical marginal covariances, so that the experiments focus on testing the predicted GD bias/variance filters.

\paragraph{Empirical Validations.}
% \subsection*{Empirical verification}
% \label{subsec:empirical-verification}
Figure~\ref{fig:synthetic-experiments} summarizes the results. The approximation experiment varies the sketch dimension \(M\) and compares the empirical approximation error with the predicted power-law rate. The GD-bias experiment fixes \(M\) and varies the number of GD steps over a long horizon, comparing the measured GD bias with the rate predicted by Theorem~\ref{thm:main-gd-scaling}. The GD-variance experiment fixes \(M\) and \(N\), varies \(L\), and estimates the variance component across independently sampled datasets. The variance reference tracks the predicted finite-filter growth and bending across the tested horizon. Finally, the excess-risk experiment varies \(L\) and compares the empirical excess risk with the measured bias-plus-variance curve.

\paragraph{Interpretations.}
% \subsection*{Results}
% \label{subsec:results}
Overall, the synthetic results support the qualitative predictions of the theory. The approximation curve closely follows the predicted sketch-dimension exponent: the fitted slope is about \(-2.01\), compared with the predicted slope \(-2.00\). The GD-bias curve decreases monotonically with the optimization horizon; its fitted slope against \(\Gamma_L\) is about \(-0.86\), close to the theorem-guided slope \(-0.91\). The GD-variance curve grows as additional direction-pairs are learned from finite-sample fluctuations. It initially grows nearly linearly on the log--log scale and then bends in the intermediate regime; over the tested horizon it has not yet reached a flat saturated plateau. The excess-risk curve is U-shaped: early iterations are bias dominated, later iterations are variance dominated, and the minimum occurs near the point where the two terms are comparable. The empirical excess risk remains almost indistinguishable from the bias-plus-variance curve, with maximum relative discrepancy below \(0.6\%\), supporting the risk decomposition in Proposition~\ref{prop:main-risk-decomposition} and the treatment of the cross term as lower order.

\section{Discussion}
\label{sec:discussion}

\subsection{Compute Budget Allocation}
\label{subsec:discussion-compute}

\paragraph{Compute proxy.}
Scaling-law studies often express training compute as the product of model size, number of processed examples or tokens, and optimization steps, up to hardware- and architecture-dependent constants \citep{kaplan2020scaling,
hoffmann2022training}. In our sketched bilinear model,
% , the parameter matrix has
% \(M^2\) entries and each full-batch GD step processes \(N\) paired examples. 
we use the leading-order compute proxy
\[
    \mathcal C
    \asymp
    LNM^2.
\]
Here \(L\) counts full-batch GD steps, \(N\) is the number of paired training
examples processed at each step, and \(M^2\) is the number of entries in the
sketched bilinear parameter \(A\in\mathbb R^{M\times M}\). Thus
\(LNM^2\) should be read as a leading-order operation-count proxy: it ignores
hardware constants, logarithmic factors, and implementation details, but keeps
the dominant multiplicative dependence on optimization time, data size, and
model size. This product-form compute proxy is analogous to the training-compute
proxies used in empirical scaling-law studies, where compute is modeled as
model size times the amount of processed data, and to recent sketched linear
regression scaling analyses that balance approximation, data, and optimization
under a simplified compute budget~\citep{kaplan2020scaling,hestness2017deep,
henighan2020scaling,hoffmann2022training,lin2024scaling,lin2025datareuse}.

\paragraph{Compute allocation.}
The effective optimization horizon is \(R=L_{\mathrm{eff}}\gamma\), and we use
the heuristic relation \(R\approx L\gamma/\log L\). Thus, when \(\gamma\) is kept
near a constant stable value, \(L\) and \(R\) are interchangeable up to logarithmic
factors. The question is how to allocate a fixed compute budget \(\mathcal C\)
among optimization time \(L\), sketch/model size \(M\), and sample size \(N\).

\begin{proposition}[Compute allocation heuristic]
\label{prop:compute-allocation}
Ignore logarithmic factors and set fixed stable \(\gamma\asymp1\). Let
\(q:=2\delta-1\), \(0<q<2b\),  and impose the effective-horizon constraint from Theorem~\ref{thm:main-gd-scaling},
\[
    1\lesssim R\lesssim \min\left\{\frac{N}{M},\,M^{2b}\right\}.
\]
Under the compute proxy \(\mathcal C\asymp LNM^2\), the leading allocation
rules are as follows, up to logarithmic factors. The last two items are the
balanced rules for the leading excess-risk proxy; the first item records the
cost of additionally forcing the product-unsaturated boundary.
\begin{enumerate}
    \item \textbf{Product-unsaturated, forced.} If \(M\asymp R^{1/b}\), then
    \[
        R\asymp\mathcal C^{\frac{b}{2b+3}},\qquad
        M\asymp\mathcal C^{\frac{1}{2b+3}},\qquad
        N\asymp\mathcal C^{\frac{b+1}{2b+3}}.
    \]
    \item \textbf{Covariance-limited regime.} If \(0<q<2b-1\), then
    \[
        R\asymp\mathcal C^{\frac{2b}{4b+3}},\qquad
        M\asymp\mathcal C^{\frac{1}{4b+3}},\qquad
        N\asymp\mathcal C^{\frac{2b+1}{4b+3}}.
    \]
    \item \textbf{Variance-balanced regime.} If \(2b-1<q<2b\), then
    \[
        R\asymp\mathcal C^{\frac{2b}{2b+4+q}},\qquad
        M\asymp\mathcal C^{\frac{1}{2b+4+q}},\qquad
        N\asymp\mathcal C^{\frac{q+2}{2b+4+q}}.
    \]
\end{enumerate}
The boundary case \(q=2b-1\) gives the same exponents from the covariance-limited and variance-balanced formulas up to logarithmic factors. In the variance-balanced regime,
\(N/(RM)\asymp R^{(q-(2b-1))/(2b)}\), so the covariance constraint is slack except at the boundary.
\end{proposition}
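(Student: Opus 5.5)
The plan is to reduce everything to exponent bookkeeping on a leading excess-risk proxy taken from Theorem~\ref{thm:main-gd-scaling}. Dropping logarithms, the cross term (dominated by $\operatorname{Bias}+\operatorname{Var}$), and the irreducible constant, I would minimize
\[
    \mathcal E(M,N,R):=M^{-q}+R^{-\frac{q}{2b}}+\frac{D_\times(R,M)}{N},
    \qquad
    RNM^2\asymp\mathcal C,\quad 1\lesssim R\lesssim\min\{N/M,\,M^{2b}\}.
\]
Here $R\asymp L$ because $\gamma\asymp1$, and $D_\times(R,M)\asymp\min\{R^{1/b},M^2\}$ up to logarithms. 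I would parametrize every candidate allocation by powers of $R$, impose the relevant binding constraints, and solve $\mathcal C=RNM^2$ for $R$ as a power of $\mathcal C$.

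First I would argue that approximation and bias should be balanced, $M\asymp R^{1/(2b)}$. If $M^{-q}\gg R^{-q/(2b)}$, compute could be moved into $M$; if $M^{-q}\ll R^{-q/(2b)}$, then $M$ is wasted at cost $M^2$. This choice also saturates $R\lesssim M^{2b}$ and gives $R^{1/b}=M^2$, so $D_\times\asymp M^2\asymp R^{1/b}$. With this in hand, $N$ is determined by the larger of two lower requirements. The covariance constraint gives $N\gtrsim RM=R^{1+1/(2b)}$. Balancing variance against bias gives $N\asymp R^{1/b}R^{q/(2b)}=R^{(2+q)/(2b)}$.

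The two regimes follow from comparing these requirements.
\begin{itemize}
\item \emph{Covariance-limited case.} The requirement $N\asymp RM$ dominates exactly when $1+\tfrac1{2b}\ge\tfrac{2+q}{2b}$, i.e.\ $q\le 2b-1$. In this case $\mathcal C=R\cdot R^{1+1/(2b)}\cdot R^{1/b}=R^{(4b+3)/(2b)}$, which yields item~2.
\item \emph{Variance-balanced case.} When $q>2b-1$ the variance requirement dominates. Then $\mathcal C=R^{1+(2+q)/(2b)+1/b}=R^{(2b+4+q)/(2b)}$, which yields item~3. Dividing also gives $N/(RM)=R^{(q-(2b-1))/(2b)}$, which is the slackness claim. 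At $q=2b-1$ the two exponent formulas coincide, giving the boundary statement.
\end{itemize}
In each case I would also verify that the variance term is at most the bias term, e.g.\ $R^{1/(2b)-1}\le R^{-q/(2b)}$ when $q\le 2b-1$, so the proxy is indeed $\Theta(R^{-q/(2b)})$.

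For item~1, I impose $M\asymp R^{1/b}$ together with the covariance boundary $N\asymp RM=R^{1+1/b}$. This gives $\mathcal C=R\cdot R^{1+1/b}\cdot R^{2/b}=R^{(2b+3)/b}$, hence $R\asymp\mathcal C^{b/(2b+3)}$, $M\asymp\mathcal C^{1/(2b+3)}$ and $N\asymp\mathcal C^{(b+1)/(2b+3)}$. I would note that this choice over-spends on $M$ relative to the balanced rules, which is why the statement calls it forced.

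The main obstacle is justifying that these allocations are optimal rather than merely feasible. I would handle this with a monotone exchange argument in log-exponent coordinates. The problem is minimizing a maximum of power functions subject to the linear constraint $\log R+\log N+2\log M=\log\mathcal C$ and linear inequalities, which is a small linear program in the exponents. Its optimum sits at a vertex where the active terms and constraints are equalized. The two candidate vertices above are exactly the ones selected by the sign of $q-(2b-1)$. I would also check that the logarithmic factors in $D_\times$, absorbed near the boundary $R^{1/b}\approx M^2$, affect only the logarithms the statement already ignores.
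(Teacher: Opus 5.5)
Your proposal is correct and follows essentially the same route as the paper. Both use the same proxy $M^{-q}+R^{-q/(2b)}+D_\times(R,M)/N$ with $M\asymp R^{1/(2b)}$ and $N\asymp\max\{RM,\,R^{(q+2)/(2b)}\}$, split the regimes at $q=2b-1$, and obtain item~1 from $M\asymp R^{1/b}$, $N\asymp RM$. The one addition is your exponent-LP optimality step, which the paper does not attempt (it explicitly calls the result a heuristic). That step does go through: under $R\lesssim M^{2b}$ one automatically has $M^{-q}\lesssim R^{-q/(2b)}$ and $D_\times(R,M)\asymp R^{1/b}$ up to logarithms, so the exponent problem is linear and its optimum is at the vertices you identify.
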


\paragraph{Choosing time steps and learning rate.}
Let \(R_\star\) denote the target horizon prescribed by Proposition~\ref{prop:compute-allocation}. The learning-rate choice is a deterministic stable step, \(\gamma_\star \asymp c_\gamma,\)
up to absolute constant factors, where \(c_\gamma\) is chosen small enough for
the GD filters to be contractions on the covariance event and spectral alignment condition.
We then choose \(L_{\mathrm{eff}}=\left\lceil \frac{R_\star}{\gamma_\star}\right\rceil.\)
The number of GD steps \(L\) is then the smallest integer satisfying
\(\lfloor L/\log L\rfloor\ge L_{\mathrm{eff}}\). Thus, when \(\gamma_\star\asymp1\), the compute-allocation exponents can be read as explicit prescriptions for the number of GD steps: realize the target \(R_\star\) in Proposition~\ref{prop:compute-allocation} by taking
\(L\asymp R_\star\log R_\star\), with \(M\) and \(N\) chosen according to the corresponding row above.

\paragraph{Allocation intuition.}
The effective-horizon constraint changes the compute-allocation rule: increasing the optimization horizon requires at least a proportional increase in sample size. In the forced product-unsaturated allocation and the covariance-limited regime, the leading allocation keeps \(N\) at the lower boundary \(N\asymp RM\), spending just enough data to stabilize the empirical covariance replacement. In the variance-balanced regime, the approximation--bias balance gives \(M\asymp R^{1/(2b)}\), so \(D_{\times}(R,M)\asymp R^{1/b}\). Balancing the variance term \(D_{\times}(R,M)/N\) with the common approximation--bias scale \(R^{-q/(2b)}\) gives \(N\asymp R^{q/(2b)+1/b}\). Thus this regime uses extra data beyond the covariance boundary to keep finite-sample variance from dominating.

\subsection{Inspirations and Limitations}
\label{subsec:discussion-prior-clip}

Our stylized setup is motivated by three nearby lines of work. CLIP learns
image--text alignment by training modality-specific encoders on paired
image--caption data with a contrastive objective~\citep{radford2021learning}.
InfoNCE formalizes contrastive learning by comparing positive pairs against
negative samples~\citep{oord2018representation}. Recent data-reuse scaling laws study how optimization time and repeated use of finite data affect sketched linear regression~\citep{lin2025datareuse}. These works inspire different parts of our formulation. Since our goal is to analyze a population-risk scaling law for contrastive representations, we adapt the spectral power-law assumptions and GD-based population-risk decomposition from sketched linear-regression scaling theory, but place them in a tractable contrastive-learning setup. CLIP motivates using a paired score function to evaluate the relationship between two views, while InfoNCE motivates treating alignment between matched views as both the training objective and the population-risk evaluation criterion. 

Compared with these works, we acknowledge the following limitations in our setup, which makes the scaling explicit but restricts the scope of the model:
\begin{itemize}
    \item \textbf{Bilinear matched-view representation.}
    We use a bilinear score, and assume the two views \(x\) and \(y\) have matched dimensions and are sketched by the same linear map. This abstracts away the nonlinear, modality-specific image and text encoders used in CLIP, where the raw inputs live in different spaces.

    \item \textbf{No explicit sampled negatives.}
    Our empirical training data consists of paired positive samples, and the negative-sample effect is represented through the quadratic objective rather than explicit negative pairs. This is less general than InfoNCE training, where positives are explicitly contrasted against negative samples.

    \item \textbf{Simplified optimization and covariance control.}
    We analyze Gaussian data with full-batch GD under a covariance-concentration regime, where empirical marginal covariances are controlled by their population counterparts. This assumption is stronger than what is typically available in practical contrastive training and omits features such as mini-batch stochasticity.
\end{itemize}

Despite these simplifications, the result is still informative. From the scaling-law perspective, it extends linear-regression theory to a contrastive objective whose effective degrees of freedom come from products of view-specific spectral directions. From the contrastive-learning perspective, it complements existing approximation and generalization analyses by adding explicit optimization and showing how approximation error, bias, and variance interact during training.

% Thus the theorem should be interpreted as a tractable spectral model for
% contrastive scaling, rather than a direct quantitative prediction for CLIP loss curves.
\section{Conclusion}
\label{sec:conclusion}

We studied scaling laws for sketched contrastive representation learning under a
paired Gaussian latent-variable model. For the quadratic contrastive objective induced by Gaussian negatives
and trained by empirical GD, we decomposed the expected
sketched risk into irreducible risk, approximation error, GD bias, GD variance,
and a cross term that can be absorbed for upper bounds. Under aligned power-law
spectra and a contrastive source condition, our analysis gives explicit scaling
laws in terms of sketch dimension, sample size, and optimization horizon. The
main new feature compared with sketched linear regression is that the model must
learn interactions between two views, which changes how optimization and
finite-sample noise scale. Several directions remain open. It would be important
to sharpen the covariance-event analysis beyond the present Gaussian setting, to
extend the model to heterogeneous views where \(x\) and \(y\) may have different
dimensions but share the same latent variable \(z\), and to analyze multi-pass
or mini-batch SGD, where fluctuations around the GD reference path become more
delicate.

\newpage
\bibliography{ref}

\clearpage
\appendix
\twocolumn[
\begin{center}
    {\Large\bf Supplementary Material}
\end{center}

\section*{Catalogue of the Supplementary Material}

The supplementary material is organized as follows. The first part collects
notation and proves the main decomposition and scaling theorem from the main
paper. The next parts prove the approximation, GD-bias, and GD-variance bounds
separately. The final parts collect auxiliary probabilistic and spectral lemmas
and provide additional experimental details.

\vspace{0.5em}
\begin{center}
\begin{tabular}{@{}p{0.23\textwidth}p{0.62\textwidth}r@{}}
\toprule
\textbf{Part} & \textbf{Contents} & \textbf{Page} \\
\midrule
Preliminary & Notation, minimizer identities, irreducible risk, and the proofs of the main risk decomposition, main scaling theorem, and compute-allocation heuristic. & \pageref{app:preliminary} \\
Approximation Error & Definition of approximation error and deterministic spectral-tail approximation scaling under the spectral truncation sketch. & \pageref{app:approximation} \\
GD Decomposition & Bias--variance--cross decomposition for empirical GD. & \pageref{sec:contrastive-gd-excess} \\
Bias of Gradient Descent & General upper and lower bounds for the GD-bias term and their specialization under the source condition. & \pageref{app:gd-bias} \\
Variance of Gradient Descent & General upper and lower bounds for the GD-variance term and their specialization to the product effective dimension. & \pageref{app:gd-variance} \\
Auxiliary Lemmas & Concentration, block identities, norm equivalence, scalar filter estimates, noise covariance bounds, and effective-dimension estimates used in the proofs. & \pageref{app:auxiliary} \\
Experimental Details & Synthetic-data generation, implementation details, grids, and numerical summaries supporting the experiments. & \pageref{app:experiment-details} \\
Notation Map & Mapping from symbols used in the main text and appendix to their formulas and meanings. & \pageref{app:notation-map} \\
\bottomrule
\end{tabular}
\end{center}
\vspace{1em}
]
\section{Preliminary}
\label{app:preliminary}

\subsection{Appendix notation and minimizer identities}
\label{subsec:minimizer-justification}
We collect here the notation used throughout the appendix. The full covariance
and cross-covariance are
\[
    H:=\mathbb E[xx^\top]=\mathbb E[yy^\top],
    \qquad
    C:=\mathbb E[xy^\top].
\]
Under Assumption~\mainref{ass:aligned-power-laws}, these operators are
simultaneously diagonalizable. In the common population spectral basis,
\begin{align*}
    H
    &=
    \sum_i
    (\lambda_{z,i}+\lambda_{\epsilon,i})u_iu_i^\top,
    \\
    C
    &=
    \sum_i
    \lambda_{z,i}u_iu_i^\top .
\end{align*}
The full quadratic contrastive risk is
\[
    R(W)
    :=
    -\langle W,C\rangle
    +
    \frac12\operatorname{tr}(W^\top HWH),
\]
with the minimizer
\[
    W^\star:=H^{-1}CH^{-1}.
\]
For the fixed sketch matrix \(S\), write
\begin{align*}
    & \widetilde x=Sx,
    \qquad
    \widetilde y=Sy,
    \qquad
    \Sigma:=SHS^\top,
    \\
    & C_M:=SCS^\top=\mathbb E[\widetilde x\widetilde y^\top].
\end{align*}
The sketched risk and its minimizer are
\[
    R_M(A)
    :=
    -\langle A,C_M\rangle
    +
    \frac12\operatorname{tr}(A^\top\Sigma A\Sigma),
\]
with the minimizer
\[
    A^\star:=\Sigma^{-1}C_M\Sigma^{-1}.
\]
The minimizer identities are justified below. For matrices of compatible dimensions
\[
    \langle B_1,B_2\rangle_{\Sigma,\Sigma}
    :=
    \operatorname{tr}(B_1^\top\Sigma B_2\Sigma),
    \quad
    \|B\|_{\Sigma,\Sigma}^2
    :=
    \langle B,B\rangle_{\Sigma,\Sigma}.
\]
For empirical GD, let empirical marginal sketched covariance
\[
    \widehat\Sigma_x
    :=
    \frac1N\sum_{n=1}^N\widetilde x_n\widetilde x_n^\top,
    \qquad
    \widehat\Sigma_y
    :=
    \frac1N\sum_{n=1}^N\widetilde y_n\widetilde y_n^\top,
\]
and empirical cross sketched covariance
\[
    \widehat C
    :=
    \frac1N\sum_{n=1}^N\widetilde x_n\widetilde y_n^\top.
\]
We use the empirical Hessian, empirical residual, and GD filters
\[
    \widehat{\mathscr H}(B):=\widehat\Sigma_xB\widehat\Sigma_y,
    \qquad
    \widehat E:=\widehat C-\widehat\Sigma_xA^\star\widehat\Sigma_y,
\]
\[
    \widehat{\mathscr B}_{r:s}
    :=
    \prod_{t=r}^s(I-\gamma_t\widehat{\mathscr H}),
    \qquad
    \widehat{\mathscr B}_{L+1:L}:=I,
\]
and
\[
    \widehat{\mathscr B}_L:=\widehat{\mathscr B}_{1:L},
    \qquad
    \widehat{\mathscr V}_L
    :=
    \sum_{t=1}^L\gamma_t\widehat{\mathscr B}_{t+1:L}.
\]
We also use the optimization shorthands
\[
    R:=L_{\mathrm{eff}}\gamma,
    \qquad
    \rho:=R^{-1/2}.
\]
For a sufficiently small absolute constant \(c_{\mathrm{rel}}>0\), define the
covariance event
\[
\mathcal E_{\mathrm{cov}}(R)
:=
\left\{
\max_{\sharp\in\{x,y\}}
\left\|
\Sigma^{-1/2}\widehat\Sigma_{\sharp}\Sigma^{-1/2}-I
\right\|
\le c_{\mathrm{rel}}R^{-1/2}
\right\}.
\]
Under the aligned power-law assumptions, write \(\delta:=a-b\). By
Assumption~\ref{ass:contrastive-source}, for the spectral decomposition
\(\Sigma=\sum_{i=1}^M\mu_i v_iv_i^\top\),
\[
    \mu_i\asymp i^{-b},
    \qquad
    C_M=\sum_{i=1}^M\kappa_i\mu_i v_iv_i^\top,
    \qquad
    \kappa_i\asymp i^{-\delta}.
\]
For approximation arguments we also use the full and sketched whitened signals
\[
    K:=H^{-1/2}CH^{-1/2},
    \qquad
    K_M:=\Sigma^{-1/2}C_M\Sigma^{-1/2}.
\]

We now justify the minimizer identities used above. For the population risk,
\[
\begin{aligned}
    R(W)
    &=
    -\langle W,C\rangle
    +
    \frac12\operatorname{tr}(W^\top HWH).
\end{aligned}
\]
For any perturbation \(\Delta\in\mathbb R^{D\times D}\), the first variation is
\[
\begin{aligned}
    \mathrm d R(W)[\Delta]
    &=
    -\langle \Delta,C\rangle
    +
    \frac12
    \operatorname{tr}(\Delta^\top HWH)
    +
    \frac12
    \operatorname{tr}(W^\top H\Delta H)  \\
    &=
    -\langle \Delta,C\rangle
    +
    \frac12
    \langle \Delta,HWH\rangle
    +
    \frac12
    \langle \Delta,HWH\rangle  \\
    &=
    \langle \Delta,HWH-C\rangle .
\end{aligned}
\]
Hence
\[
    \nabla R(W)=HWH-C.
\]
The population minimizer therefore satisfies the normal equation
\[
    HW^\star H=C.
\]
Assuming \(H\) is invertible, this gives
\[
    W^\star=H^{-1}CH^{-1}.
\]
Moreover, since the full-population Hessian operator
\[
    \mathscr H_{\mathrm{full}}(W)=HWH
\]
is positive semidefinite with respect to the Frobenius inner product, this stationary point is the global minimizer. 

The proof works similarly to sketched minimizer. From its definition,
\[
\begin{aligned}
    R_M(A)
    &=
    -\langle A,C_M\rangle
    +
    \frac12\operatorname{tr}(A^\top \Sigma A\Sigma).
\end{aligned}
\]
For any perturbation \(\Delta\in\mathbb R^{M\times M}\), we have
\[
\begin{aligned}
    \mathrm d R_M(A)[\Delta]
    &=
    -\langle \Delta,C_M\rangle
    +
    \frac12
    \operatorname{tr}(\Delta^\top \Sigma A\Sigma)
    +
    \frac12
    \operatorname{tr}(A^\top \Sigma \Delta \Sigma)  \\
    &=
    -\langle \Delta,C_M\rangle
    +
    \frac12
    \langle \Delta,\Sigma A\Sigma\rangle
    +
    \frac12
    \langle \Delta,\Sigma A\Sigma\rangle  \\
    &=
    \langle \Delta,\Sigma A\Sigma-C_M\rangle .
\end{aligned}
\]
Therefore,
\[
    \nabla R_M(A)=\Sigma A\Sigma-C_M.
\]
The sketched population minimizer satisfies
\[
    \Sigma A^\star\Sigma=C_M.
\]
Assuming \(\Sigma\) is invertible, we obtain
\[
    A^\star
    =
    \Sigma^{-1}C_M\Sigma^{-1}.
\]
Since
\[
    \Sigma=SHS^\top,
    \qquad
    C_M=SCS^\top,
\]
this can be written as
\[
    A^\star
    =
    (SHS^\top)^{-1}SCS^\top(SHS^\top)^{-1}.
\]
Again, the sketched Hessian operator
\[
    \mathscr H(A)=\Sigma A\Sigma
\]
is positive semidefinite with respect to the Frobenius inner product, so this stationary point is the global minimizer.

Some of our proof ideas are borrowed from the proof strategy developed for sketched linear-regression scaling lawsm assembling approximation, optimization-bias, variance, and covariance-replacement bounds, while adapting it to the bilinear two-view setting
\citep{lin2024scaling,lin2025datareuse}.

\subsection{Irreducible risk}
\label{subsec:irreducible-risk}

We first identify the risk level achieved by the full population minimizer. Since
\[
    W^\star=H^{-1}CH^{-1},
\]
the population normal equation gives
\[
    HW^\star H=C.
\]
Therefore,
\[
\begin{aligned}
    R(W^\star)
    &=
    -\langle W^\star,C\rangle
    +
    \frac12\operatorname{tr}\bigl((W^\star)^\top H W^\star H\bigr)  \\
    &=
    -\langle W^\star,C\rangle
    +
    \frac12
    \langle W^\star,HW^\star H\rangle  \\
    &=
    -\langle W^\star,C\rangle
    +
    \frac12
    \langle W^\star,C\rangle  \\
    &=
    -\frac12
    \langle W^\star,C\rangle .
\end{aligned}
\]
Substituting \(W^\star=H^{-1}CH^{-1}\), we obtain
\[
\begin{aligned}
    R(W^\star)
    &=
    -\frac12
    \operatorname{tr}
    \left[
        (H^{-1}CH^{-1})^\top C
    \right]  \\
    &=
    -\frac12
    \operatorname{tr}
    \left(
        H^{-1}C^\top H^{-1}C
    \right)  \\
    &=
    -\frac12
    \left\|
        H^{-1/2}CH^{-1/2}
    \right\|_F^2 .
\end{aligned}
\]
In general, we call this quantity the irreducible risk and denote
\[
    R_{\mathrm{irr}}
    :=
    R(W^\star)
    =
    -\frac12
    \left\|
        H^{-1/2}CH^{-1/2}
    \right\|_F^2 .
\]
Note that all additional error terms, including approximation error from sketching and excess error from finite-data optimization, are measured
relative to this irreducible baseline.

Additionally, we particularly bound the irreducible error as below. By Assumption~\mainref{ass:aligned-power-laws}, the full marginal covariance and cross-covariance are simultaneously diagonalizable:
\[
    H=\Lambda_z+\Lambda_\epsilon,
    \qquad
    C=\Lambda_z.
\]
Writing
\[
    \Lambda_z=
    \sum_{i\ge1}\lambda_{z,i}u_iu_i^\top,
    \qquad
    \Lambda_\epsilon=
    \sum_{i\ge1}\lambda_{\epsilon,i}u_iu_i^\top,
\]
we have
\[
    H=
    \sum_{i\ge1}
    (\lambda_{z,i}+\lambda_{\epsilon,i})u_iu_i^\top .
\]
Therefore,
\[
    H^{-1/2}CH^{-1/2}
    =
    \sum_{i\ge1}
    \frac{\lambda_{z,i}}{\lambda_{z,i}+\lambda_{\epsilon,i}}
    u_iu_i^\top .
\]
Hence the irreducible risk simplifies to
\[
\begin{aligned}
    R(W^\star)
    &=
    -\frac12
    \left\|
        H^{-1/2}CH^{-1/2}
    \right\|_F^2  \\
    &=
    -\frac12
    \sum_{i\ge 1}
    \left(
        \frac{\lambda_{z,i}}
        {\lambda_{z,i}+\lambda_{\epsilon,i}}
    \right)^2 .
\end{aligned}
\]
Equivalently, defining the population signal-to-marginal ratio
\[
    \kappa_i
    :=
    \frac{\lambda_{z,i}}
    {\lambda_{z,i}+\lambda_{\epsilon,i}},
\]
we obtain
\[
    R(W^\star)
    =
    -\frac12
    \sum_{i\ge 1}\kappa_i^2.
\]
Under Assumption~\ref{ass:aligned-power-laws}, if
\[
    \lambda_{z,i}\asymp i^{-a},
    \qquad
    \lambda_{\epsilon,i}\asymp i^{-b},
    \qquad
    a>b,
\]
then
\[
    \kappa_i
    =
    \frac{\lambda_{z,i}}
    {\lambda_{z,i}+\lambda_{\epsilon,i}}
    \asymp
    i^{-(a-b)}.
\]
Consequently,
\[
    R(W^\star)
    =
    -\frac12
    \sum_{i\ge 1}\kappa_i^2
    \asymp
    -
    \sum_{i\ge 1}i^{-2(a-b)}.
\]
In particular, the irreducible risk is finite whenever
\[
    2(a-b)>1,
    \qquad\text{equivalently}\qquad
    a-b>\frac12.
\]

\subsection{Proof of Proposition~\ref{prop:main-risk-decomposition}}
\label{subsec:proof-main-risk-decomposition}

We first split the risk into its full-population baseline, its sketched approximation gap, and its sketched excess risk:
\[
\begin{aligned}
    R_M(A_L)
    &=
    R(W^\star)
    +
    \left[
        R_M(A^\star)-R(W^\star)
    \right] \\
    &+
    \left[
        R_M(A_L)-R_M(A^\star)
    \right]  \\
    &=
    R_{\mathrm{irr}}
    +
    \operatorname{Approx}
    +
    \left[
        R_M(A_L)-R_M(A^\star)
    \right].
\end{aligned}
\]
Since \(A^\star\) minimizes the sketched population risk, we have
\[
    R_M(A_L)-R_M(A^\star)
    =
    \frac12
    \|A_L-A^\star\|_{\Sigma,\Sigma}^2.
\]
By Lemma~\ref{lem:gd-bias-var-cross},
\[
    R_M(A_L)-R_M(A^\star)
    =
    \operatorname{Bias}^{\mathrm{samp}}_L
    +
    \operatorname{Var}^{\mathrm{samp}}_L
    +
    \operatorname{Cross}^{\mathrm{samp}}_L,
\]
where
\[
    \operatorname{Bias}^{\mathrm{samp}}_L
    :=
    \frac12
    \left\|
        \widehat{\mathscr B}_L(A^\star)
    \right\|_{\Sigma,\Sigma}^2,
\]
\[
    \operatorname{Var}^{\mathrm{samp}}_L
    :=
    \frac12
    \left\|
        \widehat{\mathscr V}_L(\widehat E)
    \right\|_{\Sigma,\Sigma}^2,
\]
and
\[
    \operatorname{Cross}^{\mathrm{samp}}_L
    :=
    -
    \left\langle
        \widehat{\mathscr B}_L(A^\star),
        \widehat{\mathscr V}_L(\widehat E)
    \right\rangle_{\Sigma,\Sigma}.
\]
Taking expectation over the empirical sample gives
\[
    \mathbb E_{\mathcal D}[R_M(A_L)]
    =
    R_{\mathrm{irr}}
    +
    \operatorname{Approx}
    +
    \operatorname{Bias}_L
    +
    \operatorname{Var}_L
    +
    \operatorname{Cross}_L,
\]
with the definitions in Proposition~\ref{prop:main-risk-decomposition}.

It remains to bound the cross term. By Cauchy--Schwarz,
\[
\begin{aligned}
    & |\operatorname{Cross}_L|
    \le
    \mathbb E_{\mathcal D}
    \left[
        \left\|
            \widehat{\mathscr B}_L(A^\star)
        \right\|_{\Sigma,\Sigma}
        \left\|
            \widehat{\mathscr V}_L(\widehat E)
        \right\|_{\Sigma,\Sigma}
    \right]  \\
    &\le
    \left(
        \mathbb E_{\mathcal D}
        \left[
            \left\|
                \widehat{\mathscr B}_L(A^\star)
            \right\|_{\Sigma,\Sigma}^2
        \right]
    \right)^{1/2}
    \left(
        \mathbb E_{\mathcal D}
        \left[
            \left\|
                \widehat{\mathscr V}_L(\widehat E)
            \right\|_{\Sigma,\Sigma}^2
        \right]
    \right)^{1/2}  \\
    &=
    2\sqrt{\operatorname{Bias}_L\operatorname{Var}_L}.
\end{aligned}
\]
Using \(2\sqrt{ab}\le a+b\), we obtain
\[
    |\operatorname{Cross}_L|
    \le
    \operatorname{Bias}_L+\operatorname{Var}_L.
\]
Therefore,
\[
    \mathbb E_{\mathcal D}[R_M(A_L)]
    \le
    R_{\mathrm{irr}}
    +
    \operatorname{Approx}
    +
    2\operatorname{Bias}_L
    +
    2\operatorname{Var}_L,
\]
which proves the proposition.

\subsection{Proof of Theorem~\ref{thm:main-gd-scaling}}
\label{subsec:proof-main-gd-scaling}

By Proposition~\ref{prop:main-risk-decomposition},
\[
    \mathbb E_{\mathcal D}[R_M(A_L)]
    =
    R_{\mathrm{irr}}
    +
    \operatorname{Approx}
    +
    \operatorname{Bias}_L
    +
    \operatorname{Var}_L
    +
    \operatorname{Cross}_L.
\]
We substitute the specific estimates for the four non-baseline terms. Since
\(1\lesssim R\lesssim N/M\), Lemma~\ref{lem:covariance-event-concentration}
implies that, conditional on the fixed sketch, \(\mathcal E_{\mathrm{cov}}(R)\)
holds with probability at least \(1-\exp(-\Omega(M))\), and also implies
\(N\gtrsim M\), as required by the variance upper bound. The condition
\(R\lesssim M^{2b}\) is equivalent to the bias-unsaturated regime
\(R^{1/(2b)}\lesssim M\). We work on this covariance event.

First, since \(\frac12<\delta<b+\frac12\), Theorem~\ref{thm:approx-power-law}
gives
\[
    \operatorname{Approx}
    =
    \Theta\left(M^{1-2\delta}\right).
\]
Second, under the bias-unsaturated condition \(R^{1/(2b)}\lesssim M\),
Theorem~\ref{thm:gd-bias-specific} gives
\[
    \operatorname{Bias}_L
    =
    \Theta\left(R^{\frac{1-2\delta}{2b}}\right),
    \qquad
    R=L_{\mathrm{eff}}\gamma.
\]
Define
\[
    D_{\times}(R,M)
    :=
    \begin{cases}
    R^{1/b}\log(eR), & 1\le R^{1/b}\le M, \\
    R^{1/b}\left(1+\log\dfrac{M^2}{R^{1/b}}\right),
        & M<R^{1/b}<M^2, \\
    M^2, & R^{1/b}\ge M^2.
    \end{cases}
\]
Third, since the theorem assumes \(R\gtrsim1\),
Theorem~\ref{thm:gd-variance-specific} gives, under the tail nondegeneracy
condition for the variance lower bound,
\[
    \operatorname{Var}_L
    =
    \Theta\left(
    \frac{D_{\times}(R,M)}{N}
    \right).
\]
Finally, by Cauchy--Schwarz and the preceding bias and variance estimates,
\[
\begin{aligned}
    \operatorname{Cross}_L
    &=
    \mathcal O\left(
        \sqrt{\operatorname{Bias}_L\operatorname{Var}_L}
    \right) \\
    &=
    \mathcal O\left(
    \sqrt{
        R^{\frac{1-2\delta}{2b}}
        \cdot
        \frac{D_{\times}(R,M)}{N}
    }
    \right).
\end{aligned}
\]
Combining these displays proves
\[
\begin{aligned}
    \mathbb E_{\mathcal D}
    \left[
        R_M(A_L)
    \right]
    &=
    R_{\mathrm{irr}}
    +
    \Theta\left(M^{1-2\delta}\right)
    +
    \Theta\left(R^{\frac{1-2\delta}{2b}}\right) \\
    &\quad
    +
    \Theta\left(
    \frac{D_{\times}(R,M)}{N}
    \right) \\
    &\quad
    +
    \mathcal O\left(
    \sqrt{
        R^{\frac{1-2\delta}{2b}}
        \cdot
        \frac{D_{\times}(R,M)}{N}
    }
    \right),
\end{aligned}
\]
which is the claimed scaling law.

\subsection{Proof of Proposition~\ref{prop:compute-allocation}}
\label{subsec:proof-compute-allocation}

This result is a compute-allocation heuristic rather than a minimax optimality
theorem. Throughout the proof, \(\approx\), \(\asymp\), and the displayed power
laws suppress logarithmic factors from
\(L_{\mathrm{eff}}=\lfloor L/\log L\rfloor\), from the boundary logarithms in
\(D_{\times}(R,M)\), and from implementation-dependent constants in the compute
proxy. Since \(\gamma\asymp1\), we identify
\[
    \mathcal C
    \asymp
    LNM^2
    \approx
    RNM^2.
\]
Let
\[
    q:=2\delta-1.
\]
Theorem~\ref{thm:main-gd-scaling} gives the leading excess-risk proxy
\[
    \mathcal E(M,N,R)
    \asymp
    M^{-q}
    +
    R^{-q/(2b)}
    +
    \frac{D_{\times}(R,M)}{N}.
\]
The covariance-concentration condition in Theorem~\ref{thm:main-gd-scaling}
requires
\[
    R\lesssim \frac{N}{M},
    \qquad\text{or equivalently}\qquad
    N\gtrsim RM.
\]
The remaining horizon constraint is the bias-unsaturation condition
\[
    R\lesssim M^{2b},
    \qquad\text{equivalently}\qquad
    M\gtrsim R^{1/(2b)}.
\]

We first record the product-unsaturated boundary allocation. This is not the
unconstrained optimum of the proxy; it records the cost of additionally forcing
the product effective dimension to remain in its first, unsaturated branch. If
one insists on
\[
    R^{1/b}\lesssim M,
\]
the smallest admissible model size is \(M\asymp R^{1/b}\). Taking the covariance
constraint at equality, \(N\asymp RM\), gives
\[
    N\asymp R^{1+1/b},
    \qquad
    \mathcal C
    \approx
    R^2R^{3/b}
    =
    R^{(2b+3)/b}.
\]
Hence
\[
    R\asymp \mathcal C^{\frac{b}{2b+3}},
    \qquad
    M\asymp \mathcal C^{\frac{1}{2b+3}},
    \qquad
    N\asymp \mathcal C^{\frac{b+1}{2b+3}}.
\]
On this boundary, \(D_{\times}(R,M)\asymp R^{1/b}\), so the variance term is
\(R^{-1}\), which is lower order than the bias term \(R^{-q/(2b)}\) for
\(0<q<2b\).

For the balanced allocations, the approximation and bias terms impose
\[
    M^{-q}
    \asymp
    R^{-q/(2b)},
    \qquad
    M\asymp R^{1/(2b)}.
\]
This is exactly the smallest model size allowed by the bias-unsaturation
constraint. At this boundary, \(D_{\times}(R,M)\asymp R^{1/b}\) up to logarithmic
factors.

It remains to choose \(N\). The covariance constraint imposes the lower bound
\(N\gtrsim RM\asymp R^{1+1/(2b)}\). On the other hand, balancing the variance
term with the common approximation--bias scale \(R^{-q/(2b)}\) would require
\[
    N\asymp R^{q/(2b)+1/b}.
\]
Thus the leading sample allocation is
\[
    N\asymp
    \max\left\{R^{1+1/(2b)},\,R^{q/(2b)+1/b}\right\},
\]
and the comparison of these two powers gives the threshold
\(q=2b-1\).

If \(0<q<2b-1\), then taking the covariance constraint at equality,
\(N\asymp RM\), makes the variance term
\[
    \frac{D_{\times}(R,M)}{N}
    \asymp
    \frac{R^{1/b}}{R^{1+1/(2b)}}
    =
    R^{-1+1/(2b)},
\]
which is lower order than \(R^{-q/(2b)}\). The compute proxy becomes
\[
    \mathcal C
    \approx
    R^2M^3
    \asymp
    R^{(4b+3)/(2b)}.
\]
Solving for \(R,M,N\) gives
\[
    R\asymp \mathcal C^{\frac{2b}{4b+3}},
    \qquad
    M\asymp \mathcal C^{\frac{1}{4b+3}},
    \qquad
    N\asymp \mathcal C^{\frac{2b+1}{4b+3}}.
\]

If \(2b-1<q<2b\), the same covariance-saturated allocation would leave the
variance term leading. We therefore balance all three leading terms while keeping
\(M\asymp R^{1/(2b)}\):
\[
    R^{-q/(2b)}
    \asymp
    \frac{R^{1/b}}{N}.
\]
Thus
\[
    N\asymp R^{q/(2b)+1/b}.
\]
The compute proxy gives
\[
    \mathcal C
    \approx
    RNM^2
    \asymp
    R\cdot R^{q/(2b)+1/b}\cdot R^{1/b}
    =
    R^{(2b+4+q)/(2b)}.
\]
Consequently,
\[
    R\asymp \mathcal C^{\frac{2b}{2b+4+q}},
    \qquad
    M\asymp \mathcal C^{\frac{1}{2b+4+q}},
    \qquad
    N\asymp \mathcal C^{\frac{q+2}{2b+4+q}}.
\]
The condition \(q>2b-1\) is exactly what ensures
\(N\gtrsim RM\) for this variance-balanced allocation, since
\[
    \frac{N}{RM}
    \asymp
    R^{(q-(2b-1))/(2b)}.
\]
At the boundary
\(q=2b-1\), the covariance-limited and variance-balanced formulas coincide.
Finally, \(R=L_{\mathrm{eff}}\gamma\) and
\(L_{\mathrm{eff}}=\lfloor L/\log L\rfloor\), so suppressing floors and
logarithmic factors gives \(L/\log L\approx R/\gamma\). This completes the
claimed allocation rules.

\section{Approximation Error}
\label{app:approximation}

\subsection{Approximation error definition}
The approximation error is the gap between the best sketched model and the
best full model:
\[
\begin{aligned}
    \operatorname{Approx}
    &:=
    R_M(A^\star)-R(W^\star) \\
    &=
    \min_{A\in\mathbb R^{M\times M}} R_M(A)
    -
    \min_{W\in\mathbb R^{D\times D}} R(W).
\end{aligned}
\]
Since \(R_M(A)=R(S^\top A S)\), this is equivalently the gap between the
best risk achievable inside the sketched model class \(S^\top A S\) and the
best unrestricted full risk.
Define the full and sketched whitened signals by
\begin{align*}
    K&:=H^{-1/2}CH^{-1/2}, \\
    K_M&:=\Sigma^{-1/2}C_M\Sigma^{-1/2}.
\end{align*}
Then the approximation error admits the exact representation
\[
    \operatorname{Approx}
    =
    \frac12
    \left(
        \|K\|_F^2-
        \|K_M\|_F^2
    \right).
\]
Indeed, starting from the original definition and using the sketched normal
equation \(\Sigma A^\star\Sigma=C_M\),
\[
\begin{aligned}
    R_M(A^\star)
    &=-\langle A^\star,C_M\rangle
      +\frac12\operatorname{tr}((A^\star)^\top\Sigma A^\star\Sigma) \\
    &=-\langle A^\star,C_M\rangle
      +\frac12\langle A^\star,C_M\rangle \\
    &=-\frac12\langle A^\star,C_M\rangle \\
    &=-\frac12
      \operatorname{tr}\left[(\Sigma^{-1}C_M\Sigma^{-1})^\top C_M\right] \\
    &=-\frac12
      \left\|\Sigma^{-1/2}C_M\Sigma^{-1/2}\right\|_F^2
     =-\frac12\|K_M\|_F^2 .
\end{aligned}
\]
Similarly, the full normal equation \(HW^\star H=C\) gives
\[
\begin{aligned}
    R(W^\star)
    &=-\langle W^\star,C\rangle
      +\frac12\operatorname{tr}((W^\star)^\top H W^\star H) \\
    &=-\frac12\langle W^\star,C\rangle \\
    &=-\frac12
      \operatorname{tr}\left[(H^{-1}CH^{-1})^\top C\right] \\
    &=-\frac12
      \left\|H^{-1/2}CH^{-1/2}\right\|_F^2
     =-\frac12\|K\|_F^2 .
\end{aligned}
\]
Therefore
\[
    R_M(A^\star)-R(W^\star)
    =
    \frac12
    \left(
        \|K\|_F^2-
        \|K_M\|_F^2
    \right),
\]
which proves the claimed representation.

\subsection{Approximation scaling under spectral alignment}

\begin{theorem}[Approximation scaling under the spectral truncation sketch]
\label{thm:approx-power-law}
Suppose Assumptions~\mainref{ass:aligned-power-laws} and
\mainref{ass:contrastive-source} hold. Then
\[
    \operatorname{Approx}
    \asymp
    \sum_{i>M}
    \left(
        \frac{\lambda_{z,i}}
        {\lambda_{z,i}+\lambda_{\epsilon,i}}
    \right)^2
    \asymp
    M^{1-2\delta},
    \qquad
    \delta=a-b.
\]
Equivalently,
\[
    \operatorname{Approx}
    \asymp
    M^{1-2(a-b)}.
\]
\end{theorem}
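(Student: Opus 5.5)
Starting from the exact representation just established,
\[
    \operatorname{Approx}=\tfrac12\bigl(\|K\|_F^2-\|K_M\|_F^2\bigr),
\]
I will compute both Frobenius norms explicitly and then estimate the resulting tail sum.

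\textbf{The full signal.} By Assumption~\mainref{ass:aligned-power-laws}, $H$ and $C$ are simultaneously diagonal in $\{u_i\}$. Hence $K=\sum_{i\ge1}\kappa_i u_iu_i^\top$ and
\[
    \|K\|_F^2=\sum_{i\ge1}\kappa_i^2,
\]
as already shown in the irreducible-risk computation.

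\textbf{The sketched signal.} By Assumption~\mainref{ass:contrastive-source}, $\Sigma$ and $C_M$ are simultaneously diagonal in the orthonormal basis $\{v_i\}_{i=1}^M$. This gives $\Sigma^{-1/2}=\sum_i\mu_i^{-1/2}v_iv_i^\top$, so
\[
    K_M=\sum_{i=1}^M \mu_i^{-1/2}(\kappa_i\mu_i)\mu_i^{-1/2}\,v_iv_i^\top=\sum_{i=1}^M\kappa_i v_iv_i^\top ,
\]
and $\|K_M\|_F^2=\sum_{i\le M}\kappa_i^2$. The assumption identifies these $\kappa_i$ with the population ratios $\lambda_{z,i}/(\lambda_{z,i}+\lambda_{\epsilon,i})$ for the first $M$ indices, so the head sums in $\|K\|_F^2$ and $\|K_M\|_F^2$ cancel exactly. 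What remains is
\[
    \operatorname{Approx}=\tfrac12\sum_{i>M}\Bigl(\frac{\lambda_{z,i}}{\lambda_{z,i}+\lambda_{\epsilon,i}}\Bigr)^2 .
\]
This is the first $\asymp$ of the statement, in fact with constant exactly $\tfrac12$.

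\textbf{Tail estimate.} The power-law bounds give, for every $i$,
\[
    \frac{c_0 i^{-a}}{2c_1 i^{-b}}\le \kappa_i\le \frac{c_1 i^{-a}}{c_0 i^{-b}} .
\]
For the lower bound I use $\lambda_{z,i}+\lambda_{\epsilon,i}\le c_1(i^{-a}+i^{-b})\le 2c_1 i^{-b}$, which holds since $a>b$. Therefore $\kappa_i^2\asymp i^{-2\delta}$ uniformly in $i$, with $\delta=a-b>\tfrac12$. Comparing $\sum_{i>M}i^{-2\delta}$ with $\int_M^\infty x^{-2\delta}\,dx$ from above and with $\int_{M+1}^\infty x^{-2\delta}\,dx$ from below yields $\asymp M^{1-2\delta}/(2\delta-1)$. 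The bound $M+1\le 2M$ absorbs the shift into constants.

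\textbf{Main obstacle.} The only delicate point is the exact cancellation of the head terms. It relies on reading Assumption~\mainref{ass:contrastive-source} as saying the sketched $\kappa_i$ coincide with the population ratios for $i\le M$, rather than merely satisfying $\kappa_i\asymp i^{-\delta}$. If only the weaker comparability held, the difference of the head sums could be of order one and would swamp the tail.

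I will therefore make that identification explicit, since the assumption writes $\kappa_i$ equal to the ratio. As a supporting argument, I will note that $Su_i=0$ for $i>M$ means $S$ factors through the span of $u_1,\dots,u_M$. On that span $H$ and $C$ are diagonal, and the generalized eigenproblem $C_Mv=\kappa\,\Sigma v$ is invariant under the invertible change of coordinates. Hence the sketched generalized eigenvalues $\kappa_i$ are exactly the population ratios for $i\le M$, which confirms the cancellation.
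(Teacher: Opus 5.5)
Your proposal is correct and follows the paper's proof step for step. Both arguments use the exact representation $\tfrac12(\|K\|_F^2-\|K_M\|_F^2)$ and diagonalize $K$ and $K_M$, so the head sums cancel via $\kappa_i=\lambda_{z,i}/(\lambda_{z,i}+\lambda_{\epsilon,i})$; both then use the tail estimate $\sum_{i>M}i^{-2\delta}\asymp M^{1-2\delta}$, where you and the paper alike tacitly take $D=\infty$, or at least $D\ge 2M$. Your supporting remark that the cancellation already follows from $Su_i=0$ for $i>M$ together with full row rank is a correct strengthening the paper does not make, since then $\Sigma^{-1}C_M$ is similar to $\operatorname{diag}(\kappa_1,\dots,\kappa_M)$ and $\|K_M\|_F^2=\operatorname{tr}\bigl((\Sigma^{-1}C_M)^2\bigr)$ is similarity-invariant.
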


\begin{proof}
We begin with the original approximation definition proved above:
\[
    \operatorname{Approx}
    =
    R_M(A^\star)-R(W^\star)
    =
    \frac12
    \left(
        \|K\|_F^2-
        \|K_M\|_F^2
    \right).
\]
By Assumption~\mainref{ass:aligned-power-laws}, \(H\) and \(C\) are simultaneously diagonalizable in
the common population spectral basis \(\{u_i\}_{i=1}^D\), with
\[
    H=
    \sum_{i=1}^D
    (\lambda_{z,i}+\lambda_{\epsilon,i})u_iu_i^\top,
    \qquad
    C=
    \sum_{i=1}^D
    \lambda_{z,i}u_iu_i^\top .
\]
Therefore the whitened signal is also diagonal in this basis:
\[
    K
    =
    H^{-1/2}CH^{-1/2}
    =
    \sum_{i=1}^D
    \frac{\lambda_{z,i}}{\lambda_{z,i}+\lambda_{\epsilon,i}}
    u_iu_i^\top .
\]
Since \(a>b\), we have
\(\lambda_{z,i}+\lambda_{\epsilon,i}\asymp i^{-b}\), and hence
\[
    \frac{\lambda_{z,i}}{\lambda_{z,i}+\lambda_{\epsilon,i}}
    \asymp
    \frac{i^{-a}}{i^{-b}}
    =
    i^{-(a-b)}
    =
    i^{-\delta}.
\]

Assumption~\mainref{ass:contrastive-source} states directly that the sketched
covariance pair preserves the leading \(M\) population spectral directions and
their order:
\[
\begin{aligned}
    \Sigma
    &=
    \sum_{i=1}^M \mu_i v_iv_i^\top, \\
    C_M
    &=
    \sum_{i=1}^M \kappa_i\mu_i v_iv_i^\top,
    \qquad
    \kappa_i=
    \frac{\lambda_{z,i}}{\lambda_{z,i}+\lambda_{\epsilon,i}} .
\end{aligned}
\]
The same assumption also places every population eigendirection with order
larger than \(M\) in the null space of \(S\), so these directions do not appear
in the sketched covariance pair.
Therefore the sketched whitened signal is
\[
    K_M
    =
    \Sigma^{-1/2}C_M\Sigma^{-1/2}
    =
    \sum_{i=1}^M
    \frac{\lambda_{z,i}}{\lambda_{z,i}+\lambda_{\epsilon,i}}
    v_iv_i^\top
    =
    \sum_{i=1}^M \kappa_i v_iv_i^\top .
\]
Thus sketching removes exactly the whitened population tail, and
\[
    \operatorname{Approx}
    =
    \frac12
    \left(
        \|K\|_F^2-
        \|K_M\|_F^2
    \right)
    =
    \frac12
    \sum_{i>M}
    \left(
        \frac{\lambda_{z,i}}
        {\lambda_{z,i}+\lambda_{\epsilon,i}}
    \right)^2 .
\]
Finally, since \(\delta>1/2\),
\[
    \sum_{i>M}
    \left(
        \frac{\lambda_{z,i}}
        {\lambda_{z,i}+\lambda_{\epsilon,i}}
    \right)^2
    \asymp
    \sum_{i>M}i^{-2\delta}
    \asymp
    M^{1-2\delta}.
\]
This completes the proof.
\end{proof}
% ============================================================
% Excess-risk decomposition for empirical GD
% ============================================================

\section{Excess risk of empirical gradient descent}
\label{sec:contrastive-gd-excess}

We use the sketched risk \(R_M\), its minimizer \(A^\star\), the empirical
residual \(\widehat E:=\widehat C-\widehat\Sigma_xA^\star\widehat\Sigma_y\),
and the empirical GD filters \(\widehat{\mathscr B}_L\) and
\(\widehat{\mathscr V}_L\). The relevant quantities for this section are the
following bias, variance, and cross terms.

\begin{lemma}[Bias--variance--cross decomposition for empirical GD]
\label{lem:gd-bias-var-cross}
Let \(A_L\) be the output of empirical GD after \(L\) steps. Then
\[
    R_M(A_L)-R_M(A^\star)
    =
    \operatorname{Bias}^{\mathrm{samp}}_L
    +
    \operatorname{Var}^{\mathrm{samp}}_L
    +
    \operatorname{Cross}^{\mathrm{samp}}_L,
\]
where
\[
    \operatorname{Bias}^{\mathrm{samp}}_L
    :=
    \frac12
    \left\|
        \widehat{\mathscr B}_{L}(A^\star)
    \right\|_{\Sigma,\Sigma}^2,
\]
\[
    \operatorname{Var}^{\mathrm{samp}}_L
    :=
    \frac12
    \left\|
        \widehat{\mathscr V}_{L}(\widehat E)
    \right\|_{\Sigma,\Sigma}^2,
\]
and
\[
    \operatorname{Cross}^{\mathrm{samp}}_L
    :=
    -
    \left\langle
        \widehat{\mathscr B}_{L}(A^\star),
        \widehat{\mathscr V}_{L}(\widehat E)
    \right\rangle_{\Sigma,\Sigma}.
\]
\end{lemma}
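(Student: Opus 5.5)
The argument is an unrolling of the GD recursion around \(A^\star\), followed by expanding the quadratic excess risk. Set \(\Delta_t:=A_t-A^\star\) and write the empirical gradient in terms of \(\Delta\):
\[
    \widehat\Sigma_xA_{t-1}\widehat\Sigma_y-\widehat C
    =
    \widehat{\mathscr H}(\Delta_{t-1})+\widehat\Sigma_xA^\star\widehat\Sigma_y-\widehat C
    =
    \widehat{\mathscr H}(\Delta_{t-1})-\widehat E .
\]
This gives the affine recursion
\[
    \Delta_t=(I-\gamma_t\widehat{\mathscr H})(\Delta_{t-1})+\gamma_t\widehat E,
    \qquad
    \Delta_0=-A^\star,
\]
which I would unroll by induction on \(L\). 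Each operator \(I-\gamma_t\widehat{\mathscr H}\) is linear on \(\mathbb R^{M\times M}\), so it distributes over sums. With the conventions \(\widehat{\mathscr B}_{r:s}=\prod_{t=r}^s(I-\gamma_t\widehat{\mathscr H})\) and \(\widehat{\mathscr B}_{L+1:L}=I\), we should get
\[
    \Delta_L
    =
    -\widehat{\mathscr B}_{1:L}(A^\star)+\sum_{t=1}^L\gamma_t\widehat{\mathscr B}_{t+1:L}(\widehat E)
    =
    -\widehat{\mathscr B}_L(A^\star)+\widehat{\mathscr V}_L(\widehat E).
\]
The inductive step is the identity
\[
    (I-\gamma_{L+1}\widehat{\mathscr H})\widehat{\mathscr B}_{t+1:L}=\widehat{\mathscr B}_{t+1:L+1},
\]
where the new factor is applied on the left (last). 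This fixes the ordering convention for the product. All the operators are polynomials in the same map \(\widehat{\mathscr H}\), so they commute and the ordering is harmless anyway.

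Next, I would use the exact quadratic identity \(R_M(A)-R_M(A^\star)=\tfrac12\|A-A^\star\|_{\Sigma,\Sigma}^2\), already noted in the main text. It follows from expanding \(R_M(A^\star+\Delta)\) and using the normal equation \(\Sigma A^\star\Sigma=C_M\), which makes the linear term vanish. The Hessian quadratic form equals \(\operatorname{tr}(\Delta^\top\Sigma\Delta\Sigma)\).

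Substituting \(\Delta_L=-B+V\), with \(B=\widehat{\mathscr B}_L(A^\star)\) and \(V=\widehat{\mathscr V}_L(\widehat E)\), and expanding the bilinear form \(\langle\cdot,\cdot\rangle_{\Sigma,\Sigma}\) gives
\[
    \tfrac12\|B\|^2_{\Sigma,\Sigma}+\tfrac12\|V\|^2_{\Sigma,\Sigma}-\langle B,V\rangle_{\Sigma,\Sigma}.
\]
This is exactly \(\operatorname{Bias}^{\mathrm{samp}}_L+\operatorname{Var}^{\mathrm{samp}}_L+\operatorname{Cross}^{\mathrm{samp}}_L\). For the expansion I need \(\langle\cdot,\cdot\rangle_{\Sigma,\Sigma}\) to be symmetric, i.e. \(\operatorname{tr}(B^\top\Sigma V\Sigma)=\operatorname{tr}(V^\top\Sigma B\Sigma)\). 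This holds by transposition and the symmetry of \(\Sigma\).

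No step is a real obstacle; the proof is purely algebraic and uses no probabilistic input. The only point requiring care is bookkeeping in the unrolled recursion: the index ranges \(t+1:L\), the empty-product convention at \(t=L\), and the sign of the bias term coming from \(\Delta_0=-A^\star\).
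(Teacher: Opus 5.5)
Your proposal is correct and matches the paper's proof step for step: you rewrite the GD update as the affine recursion $\Delta_t=(I-\gamma_t\widehat{\mathscr H})(\Delta_{t-1})+\gamma_t\widehat E$ with $\Delta_0=-A^\star$, and unroll it to $\Delta_L=-\widehat{\mathscr B}_L(A^\star)+\widehat{\mathscr V}_L(\widehat E)$. You then expand $\tfrac12\|\Delta_L\|_{\Sigma,\Sigma}^2$. You also correctly justify two points the paper takes for granted: the excess-risk identity via the normal equation, and the symmetry of $\langle\cdot,\cdot\rangle_{\Sigma,\Sigma}$.
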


\begin{proof}
Let
\[
    \Delta_t:=A_t-A^\star.
\]
Using the GD recursion and substituting \(A_{t-1}=\Delta_{t-1}+A^\star\), we obtain
\[
    \Delta_t
    =
    \Delta_{t-1}
    -
    \gamma_t
    \widehat\Sigma_x\Delta_{t-1}\widehat\Sigma_y
    +
    \gamma_t
    \left(
        \widehat C-\widehat\Sigma_xA^\star\widehat\Sigma_y
    \right).
\]
By the definition of \(\widehat{\mathscr H}\) and \(\widehat E\), this becomes
\[
    \Delta_t
    =
    (I-\gamma_t\widehat{\mathscr H})(\Delta_{t-1})
    +
    \gamma_t\widehat E.
\]
Iterating the recursion from \(t=1\) to \(L\), and using
\[
    \Delta_0=A_0-A^\star=-A^\star,
\]
gives
\[
    \Delta_L
    =
    -\widehat{\mathscr B}_L(A^\star)
    +
    \widehat{\mathscr V}_L(\widehat E).
\]
Therefore,
\[
    R_M(A_L)-R_M(A^\star)
    =
    \frac12\|\Delta_L\|_{\Sigma,\Sigma}^2.
\]
Substituting the previous display and expanding the square gives
\[
    \frac12
    \left\|
        -\widehat{\mathscr B}_L(A^\star)
        +
        \widehat{\mathscr V}_L(\widehat E)
    \right\|_{\Sigma,\Sigma}^2
\]
\[
    =
    \frac12
    \left\|
        \widehat{\mathscr B}_L(A^\star)
    \right\|_{\Sigma,\Sigma}^2
    +
    \frac12
    \left\|
        \widehat{\mathscr V}_L(\widehat E)
    \right\|_{\Sigma,\Sigma}^2
    -
    \left\langle
        \widehat{\mathscr B}_L(A^\star),
        \widehat{\mathscr V}_L(\widehat E)
    \right\rangle_{\Sigma,\Sigma}.
\]
This proves the desired decomposition.
\end{proof}

\begin{remark}[The cross term is harmless for upper bounds]
\label{rem:cross-term-upper}
From the GD recursion, we have
\[
    A_L-A^\star
    =
    -
    \widehat{\mathscr B}_L(A^\star)
    +
    \widehat{\mathscr V}_L(\widehat E).
\]
Therefore, expanding the squared population norm gives
\[
\begin{aligned}
    \frac12
    \|A_L-A^\star\|_{\Sigma,\Sigma}^2
    &=
    \frac12
    \left\|
        \widehat{\mathscr B}_L(A^\star)
    \right\|_{\Sigma,\Sigma}^2
    +
    \frac12
    \left\|
        \widehat{\mathscr V}_L(\widehat E)
    \right\|_{\Sigma,\Sigma}^2  \\
    &\quad
    -
    \left\langle
        \widehat{\mathscr B}_L(A^\star),
        \widehat{\mathscr V}_L(\widehat E)
    \right\rangle_{\Sigma,\Sigma}.
\end{aligned}
\]
The last term is the cross term. Although it does not necessarily vanish, it is harmless for upper bounds. Indeed, by Cauchy--Schwarz,
\[
\begin{aligned}
    \left|
    \left\langle
        \widehat{\mathscr B}_L(A^\star),
        \widehat{\mathscr V}_L(\widehat E)
    \right\rangle_{\Sigma,\Sigma}
    \right|
    &\le
    \left\|
        \widehat{\mathscr B}_L(A^\star)
    \right\|_{\Sigma,\Sigma}
    \left\|
        \widehat{\mathscr V}_L(\widehat E)
    \right\|_{\Sigma,\Sigma}  \\
    &\le
    \frac12
    \left\|
        \widehat{\mathscr B}_L(A^\star)
    \right\|_{\Sigma,\Sigma}^2
    +
    \frac12
    \left\|
        \widehat{\mathscr V}_L(\widehat E)
    \right\|_{\Sigma,\Sigma}^2 .
\end{aligned}
\]
Consequently,
\[
    \frac12
    \|A_L-A^\star\|_{\Sigma,\Sigma}^2
    \le
    \left\|
        \widehat{\mathscr B}_L(A^\star)
    \right\|_{\Sigma,\Sigma}^2
    +
    \left\|
        \widehat{\mathscr V}_L(\widehat E)
    \right\|_{\Sigma,\Sigma}^2 .
\]
Taking expectation, we obtain
\[
    \mathbb E
    \left[
        \frac12
        \|A_L-A^\star\|_{\Sigma,\Sigma}^2
    \right]
    \lesssim
    \operatorname{Bias}_L+\operatorname{Var}_L,
\]
where \(\operatorname{Bias}_L\) and \(\operatorname{Var}_L\) are the expected
quantities in Proposition~\mainref{prop:main-risk-decomposition}. Therefore, for the
purpose of proving an upper bound on the excess risk, it is sufficient to
control the bias and variance terms separately. The cross term can be absorbed
into their sum and does not affect the upper-bound scaling.
\end{remark}
\section{Bias of Gradient Descent}
\label{app:gd-bias}

% ============================================================
% General upper bound for the GD bias
% ============================================================

\subsection{A general upper bound for the GD bias}
\label{subsec:gd-bias-upper}

The GD-bias term is
\[
    \operatorname{Bias}_L
    :=
    \frac12
    \left\|
        \widehat{\mathscr B}_L(A^\star)
    \right\|_{\Sigma,\Sigma}^2.
\]

For \(k\in\{1,\ldots,M\}\), define the population spectral projector
\[
    P_k:=\sum_{i\le k}v_iv_i^\top.
\]

\begin{lemma}[General GD-bias upper bound]
\label{lem:gd-bias-upper-general}
Suppose Assumptions~\mainref{ass:gd-schedule} and
\mainref{ass:contrastive-source} hold, and suppose that
\(\mathcal E_{\mathrm{cov}}(R)\) occurs and \(\|\Sigma\|\lesssim1\). Then, for
every \(1\le k\le M\),
\[
    \operatorname{Bias}_L
    \lesssim
    \frac{1}{L_{\mathrm{eff}}\gamma}
    \sum_{i\le k}\frac{\kappa_i^2}{\mu_i^2}
    +
    \sum_{i>k}\kappa_i^2 .
\]
Before substituting the source condition,
\[
    \operatorname{Bias}_L
    \lesssim
    \frac{1}{L_{\mathrm{eff}}\gamma}
    \|P_kA^\star P_k\|_F^2
    +
    \|A^\star-P_kA^\star P_k\|_{\Sigma,\Sigma}^2.
\]
\end{lemma}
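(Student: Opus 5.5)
Split the target into a learnable head and a tail, and treat each with a different property of the empirical filter. Write
\[
    A^\star=P_kA^\star P_k+(A^\star-P_kA^\star P_k)=:A_{\mathrm{head}}+A_{\mathrm{tail}}.
\]
By the triangle inequality and \((u+v)^2\le 2u^2+2v^2\),
\[
    \operatorname{Bias}_L\le\|\widehat{\mathscr B}_L(A_{\mathrm{head}})\|_{\Sigma,\Sigma}^2+\|\widehat{\mathscr B}_L(A_{\mathrm{tail}})\|_{\Sigma,\Sigma}^2 .
\]
The head term will supply the \(1/(L_{\mathrm{eff}}\gamma)\) factor through the smoothing property of GD. The tail term only needs the filter to be a contraction in the \(\Sigma,\Sigma\) geometry. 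Substituting the source condition afterwards is routine. Under Assumption~\ref{ass:contrastive-source}, \(A^\star=\sum_i(\kappa_i/\mu_i)v_iv_i^\top\). Hence \(\|P_kA^\star P_k\|_F^2=\sum_{i\le k}\kappa_i^2/\mu_i^2\), and \(\|A^\star-P_kA^\star P_k\|_{\Sigma,\Sigma}^2=\sum_{i>k}\mu_i^2\kappa_i^2/\mu_i^2=\sum_{i>k}\kappa_i^2\).

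\emph{Reduction to empirical geometry.} The operator \(\widehat{\mathscr H}(B)=\widehat\Sigma_xB\widehat\Sigma_y\) is a Kronecker product \(\widehat\Sigma_y\otimes\widehat\Sigma_x\) acting on \(\operatorname{vec}(B)\). It is self-adjoint and PSD under the Frobenius inner product, and \(\langle B,\widehat{\mathscr H}B\rangle=\|B\|^2_{\widehat\Sigma_x,\widehat\Sigma_y}\). On \(\mathcal E_{\mathrm{cov}}(R)\) we have \((1-c_{\mathrm{rel}})\Sigma\preceq\widehat\Sigma_\sharp\preceq(1+c_{\mathrm{rel}})\Sigma\). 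Kronecker products of PSD orderings are preserved, so \(\|B\|_{\Sigma,\Sigma}^2\asymp\langle B,\widehat{\mathscr H}B\rangle\) for all \(B\). Also \(\|\widehat{\mathscr H}\|\le(1+c_{\mathrm{rel}})^2\|\Sigma\|^2\lesssim1\). With \(\gamma\le c_\gamma\) small, each factor \(I-\gamma_t\widehat{\mathscr H}\) therefore has spectrum in \([0,1]\). All factors commute, being polynomials in \(\widehat{\mathscr H}\), so spectral calculus applies to the product.

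\emph{Tail term.} Since \(\widehat{\mathscr B}_L\) is a commuting product of contractions in \([0,1]\) that commutes with \(\widehat{\mathscr H}\), we get \(\langle\widehat{\mathscr B}_LB,\widehat{\mathscr H}\widehat{\mathscr B}_LB\rangle\le\langle B,\widehat{\mathscr H}B\rangle\). Converting both sides with the norm equivalence gives \(\|\widehat{\mathscr B}_L(A_{\mathrm{tail}})\|_{\Sigma,\Sigma}^2\lesssim\|A_{\mathrm{tail}}\|_{\Sigma,\Sigma}^2\).

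\emph{Head term.} Again by the norm equivalence, \(\|\widehat{\mathscr B}_LB\|_{\Sigma,\Sigma}^2\lesssim\langle B,\widehat{\mathscr H}\widehat{\mathscr B}_L^2B\rangle\le\sup_{x\in[0,\|\widehat{\mathscr H}\|]}x\prod_t(1-\gamma_tx)^2\cdot\|B\|_F^2\). The first \(L_{\mathrm{eff}}\) steps all use stepsize \(\gamma\), and the remaining factors lie in \([0,1]\). So the scalar function is at most \(x(1-\gamma x)^{2L_{\mathrm{eff}}}\le x e^{-2\gamma L_{\mathrm{eff}}x}\lesssim 1/(L_{\mathrm{eff}}\gamma)\). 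This yields \(\|\widehat{\mathscr B}_L(A_{\mathrm{head}})\|_{\Sigma,\Sigma}^2\lesssim\|P_kA^\star P_k\|_F^2/(L_{\mathrm{eff}}\gamma)\).

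Combining the head and tail bounds gives both displayed forms of the lemma. I expect the only real obstacle to be the operator-level comparison between the empirical Hessian \(\widehat{\mathscr H}\) and \(\Sigma\otimes\Sigma\) under the mismatched pair \(\widehat\Sigma_x\neq\widehat\Sigma_y\). It must be checked that \(\widehat\Sigma_x\otimes\widehat\Sigma_y\) is sandwiched between constant multiples of \(\Sigma\otimes\Sigma\). This follows from \(A_1\preceq A_2,\ B_1\preceq B_2\Rightarrow A_1\otimes B_1\preceq A_2\otimes B_2\) for PSD matrices, applied in two steps. The small constant \(c_{\mathrm{rel}}\) then suffices; the \(R^{-1/2}\) rate in the event is not needed here.
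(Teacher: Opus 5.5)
Your proposal is correct and follows essentially the same route as the paper. The shared steps are the head/tail split of $A^\star$, and the two-sided comparison of the $\widehat\Sigma_x,\widehat\Sigma_y$ and $\Sigma,\Sigma$ norms on $\mathcal E_{\mathrm{cov}}(R)$; the paper proves this comparison by the same monotonicity argument, phrased as a trace inequality rather than a Kronecker ordering. The paper also uses the scalar bound $s\psi_L(s)^2\lesssim 1/(L_{\mathrm{eff}}\gamma)$ for the head and contraction in the empirical geometry for the tail, and its simultaneous diagonalization $U_x^\top BU_y$ is exactly your joint spectral calculus in $\widehat{\mathscr H}$. One trivial correction: with $\ell=\lfloor t/L_{\mathrm{eff}}\rfloor$ only the first $L_{\mathrm{eff}}-1$ steps use stepsize exactly $\gamma$, which changes nothing since $L_{\mathrm{eff}}\gtrsim 1/c_\gamma$ is large.
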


\begin{proof}
Define the population head and tail parts of \(A^\star\) by
\[
    A^\star_{\mathrm h}:=P_kA^\star P_k,
    \qquad
    A^\star_{\mathrm t}:=A^\star-P_kA^\star P_k.
\]
By linearity of \(\widehat{\mathscr B}_L\),
\[
    \widehat{\mathscr B}_L(A^\star)
    =
    \widehat{\mathscr B}_L(A^\star_{\mathrm h})
    +
    \widehat{\mathscr B}_L(A^\star_{\mathrm t}).
\]
Using \(\|B_1+B_2\|_{\Sigma,\Sigma}^2
\le 2\|B_1\|_{\Sigma,\Sigma}^2
+2\|B_2\|_{\Sigma,\Sigma}^2\), we get
\[
    \operatorname{Bias}_L
    \lesssim
    \left\|
        \widehat{\mathscr B}_L(A^\star_{\mathrm h})
    \right\|_{\Sigma,\Sigma}^2
    +
    \left\|
        \widehat{\mathscr B}_L(A^\star_{\mathrm t})
    \right\|_{\Sigma,\Sigma}^2.
\]

We first bound the head term. By Lemma~\ref{lem:covariance-event-norm-equivalence},
\[
    \left\|
        \widehat{\mathscr B}_L(A^\star_{\mathrm h})
    \right\|_{\Sigma,\Sigma}^2
    \lesssim
    \left\|
        \widehat{\mathscr B}_L(A^\star_{\mathrm h})
    \right\|_{\widehat\Sigma_x,\widehat\Sigma_y}^2.
\]
By Lemma~\ref{lem:empirical-product-filter},
\[
    \left\|
        \widehat{\mathscr B}_L(A^\star_{\mathrm h})
    \right\|_{\widehat\Sigma_x,\widehat\Sigma_y}^2
    \lesssim
    \frac{1}{L_{\mathrm{eff}}\gamma}
    \|A^\star_{\mathrm h}\|_F^2.
\]
Therefore,
\[
    \left\|
        \widehat{\mathscr B}_L(A^\star_{\mathrm h})
    \right\|_{\Sigma,\Sigma}^2
    \lesssim
    \frac{1}{L_{\mathrm{eff}}\gamma}
    \|P_kA^\star P_k\|_F^2.
\]

We next bound the tail term. Again by
Lemma~\ref{lem:covariance-event-norm-equivalence},
\[
    \left\|
        \widehat{\mathscr B}_L(A^\star_{\mathrm t})
    \right\|_{\Sigma,\Sigma}^2
    \lesssim
    \left\|
        \widehat{\mathscr B}_L(A^\star_{\mathrm t})
    \right\|_{\widehat\Sigma_x,\widehat\Sigma_y}^2.
\]
By the contraction part of Lemma~\ref{lem:empirical-product-filter},
\[
    \left\|
        \widehat{\mathscr B}_L(A^\star_{\mathrm t})
    \right\|_{\widehat\Sigma_x,\widehat\Sigma_y}^2
    \le
    \|A^\star_{\mathrm t}\|_{\widehat\Sigma_x,\widehat\Sigma_y}^2.
\]
Applying Lemma~\ref{lem:covariance-event-norm-equivalence} in the other
direction gives
\[
    \|A^\star_{\mathrm t}\|_{\widehat\Sigma_x,\widehat\Sigma_y}^2
    \lesssim
    \|A^\star_{\mathrm t}\|_{\Sigma,\Sigma}^2.
\]
Thus
\[
    \left\|
        \widehat{\mathscr B}_L(A^\star_{\mathrm t})
    \right\|_{\Sigma,\Sigma}^2
    \lesssim
    \|A^\star-P_kA^\star P_k\|_{\Sigma,\Sigma}^2.
\]

Combining the head and tail bounds yields
\[
    \operatorname{Bias}_L
    \lesssim
    \frac{1}{L_{\mathrm{eff}}\gamma}
    \|P_kA^\star P_k\|_F^2
    +
    \|A^\star-P_kA^\star P_k\|_{\Sigma,\Sigma}^2.
\]

It remains to use Assumption~\mainref{ass:contrastive-source}. Under this source
condition,
\[
    C_M
    =
    \sum_{i=1}^M
    \kappa_i\mu_i v_iv_i^\top.
\]
Since
\[
    A^\star
    =
    \Sigma^{-1}C_M\Sigma^{-1},
\]
we have
\[
    A^\star
    =
    \sum_{i=1}^M
    \frac{\kappa_i}{\mu_i}v_iv_i^\top.
\]
Therefore,
\[
    P_kA^\star P_k
    =
    \sum_{i\le k}
    \frac{\kappa_i}{\mu_i}v_iv_i^\top,
\]
and hence
\[
    \|P_kA^\star P_k\|_F^2
    =
    \sum_{i\le k}
    \frac{\kappa_i^2}{\mu_i^2}.
\]
Similarly,
\[
    A^\star-P_kA^\star P_k
    =
    \sum_{i>k}
    \frac{\kappa_i}{\mu_i}v_iv_i^\top.
\]
Thus
\[
    \|A^\star-P_kA^\star P_k\|_{\Sigma,\Sigma}^2
    =
    \sum_{i>k}
    \mu_i^2
    \left(
        \frac{\kappa_i}{\mu_i}
    \right)^2
    =
    \sum_{i>k}\kappa_i^2.
\]
Substituting these two identities into the general head--tail bound gives
\[
    \operatorname{Bias}_L
    \lesssim
    \frac{1}{L_{\mathrm{eff}}\gamma}
    \sum_{i\le k}\frac{\kappa_i^2}{\mu_i^2}
    +
    \sum_{i>k}\kappa_i^2.
\]
This completes the proof.
\end{proof}
\subsection{A general lower bound for the GD bias}
\label{subsec:gd-bias-lower}

The GD-bias term is
\[
    \operatorname{Bias}_L
    :=
    \frac12
    \left\|
        \widehat{\mathscr B}_L(A^\star)
    \right\|_{\Sigma,\Sigma}^2 .
\]
For \(k\in\{1,\ldots,M\}\), define
\[
    P_k:=\sum_{i\le k}v_iv_i^\top,
    \qquad
    Q_k:=I-P_k.
\]
We also define the diagonal tail subspace
\[
    \mathcal T_k
    :=
    \operatorname{span}\{v_iv_i^\top:i>k\}.
\]
Let \(\Pi_{\mathcal T_k}\) denote the orthogonal projection onto
\(\mathcal T_k\) under the \(\Sigma,\Sigma\)-inner product.

\begin{lemma}[General GD-bias lower bound]
\label{lem:gd-bias-lower-general}
Suppose Assumptions~\mainref{ass:gd-schedule} and
\mainref{ass:contrastive-source} hold, and suppose that
\(\mathcal E_{\mathrm{cov}}(R)\) occurs and \(\|\Sigma\|\lesssim1\). Let
\[
    \rho:=(L_{\mathrm{eff}}\gamma)^{-1/2}.
\]
Choose \(k\) such that
\[
    \mu_{k+1}\le c_0\rho
\]
for a sufficiently small absolute constant \(c_0>0\). Define
\[
    K_{\mathrm h}:=
    \sum_{i\le k}\kappa_i v_iv_i^\top,
    \qquad
    K_{\mathrm t}:=
    \sum_{i>k}\kappa_i v_iv_i^\top.
\]
Assume that
\[
    c_{\mathrm{rel}}\|K_{\mathrm t}\|_F
    +
    c_{\mathrm{rel}}^2\rho^2\|K_{\mathrm h}\|_F
    \le
    \eta\|K_{\mathrm t}\|_F
\]
for a sufficiently small absolute constant \(\eta>0\). Then
\[
    \operatorname{Bias}_L
    \gtrsim
    \sum_{i>k}\kappa_i^2 .
\]
\end{lemma}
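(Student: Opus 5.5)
The plan is to pass to whitened coordinates, where the \(\Sigma,\Sigma\)-norm becomes the Frobenius norm and \(A^\star\) becomes the sketched whitened signal. For a matrix \(B\) set \(X:=\Sigma^{1/2}B\Sigma^{1/2}\), so that \(\|B\|_{\Sigma,\Sigma}=\|X\|_F\). Under Assumption~\mainref{ass:contrastive-source} the initial condition is
\[
X_0=\Sigma^{1/2}A^\star\Sigma^{1/2}=K_M=K_{\mathrm h}+K_{\mathrm t}.
\]
Write \(\widehat\Sigma_\sharp=\Sigma^{1/2}(I+\Delta_\sharp)\Sigma^{1/2}\) for \(\sharp\in\{x,y\}\). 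On \(\mathcal E_{\mathrm{cov}}(R)\) this gives \(\|\Delta_\sharp\|\le c_{\mathrm{rel}}\rho\). One step of \(\widehat{\mathscr B}\) then reads
\[
X\mapsto X-\gamma_t\,\Sigma(I+\Delta_x)\,X\,(I+\Delta_y)\Sigma .
\]
I will compare this with the population step \(X\mapsto X-\gamma_t\Sigma X\Sigma\), whose product filter I denote \(\mathscr F_L\). I lower bound the bias by its projection onto the diagonal tail:
\[
\operatorname{Bias}_L\ge\tfrac12\|\Pi_{\mathcal T_k}\widehat{\mathscr B}_L(A^\star)\|_{\Sigma,\Sigma}^2 .
\]

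\textbf{Step 1 (population filter on the tail).} \(\mathscr F_L\) is diagonal in the basis \(v_iv_j^\top\), and it multiplies \(v_iv_i^\top\) by \(\prod_t(1-\gamma_t\mu_i^2)\). The geometric schedule gives \(\sum_t\gamma_t\lesssim L_{\mathrm{eff}}\gamma=\rho^{-2}\). For \(i>k\) we also have \(\mu_i\le\mu_{k+1}\le c_0\rho\). Hence each factor satisfies \(\prod_t(1-\gamma_t\mu_i^2)\ge 1-\sum_t\gamma_t\mu_i^2\ge 1-O(c_0^2)\). Moreover, \(\mathscr F_L\) maps \(K_{\mathrm h}\) into the span of \(\{v_iv_i^\top:i\le k\}\), which is orthogonal to \(\mathcal T_k\). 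Therefore
\[
\|\Pi_{\mathcal T_k}\mathscr F_L(K_M)\|_F\ge(1-O(c_0^2))\|K_{\mathrm t}\|_F .
\]

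\textbf{Step 2 (perturbation via Duhamel).} Write the difference of the two products as a telescoping sum:
\[
\widehat{\mathscr B}_L-\mathscr F_L=\sum_t \widehat{\mathscr B}_{t+1:L}\,(\widehat{\mathscr G}_t-\mathscr G_t)\,\mathscr F_{1:t-1},
\]
where \(\mathscr G_t,\widehat{\mathscr G}_t\) are the single-step maps. The one-step difference is \(\gamma_t\) times \(\Sigma\Delta_xX\Sigma+\Sigma X\Delta_y\Sigma+\Sigma\Delta_xX\Delta_y\Sigma\). The empirical filters are contractions in whitened Frobenius norm on the event, by Lemma~\ref{lem:empirical-product-filter} together with Lemma~\ref{lem:covariance-event-norm-equivalence}. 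I split \(X\) into its tail and head parts and treat them separately.

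For the tail input \(K_{\mathrm t}\), each factor \(\Sigma\) adjacent to the tail carries a factor \(\mu_i\le c_0\rho\). Summing \(\gamma_t\) over the horizon then gives a contribution \(\lesssim c_{\mathrm{rel}}\|K_{\mathrm t}\|_F\). For the head input, the first-order terms \(\Sigma\Delta X_{\mathrm h}\Sigma\) land in off-diagonal blocks (head\(\times\)tail); I need to check that their projection onto the diagonal tail \(\mathcal T_k\) is negligible. Only the second-order term \(\Sigma\Delta_xX_{\mathrm h}\Delta_y\Sigma\) reaches the tail diagonal. I would exploit the smoothing \(\sum_t\gamma_t\|\Sigma\mathscr F_{1:t-1}(K_{\mathrm h})\Sigma\|\), using the scalar filter estimates, to bound this leakage by \(\lesssim c_{\mathrm{rel}}^2\rho^2\|K_{\mathrm h}\|_F\) up to constants.

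\textbf{Step 3 (combine).} The hypothesis \(c_{\mathrm{rel}}\|K_{\mathrm t}\|_F+c_{\mathrm{rel}}^2\rho^2\|K_{\mathrm h}\|_F\le\eta\|K_{\mathrm t}\|_F\) now gives
\[
\|\Pi_{\mathcal T_k}\widehat{\mathscr B}_L(A^\star)\|_{\Sigma,\Sigma}\ge(1-O(c_0^2)-O(\eta))\|K_{\mathrm t}\|_F .
\]
Squaring yields \(\operatorname{Bias}_L\gtrsim\sum_{i>k}\kappa_i^2\).

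The main obstacle is Step 2 for the head-to-tail leakage. I must show that the off-diagonal first-order terms do not project onto \(\mathcal T_k\), and that the accumulated second-order leakage over the whole horizon stays at the size \(c_{\mathrm{rel}}^2\rho^2\|K_{\mathrm h}\|_F\) rather than picking up an extra factor of the horizon \(\rho^{-2}\). If the direct Duhamel bound fails, the fallback is a block (head/tail) decomposition of the recursion: treat the tail-diagonal block as a scalar-type recursion driven by a small forcing, and bound that forcing using the decay of the head block.
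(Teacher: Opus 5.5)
Your route is the paper's route: whiten, lower-bound $\operatorname{Bias}_L$ by the projection onto the diagonal tail $\mathcal T_k$, and use that the population filter keeps a constant fraction of $K_{\mathrm t}$ and never maps $K_{\mathrm h}$ into $\mathcal T_k$. You then control the tail perturbation by telescoping (one factor $\mu_{k+1}\le c_0\rho$ per step against $\sum_t\gamma_t\lesssim\rho^{-2}$, giving $c_{\mathrm{rel}}\|K_{\mathrm t}\|_F$) and close with the smallness hypothesis. Those steps are fine.

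The head-leakage step, however, is wrong as written. You assert that only the second-order term $\Sigma\Delta_xX_{\mathrm h}\Delta_y\Sigma$ reaches $\mathcal T_k$. But in your Duhamel identity the operator to the left of each one-step difference is the \emph{empirical} filter $\widehat{\mathscr B}_{t+1:L}$. A first-order term with tail row and head column, $Q_k\Sigma\Delta_x\mathscr F_{1:t-1}(K_{\mathrm h})\Sigma P_k$, is therefore not annihilated by $\Pi_{\mathcal T_k}\widehat{\mathscr B}_{t+1:L}$. A later first-order hit $\Sigma Y\Delta_y\Sigma$ at some $s>t$ moves its column index into the tail (and symmetrically with the roles of $x,y$ swapped). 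So you cannot prove that these terms fail to project onto $\mathcal T_k$. Instead you must expand $\widehat{\mathscr B}_{t+1:L}$ once more around the block-preserving population filter, which is exactly the paper's ``two first-order perturbations at different times.''

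The count still gives $c_{\mathrm{rel}}^2\rho^2\|K_{\mathrm h}\|_F$, but only with a one-sided smoothing estimate. The first hit costs $\mu_{k+1}c_{\mathrm{rel}}\rho\,\|\mathscr F_{1:t-1}(K_{\mathrm h})\Sigma\|_F$, and $\sum_t\gamma_t\|\mathscr F_{1:t-1}(K_{\mathrm h})\Sigma\|_F\lesssim\rho^{-1}\|K_{\mathrm h}\|_F$ because $\mu\,\psi_{t-1}(\mu^2)\lesssim(\sum_{r<t}\gamma_r)^{-1/2}$. The second hit converts this mixed mass into $\mathcal T_k$ at rate $\mu_{k+1}^2c_{\mathrm{rel}}\rho$ per unit of stepsize, over a horizon $\sum_s\gamma_s\lesssim\rho^{-2}$. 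The product is $c_0^3c_{\mathrm{rel}}^2\rho^2\|K_{\mathrm h}\|_F$.

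Bounding the first hit crudely by $\|\Sigma\|\,\|\mathscr F_{1:t-1}(K_{\mathrm h})\|_F$ loses a factor $\rho$ and yields only $c_{\mathrm{rel}}^2\rho\|K_{\mathrm h}\|_F$, which the hypothesis does not control. The two-sided quantity $\sum_t\gamma_t\|\Sigma\mathscr F_{1:t-1}(K_{\mathrm h})\Sigma\|$ you propose does not appear in either path. For the genuine second-order term no smoothing is needed, since its tail block is already at most $\mu_{k+1}^2c_{\mathrm{rel}}^2\rho^2\|\mathscr F_{1:t-1}(K_{\mathrm h})\|_F$ per step.
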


\begin{proof}
We work in the whitened coordinates
\[
    Z_B:=\Sigma^{1/2}B\Sigma^{1/2}.
\]
In these coordinates,
\[
    \|B\|_{\Sigma,\Sigma}
    =
    \|Z_B\|_F.
\]
By Assumption~\mainref{ass:contrastive-source},
\[
    C_M
    =
    \sum_{i=1}^M
    \kappa_i\mu_i v_iv_i^\top,
\]
and therefore
\[
    A^\star
    =
    \Sigma^{-1}C_M\Sigma^{-1}
    =
    \sum_{i=1}^M
    \frac{\kappa_i}{\mu_i}v_iv_i^\top.
\]
Thus
\[
    K_\star
    :=
    \Sigma^{1/2}A^\star\Sigma^{1/2}
    =
    \sum_{i=1}^M
    \kappa_i v_iv_i^\top
    =
    K_{\mathrm h}+K_{\mathrm t},
\]
where
\[
    K_{\mathrm h}:=
    \sum_{i\le k}\kappa_i v_iv_i^\top,
    \qquad
    K_{\mathrm t}:=
    \sum_{i>k}\kappa_i v_iv_i^\top.
\]

We next rewrite the empirical and population Hessian operators in the whitened
coordinates. Define
\[
    \mathscr H(B):=\Sigma B\Sigma,
    \qquad
    \mathscr B_L
    :=
    \prod_{t=1}^L(I-\gamma_t\mathscr H).
\]
In these whitened coordinates, the population Hessian becomes
\[
    \mathscr H(Z):=\Sigma Z\Sigma.
\]
On the event \(\mathcal E_{\mathrm{cov}}(R)\), we can write
\[
    \widehat\Sigma_x
    =
    \Sigma^{1/2}(I+E_x)\Sigma^{1/2},
    \qquad
    \widehat\Sigma_y
    =
    \Sigma^{1/2}(I+E_y)\Sigma^{1/2},
\]
where
\[
    \|E_x\|\vee\|E_y\|
    \le
    c_{\mathrm{rel}}\rho.
\]
Hence the empirical Hessian becomes
\[
    \widehat{\mathscr H}_{\mathrm w}(Z)
    =
    \Sigma(I+E_x)Z(I+E_y)\Sigma.
\]
Equivalently,
\[
    \widehat{\mathscr H}_{\mathrm w}(Z)-\mathscr H(Z)
    =
    \Sigma E_xZ\Sigma
    +
    \Sigma ZE_y\Sigma
    +
    \Sigma E_xZE_y\Sigma.
\]
Let
\[
    \widehat{\mathscr B}^{\mathrm w}_L
    :=
    \prod_{t=1}^L(I-\gamma_t\widehat{\mathscr H}_{\mathrm w})
\]
be the empirical filter in these whitened coordinates. Then
for any matrix \(B\), with \(Z_B=\Sigma^{1/2}B\Sigma^{1/2}\),
\[
\begin{aligned}
    \Sigma^{1/2}
    \left(\widehat\Sigma_xB\widehat\Sigma_y\right)
    \Sigma^{1/2}
    &={}
    \Sigma(I+E_x)Z_B(I+E_y)\Sigma \\
    &={}
    \widehat{\mathscr H}_{\mathrm w}(Z_B).
\end{aligned}
\]
Therefore each one-step empirical GD map is conjugated by
\(B\mapsto\Sigma^{1/2}B\Sigma^{1/2}\):
\[
    \Sigma^{1/2}
    \left(I-\gamma_t\widehat\Sigma_x(\cdot)\widehat\Sigma_y\right)(B)
    \Sigma^{1/2}
    =
    \left(I-\gamma_t\widehat{\mathscr H}_{\mathrm w}\right)(Z_B).
\]
Iterating this identity over \(t=1,\ldots,L\) gives
\[
    \Sigma^{1/2}
    \widehat{\mathscr B}_L(A^\star)
    \Sigma^{1/2}
    =
    \widehat{\mathscr B}^{\mathrm w}_L(K_\star).
\]

We first lower-bound the population evolution of the tail source. Since
\(K_{\mathrm t}\) is diagonal in the eigenbasis of \(\Sigma\), the population
operator acts coordinatewise:
\[
    \mathscr H(v_iv_i^\top)
    =
    \mu_i^2 v_iv_i^\top.
\]
Therefore,
\[
    \mathscr B_L(K_{\mathrm t})
    =
    \sum_{i>k}
    \kappa_i
    \psi_L(\mu_i^2)v_iv_i^\top,
\]
where
\[
    \psi_L(s)
    :=
    \prod_{t=1}^L(1-\gamma_t s).
\]
Since \(\mu_{k+1}\le c_0\rho\), for every \(i>k\),
\[
    \mu_i^2
    \le
    c_0^2\rho^2
    =
    \frac{c_0^2}{L_{\mathrm{eff}}\gamma}.
\]
Taking \(c_0>0\) sufficiently small and using
Assumption~\mainref{ass:gd-schedule}, we have
\[
    \psi_L(\mu_i^2)^2\ge c
\]
for an absolute constant \(c>0\). Hence
\[
    \|\mathscr B_L(K_{\mathrm t})\|_F^2
    =
    \sum_{i>k}
    \kappa_i^2\psi_L(\mu_i^2)^2
    \gtrsim
    \sum_{i>k}\kappa_i^2.
\]

We now compare the empirical evolution with the population evolution after
projecting onto the diagonal tail subspace. Projection is useful because it
removes the possible cancellation between the filtered head and filtered tail
parts. Indeed,
\[
    \left\|
        \widehat{\mathscr B}^{\mathrm w}_L(K_\star)
    \right\|_F
    \ge
    \left\|
        \Pi_{\mathcal T_k}
        \widehat{\mathscr B}^{\mathrm w}_L(K_\star)
    \right\|_F.
\]
By linearity,
\[
    \Pi_{\mathcal T_k}
    \widehat{\mathscr B}^{\mathrm w}_L(K_\star)
    =
    \Pi_{\mathcal T_k}
    \widehat{\mathscr B}^{\mathrm w}_L(K_{\mathrm t})
    +
    \Pi_{\mathcal T_k}
    \widehat{\mathscr B}^{\mathrm w}_L(K_{\mathrm h}).
\]

We first control the tail perturbation. For any \(Z\in\mathcal T_k\), one has
\[
    Z=Q_kZQ_k.
\]
Therefore,
\[
    \|Z\Sigma\|_F
    \le
    \mu_{k+1}\|Z\|_F
    \le
    c_0\rho\|Z\|_F,
\]
and similarly
\[
    \|\Sigma Z\|_F
    \le
    c_0\rho\|Z\|_F.
\]
Using
\[
    \|E_x\|\vee\|E_y\|
    \le
    c_{\mathrm{rel}}\rho,
\]
we obtain
\[
    \|
        (\widehat{\mathscr H}_{\mathrm w}-\mathscr H)(Z)
    \|_F
    \lesssim
    c_{\mathrm{rel}}\rho^2\|Z\|_F.
\]
By the telescoping identity for products,
\[
    \widehat{\mathscr B}^{\mathrm w}_L(K_{\mathrm t})
    -
    \mathscr B_L(K_{\mathrm t})
    =
    -\sum_{t=1}^L
    \gamma_t
    \widehat{\mathscr B}^{\mathrm w}_{t+1:L}
    (\widehat{\mathscr H}_{\mathrm w}-\mathscr H)
    \mathscr B_{1:t-1}(K_{\mathrm t}),
\]
where empty products are interpreted as the identity. The population iterates
\(\mathscr B_{1:t-1}(K_{\mathrm t})\) remain in \(\mathcal T_k\), and the
population filter is a contraction in Frobenius norm. Moreover, by the event
\(\mathcal E_{\mathrm{cov}}(R)\), the bound \(\|\Sigma\|\lesssim1\), and the
deterministic stepsize condition, the empirical filters are uniformly bounded in
the \(\Sigma,\Sigma\)-norm,
equivalently in the whitened Frobenius norm. Hence
\[
    \left\|
        \widehat{\mathscr B}^{\mathrm w}_L(K_{\mathrm t})
        -
        \mathscr B_L(K_{\mathrm t})
    \right\|_F
    \lesssim
    c_{\mathrm{rel}}\rho^2
    \left(\sum_{t=1}^L\gamma_t\right)
    \|K_{\mathrm t}\|_F.
\]
Since the geometrically decaying schedule satisfies
\[
    \sum_{t=1}^L\gamma_t
    \lesssim
    L_{\mathrm{eff}}\gamma
    =
    \rho^{-2},
\]
we get
\[
    \left\|
        \widehat{\mathscr B}^{\mathrm w}_L(K_{\mathrm t})
        -
        \mathscr B_L(K_{\mathrm t})
    \right\|_F
    \lesssim
    c_{\mathrm{rel}}\|K_{\mathrm t}\|_F.
\]
Consequently,
\[
    \left\|
        \Pi_{\mathcal T_k}
        \widehat{\mathscr B}^{\mathrm w}_L(K_{\mathrm t})
        -
        \mathscr B_L(K_{\mathrm t})
    \right\|_F
    \lesssim
    c_{\mathrm{rel}}\|K_{\mathrm t}\|_F.
\]

It remains to control the leakage from the population head into the diagonal
tail subspace. Since the population operator \(\mathscr H\) preserves diagonal
coordinates, the population filter never maps \(K_{\mathrm h}\) into
\(\mathcal T_k\). Hence all diagonal-tail leakage from \(K_{\mathrm h}\) is
caused by the perturbation
\(\widehat{\mathscr H}_{\mathrm w}-\mathscr H\).

The key observation is that first-order perturbations cannot map a head
diagonal component directly into a tail diagonal component. Indeed, for
\(K_{\mathrm h}=P_kK_{\mathrm h}P_k\),
\[
    \Pi_{\mathcal T_k}(\Sigma E_xK_{\mathrm h}\Sigma)=0,
    \qquad
    \Pi_{\mathcal T_k}(\Sigma K_{\mathrm h}E_y\Sigma)=0.
\]
To land in the diagonal tail subspace, both the left and right indices must
move from the head to the tail. This requires either the second-order term
\(\Sigma E_xK_{\mathrm h}E_y\Sigma\), or two first-order perturbations at
different times. In either case, the leakage carries two perturbation factors.
Since each perturbation factor is bounded by \(c_{\mathrm{rel}}\rho\), and the
two outside tail covariance factors contribute at most \(\mu_{k+1}^2\lesssim
\rho^2\), the accumulated diagonal-tail leakage over the whole effective
horizon is bounded by
\[
    \left\|
        \Pi_{\mathcal T_k}
        \widehat{\mathscr B}^{\mathrm w}_L(K_{\mathrm h})
    \right\|_F
    \lesssim
    c_{\mathrm{rel}}^2\rho^2
    \|K_{\mathrm h}\|_F .
\]
More explicitly, this follows by applying the above telescoping expansion once
for the second-order term and twice for the pair of first-order terms, using
\[
    \sum_{t=1}^L\gamma_t
    \lesssim
    \rho^{-2},
\]
and the fact that one left-index leakage and one right-index leakage are both
necessary before a head diagonal coordinate can contribute to
\(\mathcal T_k\).

Combining the estimates, we obtain by the reverse triangle inequality
\[
\begin{aligned}
    & \left\|
        \Pi_{\mathcal T_k}
        \widehat{\mathscr B}^{\mathrm w}_L(K_\star)
    \right\|_F
    \ge
    \|\mathscr B_L(K_{\mathrm t})\|_F \\
    & -
    \left\|
        \Pi_{\mathcal T_k}
        \widehat{\mathscr B}^{\mathrm w}_L(K_{\mathrm t})
        -
        \mathscr B_L(K_{\mathrm t})
    \right\|_F  \\
    \qquad
    &-
    \left\|
        \Pi_{\mathcal T_k}
        \widehat{\mathscr B}^{\mathrm w}_L(K_{\mathrm h})
    \right\|_F \\
    &\ge
    \|\mathscr B_L(K_{\mathrm t})\|_F
    -
    Cc_{\mathrm{rel}}\|K_{\mathrm t}\|_F
    -
    Cc_{\mathrm{rel}}^2\rho^2\|K_{\mathrm h}\|_F .
\end{aligned}
\]
By the assumed smallness condition
\[
    c_{\mathrm{rel}}\|K_{\mathrm t}\|_F
    +
    c_{\mathrm{rel}}^2\rho^2\|K_{\mathrm h}\|_F
    \le
    \eta\|K_{\mathrm t}\|_F,
\]
and by taking \(\eta>0\) sufficiently small, we get
\[
    \left\|
        \Pi_{\mathcal T_k}
        \widehat{\mathscr B}^{\mathrm w}_L(K_\star)
    \right\|_F
    \gtrsim
    \|K_{\mathrm t}\|_F.
\]
Since projection cannot increase the norm,
\[
    \left\|
        \widehat{\mathscr B}^{\mathrm w}_L(K_\star)
    \right\|_F
    \ge
    \left\|
        \Pi_{\mathcal T_k}
        \widehat{\mathscr B}^{\mathrm w}_L(K_\star)
    \right\|_F.
\]
Returning to the original \(A\)-coordinates,
\[
    \left\|
        \widehat{\mathscr B}_L(A^\star)
    \right\|_{\Sigma,\Sigma}^2
    =
    \left\|
        \widehat{\mathscr B}^{\mathrm w}_L(K_\star)
    \right\|_F^2
    \gtrsim
    \|K_{\mathrm t}\|_F^2.
\]
Finally,
\[
    \|K_{\mathrm t}\|_F^2
    =
    \sum_{i>k}\kappa_i^2.
\]
Therefore,
\[
    \operatorname{Bias}_L
    =
    \frac12
    \left\|
        \widehat{\mathscr B}_L(A^\star)
    \right\|_{\Sigma,\Sigma}^2
    \gtrsim
    \sum_{i>k}\kappa_i^2.
\]
This completes the proof.
\end{proof}

\subsection{Specific GD-bias scaling}
\label{subsec:gd-bias-specific}

We now combine the GD-bias upper and lower bounds under the power-law
notation
\[
    R:=L_{\mathrm{eff}}\gamma,
    \qquad
    \delta:=a-b.
\]
By Assumption~\mainref{ass:contrastive-source}, the sketched covariance eigenvalues satisfy \(\mu_i\asymp i^{-b}\); in particular, \(\|\Sigma\|\lesssim1\).
Define the upper-bound optimization cutoff
\[
    k_+
    :=
    \left\lfloor
        \min\left\{
            M,\,
            R^{1/(2b)}
        \right\}
    \right\rfloor .
\]

\begin{theorem}[Specific GD-bias bounds]
\label{thm:gd-bias-specific}
Suppose Assumptions~\mainref{ass:aligned-power-laws},
\mainref{ass:gd-schedule}, and \mainref{ass:contrastive-source} hold, and
suppose that \(\mathcal E_{\mathrm{cov}}(R)\) occurs. Then
\[
    \operatorname{Bias}_L
    \lesssim
    \frac{1}{R}
    \sum_{i\le k_+}i^{2b-2\delta}
    +
    \sum_{i>k_+}i^{-2\delta}.
\]
For the lower bound, let \(k_-<M\) be a cutoff satisfying
\[
    \mu_{k_-+1}\le c_0R^{-1/2}
\]
and the smallness condition in Lemma~\ref{lem:gd-bias-lower-general}. Then
\[
    \operatorname{Bias}_L
    \gtrsim
    \sum_{i>k_-}i^{-2\delta}.
\]
Consequently, if
\[
    \frac12<\delta<b+\frac12,
    \qquad
    R^{1/(2b)}\lesssim M,
\]
then
\[
    \boxed{
    \operatorname{Bias}_L
    \asymp
    R^{\frac{1-2\delta}{2b}}
    =
    (L_{\mathrm{eff}}\gamma)^{\frac{1-2(a-b)}{2b}}.
    }
\]
In the same unsaturated regime, at the boundary \(\delta=b+\frac12\), the upper
bound gives
\[
    \operatorname{Bias}_L
    \lesssim
    R^{-1}\log R,
\]
while the lower bound gives
\[
    \operatorname{Bias}_L
    \gtrsim
    R^{-1}.
\]
If \(\delta>b+\frac12\), then
\[
    \operatorname{Bias}_L
    \lesssim
    R^{-1},
\]
and any admissible lower cutoff \(k_-\asymp R^{1/(2b)}\) gives
\[
    \operatorname{Bias}_L
    \gtrsim
    R^{\frac{1-2\delta}{2b}}.
\]
\end{theorem}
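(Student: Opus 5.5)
The plan is to specialize the two general lemmas, Lemma~\ref{lem:gd-bias-upper-general} and Lemma~\ref{lem:gd-bias-lower-general}, to the power-law profiles $\mu_i\asymp i^{-b}$ and $\kappa_i\asymp i^{-\delta}$ from Assumption~\ref{ass:contrastive-source}, and then evaluate the resulting sums. On $\mathcal E_{\mathrm{cov}}(R)$ we have $\|\Sigma\|\lesssim 1$ (since $\mu_1\lesssim 1$), so both lemmas apply.

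\textbf{Upper bound.} Take $k=k_+$ in Lemma~\ref{lem:gd-bias-upper-general}. Since $\kappa_i^2/\mu_i^2\asymp i^{2b-2\delta}$ and $\kappa_i^2\asymp i^{-2\delta}$, this directly gives the displayed upper bound.

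\textbf{Lower bound.} Take $k=k_-$ in Lemma~\ref{lem:gd-bias-lower-general}. Its hypotheses $\mu_{k_-+1}\le c_0\rho$ and the smallness condition are exactly what the statement assumes, so we obtain $\operatorname{Bias}_L\gtrsim\sum_{i>k_-}i^{-2\delta}$.

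\textbf{Unsaturated regime $R^{1/(2b)}\lesssim M$, $\tfrac12<\delta<b+\tfrac12$.} Here $k_+\asymp R^{1/(2b)}$, with $\min$ and floor changing only constants. Because $2b-2\delta>-1$, the head sum is $\asymp k_+^{2b-2\delta+1}$, so
\[
R^{-1}k_+^{2b-2\delta+1}\asymp R^{\frac{1-2\delta}{2b}}.
\]
Because $2\delta>1$, the tail sum is $\asymp k_+^{1-2\delta}\asymp R^{\frac{1-2\delta}{2b}}$ as well. If $k_+=M$ exactly, the tail vanishes and the head term is still $\lesssim R^{\frac{1-2\delta}{2b}}$.

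For the matching lower bound, choose $k_-\asymp R^{1/(2b)}$ with a large enough constant that $\mu_{k_-+1}\le C_\mu (k_-+1)^{-b}\le c_0R^{-1/2}$. Then
\[
\sum_{i>k_-}^{M}i^{-2\delta}\gtrsim k_-^{1-2\delta}
\]
provided $M\ge 2k_-$. This is the delicate step: it needs $R^{1/(2b)}\le cM$ with a small constant $c$. I would handle it by absorbing that constant into the $\lesssim$ in $R\lesssim M^{2b}$. Alternatively, note that when $k_-\asymp M$ the approximation tail dominates anyway, but for the bias statement itself I would rely on the constant choice.

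\textbf{Main obstacle: the smallness condition.} We need $c_{\mathrm{rel}}\|K_t\|_F+c_{\mathrm{rel}}^2\rho^2\|K_h\|_F\le\eta\|K_t\|_F$. The first term is fine once $c_{\mathrm{rel}}\le\eta/2$. For the second, $\|K_h\|_F^2=\sum_{i\le k_-}\kappa_i^2\lesssim 1$ because $2\delta>1$, while
\[
\rho^2\|K_h\|_F\lesssim R^{-1}\le R^{\frac{1-2\delta}{4b}}\asymp\|K_t\|_F .
\]
The middle inequality holds because $\frac{2\delta-1}{4b}<1$, which follows from $\delta<b+\tfrac12$ (indeed $<2b+\tfrac12$ suffices), together with $R\gtrsim1$. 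So the condition holds once $c_{\mathrm{rel}}$ is small.

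\textbf{Boundary and supercritical cases.}
\begin{itemize}
\item At $\delta=b+\tfrac12$ the head sum is $\sum_{i\le k_+}i^{-1}\asymp\log R$, giving the upper bound $R^{-1}\log R$. The lower bound tail is $k_-^{-2b}\asymp R^{-1}$.
\item For $\delta>b+\tfrac12$ the head sum converges, so the head contributes $R^{-1}$ and the tail $R^{\frac{1-2\delta}{2b}}\le R^{-1}$, giving $\operatorname{Bias}_L\lesssim R^{-1}$. The lower bound again follows from the tail sum at an admissible $k_-\asymp R^{1/(2b)}$.
\end{itemize}
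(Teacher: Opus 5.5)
Your proposal is correct and follows the paper's proof essentially step for step. You apply the general head--tail upper bound at $k=k_+$ and the general lower-bound lemma at a cutoff $k_-\asymp R^{1/(2b)}$, verify the smallness condition from $\|K_{\mathrm h}\|_F\lesssim 1$, $\|K_{\mathrm t}\|_F\asymp R^{(1-2\delta)/(4b)}$ and small $c_{\mathrm{rel}}$, and then evaluate the head sum $R^{-1}\sum_{i\le k_+}i^{2b-2\delta}$ in the three regimes of $\delta$. You are also right to flag that $\sum_{k_-<i\le M}i^{-2\delta}\asymp k_-^{1-2\delta}$ needs $M\ge 2k_-$, i.e.\ $R^{1/(2b)}\le cM$ with a small constant; the paper's proof uses this implicitly when it asserts a cutoff $k_-<M$ and then applies the untruncated tail asymptotics.
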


\begin{proof}
The general GD-bias upper bound gives, for every \(k\le M\),
\[
    \operatorname{Bias}_L
    \lesssim
    \frac{1}{R}
    \sum_{i\le k}\frac{\kappa_i^2}{\mu_i^2}
    +
    \sum_{i>k}\kappa_i^2.
\]
Using Assumption~\mainref{ass:contrastive-source},
\[
    \mu_i\asymp i^{-b},
    \qquad
    \kappa_i\asymp i^{-\delta},
\]
we obtain
\[
    \frac{\kappa_i^2}{\mu_i^2}
    \asymp
    i^{2b-2\delta}.
\]
Choosing \(k=k_+\) gives the displayed upper bound.

For the conditional lower bound, Lemma~\ref{lem:gd-bias-lower-general} gives
\[
    \operatorname{Bias}_L
    \gtrsim
    \sum_{i>k_-}\kappa_i^2
    \asymp
    \sum_{i>k_-}i^{-2\delta}.
\]

It remains to evaluate the sums in the unsaturated regime. Assume
\(R^{1/(2b)}\lesssim M\). Then \(k_+\asymp R^{1/(2b)}\), and there is a cutoff
\(k_-\asymp R^{1/(2b)}\) with \(k_-<M\) and
\(\mu_{k_-+1}\le c_0R^{-1/2}\). For \(1/2<\delta\le b+1/2\), the smallness
condition in Lemma~\ref{lem:gd-bias-lower-general} holds for this choice:
\(\|K_{\mathrm h}\|_F\lesssim 1\),
\(\|K_{\mathrm t}\|_F\asymp k_-^{1/2-\delta}\), and hence
\[
    R^{-1}\frac{\|K_{\mathrm h}\|_F}{\|K_{\mathrm t}\|_F}
    \lesssim
    R^{-1+\frac{\delta-1/2}{2b}}
    =o(1).
\]
Taking \(c_{\mathrm{rel}}\) sufficiently small verifies the remaining part of
the smallness condition.

Since \(\delta>1/2\),
\[
    \sum_{i>k_-}i^{-2\delta}
    \asymp
    k_-^{1-2\delta}
    \asymp
    R^{\frac{1-2\delta}{2b}}.
\]
For the head sum,
\[
    \frac{1}{R}
    \sum_{i\le k_+}i^{2b-2\delta}
\]
has three regimes. If \(2b-2\delta>-1\), equivalently
\[
    \delta<b+\frac12,
\]
then
\[
    \sum_{i\le k_+}i^{2b-2\delta}
    \asymp
    k_+^{2b-2\delta+1},
\]
and therefore
\[
    \frac{1}{R}
    \sum_{i\le k_+}i^{2b-2\delta}
    \asymp
    R^{\frac{1-2\delta}{2b}}.
\]
Thus the upper and lower bounds match.

If \(\delta=b+\frac12\), then
\[
    \sum_{i\le k_+}i^{-1}
    \asymp
    \log k_+,
\]
so the upper bound is
\[
    \operatorname{Bias}_L
    \lesssim
    R^{-1}\log R,
\]
whereas the tail lower bound gives
\[
    \operatorname{Bias}_L
    \gtrsim
    R^{-1}.
\]

If \(\delta>b+\frac12\), then
\[
    \sum_{i\le k_+}i^{2b-2\delta}
    \asymp
    1,
\]
and hence
\[
    \operatorname{Bias}_L
    \lesssim
    R^{-1}.
\]
Whenever the lower-bound admissibility condition holds with
\(k_-\asymp R^{1/(2b)}\), the lower bound is
\[
    \operatorname{Bias}_L
    \gtrsim
    R^{\frac{1-2\delta}{2b}}.
\]
This completes the proof.
\end{proof}

\section{Variance of Gradient Descent}
\label{app:gd-variance}

% ============================================================
% General GD-variance upper bound
% ============================================================

\subsection{A general upper bound for the GD variance}
\label{subsec:gd-variance-upper}

Using the empirical residual
\(\widehat E:=\widehat C-\widehat\Sigma_xA^\star\widehat\Sigma_y\) and the
empirical GD variance filter \(\widehat{\mathscr V}_L\), the GD-variance term is
\[
    \operatorname{Var}_L
    :=
    \frac12
    \mathbb E
    \left[
        \left\|
            \widehat{\mathscr V}_L(\widehat E)
        \right\|_{\Sigma,\Sigma}^2
    \right].
\]

\begin{lemma}[General GD-variance upper bound]
\label{lem:gd-variance-upper-general}
Suppose Assumptions~\mainref{ass:gd-schedule} and
\mainref{ass:contrastive-source} hold, and suppose that
\(\mathcal E_{\mathrm{cov}}(R)\) occurs and \(\|\Sigma\|\lesssim1\). Suppose also
\(N\gtrsim M\).
Then
\[
    \operatorname{Var}_L
    \lesssim
    \frac1N
    \sum_{i,j=1}^M
    \min\{1,(R\mu_i\mu_j)^2\}.
\]
\end{lemma}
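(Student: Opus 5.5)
We bound the variance by first whitening, then splitting the noise from the filter, and finally estimating the filter in the population eigenbasis. Write \(\widehat E=\widehat C-\widehat\Sigma_xA^\star\widehat\Sigma_y=\frac1N\sum_n\xi_n\), where
\[
\xi_n:=\widetilde x_n\widetilde y_n^\top-\widetilde x_n\widetilde x_n^\top A^\star\widetilde y_n\widetilde y_n^\top .
\]
The \(\xi_n\) are i.i.d. One caveat: \(\mathbb E\xi_n\) need not vanish, since \(\mathbb E[\widetilde x\widetilde x^\top A^\star\widetilde y\widetilde y^\top]\neq\Sigma A^\star\Sigma\) for correlated views (fourth-moment terms). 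I would therefore work with the centered part and absorb the mean into a term of the same order or treat it as a bias-type contribution. In the whitened coordinates \(Z_B=\Sigma^{1/2}B\Sigma^{1/2}\) used in the proof of Lemma~\ref{lem:gd-bias-lower-general}, the variance becomes
\[
\tfrac12\mathbb E\|\widehat{\mathscr V}^{\mathrm w}_L(\Sigma^{1/2}\widehat E\Sigma^{1/2})\|_F^2 ,
\]
with \(\widehat{\mathscr V}^{\mathrm w}_L=\sum_t\gamma_t\widehat{\mathscr B}^{\mathrm w}_{t+1:L}\).

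\textbf{Step 1: replace empirical filters by population ones.} On \(\mathcal E_{\mathrm{cov}}(R)\), I would use Lemma~\ref{lem:covariance-event-norm-equivalence} together with the scalar filter estimates of Lemma~\ref{lem:empirical-product-filter} to dominate the empirical variance filter by a population filter. The target is
\[
\|\widehat{\mathscr V}_L(E)\|_{\Sigma,\Sigma}^2\lesssim\sum_{i,j}f_L(\mu_i\mu_j)^2\,\bigl(\langle v_i,Ev_j\rangle\bigr)^2 ,
\qquad
f_L(s)=s\sum_t\gamma_t\prod_{u>t}(1-\gamma_u s)\lesssim\min\{1,Rs\}.
\]
The justification is that \(\widehat\Sigma_x\) and \(\widehat\Sigma_y\) are sandwiched between \((1\pm c_{\mathrm{rel}}R^{-1/2})\Sigma\). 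The resulting multiplicative error accumulated over the horizon is \(\exp(O(c_{\mathrm{rel}}R^{-1/2}\sum\gamma_t\mu_i\mu_j))\), and this is not obviously \(O(1)\) when \(R\mu_i\mu_j\gg1\). In that regime I would instead use that \(\widehat{\mathscr V}\) applied to \(E\) equals the fixed point \(\widehat{\mathscr H}^{-1}(E)\) minus a contracted remainder. The remainder has \(\Sigma,\Sigma\)-norm at most that of \(\widehat{\mathscr H}^{-1}E\), which the norm equivalence bounds by the whitened entries of \(E\) divided by \(\mu_i\mu_j\). Since each \(\widehat E\) component is weighted by \(f_L/(\mu_i\mu_j)\), this gives the factor \(\min\{1,R\mu_i\mu_j\}\) on each pair \((i,j)\).

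\textbf{Step 2: noise covariance.} I would show
\[
\mathbb E\bigl[\langle v_i,\widehat Ev_j\rangle^2\bigr]\lesssim\frac{\mu_i\mu_j}{N}\quad\text{up to constants},
\]
so that after whitening \(\mathbb E[(\Sigma^{1/2}\widehat E\Sigma^{1/2})_{ij}^2]\lesssim\mu_i^2\mu_j^2/N\). Here I would invoke the noise covariance bounds in the auxiliary lemmas, using Gaussian fourth moments (Isserlis) together with \(\|K_\star\|\le1\), i.e.\ \(\kappa_i\le1\). Combining with Step 1, each pair \((i,j)\) contributes
\[
\frac1N\,\frac{\min\{1,R\mu_i\mu_j\}^2}{(\mu_i\mu_j)^2}\,(\mu_i\mu_j)^2=\frac1N\min\{1,(R\mu_i\mu_j)^2\}.
\]

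\textbf{Step 3: dependence and the event.} Conditioning on \(\mathcal E_{\mathrm{cov}}\) breaks the independence between the filter and \(\widehat E\). I would handle this with a Cauchy--Schwarz/operator-norm argument: the domination in Step 1 holds deterministically on the event, with a pairwise weight matrix that does not depend on the data. Under \(N\gtrsim M\), conditioning on an event of probability \(1-e^{-\Omega(M)}\) changes second moments only by constant factors. Summing over \(i,j\le M\) then gives the stated bound.

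\textbf{Main obstacle.} The hardest step is Step 1. The perturbed operator \(\widehat{\mathscr H}\) mixes pairs \((i,j)\), so obtaining the pairwise filter weight \(\min\{1,R\mu_i\mu_j\}\), rather than a cruder \(\min\{1,R\mu_i\}\)-type weight, for the off-diagonal noise requires careful relative perturbation control. I would first try a commuting-sandwich argument of the form \(\widehat\Sigma_x\otimes\widehat\Sigma_y\succeq(1-c)\Sigma\otimes\Sigma\) for the Kronecker Hessian.
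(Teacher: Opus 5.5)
Your outline has the same skeleton as the paper's proof: whiten, dominate the empirical variance filter by a deterministic pairwise weight, bound $\mathbb E\widehat Z_{ij}^2$, and sum. However, the step that carries the argument, Step~1, is not established, and the noise representation you start from is wrong.

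\textbf{Step 1 is not established.} Both mechanisms you offer there work pair by pair: the accumulated factor $\exp(O(c_{\mathrm{rel}}R^{-1/2}\sum_t\gamma_t\mu_i\mu_j))$, and the claim that each component of $\widehat E$ is weighted by $f_L/(\mu_i\mu_j)$. Both presuppose that $\widehat{\mathscr H}$ is diagonal in $\{v_iv_j^\top\}$, which it is not, as you note yourself under ``Main obstacle''. If you apply the identity $\widehat{\mathscr V}_L=(I-\widehat{\mathscr B}_L)\widehat{\mathscr H}^{-1}$ to all of $E$, you get only the weight $1/(\mu_i\mu_j)$ on every pair, i.e.\ the saturated bound $M^2/N$.

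The paper bridges this with Lemma~\ref{lem:variance-filter-lower-comparison}: $\|\widehat{\mathscr V}^{\mathrm w}_L(Z)\|_F\asymp\|\mathscr V_L(Z)\|_F$ for every $Z$ on $\mathcal E_{\mathrm{cov}}(R)$, after which the population filter is diagonal.

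Within your own framework, the missing idea is to split $E$ along population pairs and argue with quadratic forms rather than coordinates. Three facts make this work:
\begin{itemize}
\item $\widehat{\mathscr H}$ is Frobenius-self-adjoint and $\widehat{\mathscr V}_L=g_L(\widehat{\mathscr H})$.
\item Lemma~\ref{lem:covariance-event-norm-equivalence} is exactly the Loewner sandwich $c_L^2\mathscr H\preceq\widehat{\mathscr H}\preceq c_U^2\mathscr H$.
\item The scalar bounds $sg_L(s)^2=(1-\psi_L(s))^2/s\le 1/s$ and $sg_L(s)^2\le\Gamma^2 s$, with $\Gamma:=\sum_t\gamma_t\lesssim R$, therefore lift to $\widehat{\mathscr H}g_L(\widehat{\mathscr H})^2\preceq\widehat{\mathscr H}^{-1}\preceq c_L^{-2}\mathscr H^{-1}$ and $\widehat{\mathscr H}g_L(\widehat{\mathscr H})^2\preceq c_U^2\Gamma^2\mathscr H$.
\end{itemize}
Apply the first operator bound to the component $E_{\mathrm{hi}}$ of $E$ on $\{v_iv_j^\top:R\mu_i\mu_j\ge1\}$, and the second to $E_{\mathrm{lo}}=E-E_{\mathrm{hi}}$. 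Combined with $\|\cdot\|_{\Sigma,\Sigma}^2\le c_L^{-2}\|\cdot\|_{\widehat\Sigma_x,\widehat\Sigma_y}^2$ and the triangle inequality, this gives, deterministically on the event,
\[
    \|\widehat{\mathscr V}_L(E)\|_{\Sigma,\Sigma}^2
    \lesssim
    \sum_{i,j}\frac{\min\{1,(R\mu_i\mu_j)^2\}}{\mu_i\mu_j}\,\langle v_i,Ev_j\rangle^2 .
\]
This is your Step~1 target with the correct weight. Your own display is missing the factor $1/(\mu_i\mu_j)$ and is inconsistent with your combination line in Step~2.

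Because these weights are data-independent, Step~2 plus your Step~3 observation ($\mathbb E[X\mid\mathcal E]\le\mathbb E[X]/\mathbb P(\mathcal E)$ for $X\ge0$) then finish rigorously. This route avoids the paper's telescoping perturbation expansion, but it gives only the upper direction. The two-sided comparison is what the paper reuses for Lemma~\ref{lem:gd-variance-lower-general}.

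\textbf{The representation of $\widehat E$ is wrong.} The term $\widehat\Sigma_xA^\star\widehat\Sigma_y=N^{-2}\sum_{n,m}\widetilde x_n\widetilde x_n^\top A^\star\widetilde y_m\widetilde y_m^\top$ is a double sum, so $\widehat E$ is not an average of i.i.d.\ terms. This is not cosmetic.
\begin{itemize}
\item For your $\xi_n$, Isserlis gives $\Sigma^{1/2}\mathbb E[\xi_n]\Sigma^{1/2}=-\Sigma(K_\star^3+\|K_\star\|_F^2K_\star)\Sigma$. Its $(1,1)$ entry is of order one, so it would add an $N$-independent amount to $\operatorname{Var}_L$.
\item Such a mean could be neither absorbed nor reassigned to the bias, since $\operatorname{Var}_L$ is defined with the uncentered $\widehat E$.
\item In fact the $n\neq m$ terms have mean exactly $\Sigma A^\star\Sigma$. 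The right expansion is the paper's $\widehat Z=\Sigma[G_{xy}-G_xK_\star-K_\star G_y-G_xK_\star G_y]\Sigma$.
\item In that expansion only the quadratic term has nonzero mean, of order $1/N$. That term is also where $N\gtrsim M$ enters Lemma~\ref{lem:contrastive-noise-covariance-upper}.
\end{itemize}
Since your Step~2 defers to that lemma anyway, drop the i.i.d.\ framing and use this decomposition.
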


\begin{proof}
Let \(\mathcal L(B):=\Sigma^{1/2}B\Sigma^{1/2}\) and define the whitened
empirical variance filter
\[
    \widehat{\mathscr V}^{\mathrm w}_L
    :=
    \mathcal L\widehat{\mathscr V}_L\mathcal L^{-1}.
\]
Let
\[
    \widehat Z
    :=
    \mathcal L(\widehat E)
    =
    \Sigma^{1/2}\widehat E\Sigma^{1/2}.
\]
Since
\[
    \|B\|_{\Sigma,\Sigma}
    =
    \|\Sigma^{1/2}B\Sigma^{1/2}\|_F,
\]
we can write, in whitened coordinates,
\[
    \left\|
        \widehat{\mathscr V}_L(\widehat E)
    \right\|_{\Sigma,\Sigma}^2
    =
    \left\|
        \widehat{\mathscr V}^{\mathrm w}_L(\widehat Z)
    \right\|_F^2.
\]
By Lemma~\ref{lem:variance-filter-lower-comparison}, the empirical and
population variance filters are comparable in whitened coordinates. Hence
\[
    \operatorname{Var}_L
    \lesssim
    \mathbb E
    \left[
        \left\|
            \mathscr V_L(\widehat Z)
        \right\|_F^2
    \right],
\]
where
\[
    \mathscr V_L
    :=
    \sum_{t=1}^L
    \gamma_t
    \prod_{s=t+1}^L
    (I-\gamma_s\mathscr H),
    \qquad
    \mathscr H(Z):=\Sigma Z\Sigma.
\]
The population filter is diagonal in the basis \(\{v_iv_j^\top\}_{i,j=1}^M\).
Writing
\[
    g_L(s):=
    \sum_{t=1}^L
    \gamma_t
    \prod_{r=t+1}^L(1-\gamma_rs),
\]
we have
\[
    \mathscr V_L(v_iv_j^\top)=g_L(\mu_i\mu_j)v_iv_j^\top.
\]
Therefore,
\[
\begin{aligned}
    \mathbb E
    \left[
        \left\|
            \mathscr V_L(\widehat Z)
        \right\|_F^2
    \right]
    ={}
    \sum_{i,j=1}^M
    g_L(\mu_i\mu_j)^2
    \mathbb E
    \left[
        \left\langle \widehat Z,v_iv_j^\top\right\rangle_F^2
    \right].
\end{aligned}
\]
By Lemma~\ref{lem:contrastive-noise-covariance-upper}, applied to the
deterministic matrix \(W=v_iv_j^\top\),
\[
    \mathbb E
    \left[
        \left\langle \widehat Z,v_iv_j^\top\right\rangle_F^2
    \right]
    \lesssim
    \frac1N
    \left\|\Sigma v_iv_j^\top\Sigma\right\|_F^2
    =
    \frac{\mu_i^2\mu_j^2}{N}.
\]
Combining this with Lemma~\ref{lem:scalar-variance-filter} gives
\[
    \operatorname{Var}_L
    \lesssim
    \frac1N
    \sum_{i,j=1}^M
    \min\{1,(R\mu_i\mu_j)^2\}.
\]
This proves the claim.
\end{proof}
% ============================================================
% General GD-variance lower bound
% ============================================================

\subsection{A general lower bound for the GD variance}
\label{subsec:gd-variance-lower}

The GD-variance term is
\[
    \operatorname{Var}_L
    :=
    \frac12
    \mathbb E
    \left[
        \left\|
            \widehat{\mathscr V}_L(\widehat E)
        \right\|_{\Sigma,\Sigma}^2
    \right].
\]

\begin{lemma}[General GD-variance lower bound]
\label{lem:gd-variance-lower-general}
Suppose Assumptions~\mainref{ass:gd-schedule} and
\mainref{ass:contrastive-source} hold, and suppose that
\(\mathcal E_{\mathrm{cov}}(R)\) occurs and \(\|\Sigma\|\lesssim1\). Suppose that
there exists an integer \(i_0\ge 1\) and a constant \(c_\kappa\in(0,1)\) such that
\[
    \kappa_i\le 1-c_\kappa,
    \qquad
    i\ge i_0.
\]
Then
\[
    \operatorname{Var}_L
    \gtrsim
    \frac1N
    \sum_{\substack{i,j\le M\\ i,j\ge i_0\\ i\neq j}}
    \min\{1,(R\mu_i\mu_j)^2\}.
\]
\end{lemma}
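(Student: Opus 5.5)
Work in the whitened coordinates of the upper-bound proof. With \(\mathcal L(B)=\Sigma^{1/2}B\Sigma^{1/2}\), \(\widehat Z=\mathcal L(\widehat E)\) and \(\widehat{\mathscr V}^{\mathrm w}_L=\mathcal L\widehat{\mathscr V}_L\mathcal L^{-1}\), we have \(\operatorname{Var}_L=\frac12\mathbb E\|\widehat{\mathscr V}^{\mathrm w}_L(\widehat Z)\|_F^2\).

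\textbf{Step 1: replace the empirical filter by the population one.} Apply Lemma~\ref{lem:variance-filter-lower-comparison}, which the upper bound already cites as giving two-sided comparability on \(\mathcal E_{\mathrm{cov}}(R)\). In its lower direction it should give
\[
\operatorname{Var}_L\gtrsim \mathbb E\|\mathscr V_L(\widehat Z)\|_F^2-(\text{error}).
\]
The error is controlled by \(c_{\mathrm{rel}}\) times the upper bound \(\frac1N\sum_{i,j}\min\{1,(R\mu_i\mu_j)^2\}\), via the same telescoping argument used in the bias lower bound. The perturbation has size \(c_{\mathrm{rel}}\rho\), and summing it over \(\sum_t\gamma_t\lesssim\rho^{-2}\) is compensated by the curvature factors. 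Taking \(c_{\mathrm{rel}}\) small lets this error be absorbed, once Step 3 shows the restricted off-diagonal sum is comparable to the full sum. This absorption holds up to constants that depend on \(i_0\), since removing finitely many rows and columns and the diagonal changes the sum only by \(O(\sum_i\min\{1,(R\mu_i^2)^2\})\), which is lower order in the regimes of interest. If this turns out too delicate, I would instead keep an additive error term and absorb it only in the power-law specialization.

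\textbf{Step 2: diagonalize the population filter.} Since \(\mathscr V_L\) is diagonal in the basis \(v_iv_j^\top\),
\[
\mathbb E\|\mathscr V_L(\widehat Z)\|_F^2=\sum_{i,j}g_L(\mu_i\mu_j)^2\,\mathbb E\langle\widehat Z,v_iv_j^\top\rangle^2 .
\]
By Lemma~\ref{lem:scalar-variance-filter} in its lower-bound form, \(g_L(s)^2\gtrsim\min\{R^2,s^{-2}\}\). It therefore suffices to show
\[
\mathbb E\langle\widehat Z,v_iv_j^\top\rangle^2\gtrsim \frac{\mu_i^2\mu_j^2}{N}\qquad\text{for } i\ne j,\ i,j\ge i_0 .
\]

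\textbf{Step 3: the noise lower bound, which is the main obstacle.} Here \(\langle\widehat Z,v_iv_j^\top\rangle=\sqrt{\mu_i\mu_j}\,v_i^\top\widehat E v_j\). Because \(\widehat E\) is an average of i.i.d. mean-zero terms (\(\mathbb E\widehat E=C_M-\Sigma A^\star\Sigma=0\)), its second moment is exactly \(\frac1N\operatorname{Var}(\xi)\). Here
\[
\xi=\tilde x_i\tilde y_j-\tilde x_i\,\tfrac{\kappa_i}{\mu_i}\tilde x_i\,\tilde y_j-\tilde x_j\,\tfrac{\kappa_j}{\mu_j}\tilde x_j\ldots
\]
More precisely, write \(a^\top\widehat\Sigma_xA^\star\widehat\Sigma_y b\) in coordinates \(x_k=v_k^\top\tilde x\) and \(y_k=v_k^\top\tilde y\). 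The coordinates are Gaussian with \(\operatorname{Var}(x_k)=\mu_k\) and \(\operatorname{Cov}(x_k,y_k)=\kappa_k\mu_k\), and they are independent across \(k\) by the diagonal structure of Assumption~\ref{ass:contrastive-source}. Hence \(\xi_{ij}=x_iy_j-\sum_k\frac{\kappa_k}{\mu_k}x_ix_k\,y_ky_j\).

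To lower bound the variance, condition on everything except the pair \((x_i,y_j)\) and use Gaussian product moments. The term \(x_iy_j\) has variance \(\mu_i\mu_j\), since \(i\ne j\) makes the factors independent. The correction is \(x_iy_j\sum_k\frac{\kappa_k}{\mu_k}x_ky_k\), so
\[
\xi_{ij}=x_iy_j\Bigl(1-\sum_k\tfrac{\kappa_k}{\mu_k}x_ky_k\Bigr)\quad\text{(after the sample-level rearrangement)}.
\]
Its second moment is at least of order \(\mu_i\mu_j\) times \(\mathbb E(1-S)^2\), and \(\mathbb E(1-S)^2\ge\operatorname{Var}(S)\gtrsim\sum_k\kappa_k^2(1+\kappa_k^2)\gtrsim1\). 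The pieces of \(S\) involving \(k=i,j\) correlate with \(x_iy_j\), and this is exactly where \(\kappa_i,\kappa_j\le1-c_\kappa\) enters. Writing \(y_i=\kappa_ix_i+\text{independent}\) and computing the fourth moments explicitly, the coefficient stays bounded below by a multiple of \(c_\kappa\); it would vanish only if the two views were perfectly correlated. The main worry is that the per-sample factorization above is only heuristic, since \(\widehat E\) mixes different samples through \(\widehat\Sigma_xA^\star\widehat\Sigma_y\). I would handle this by expanding \(\widehat E\) into its \(O(1/N)\) single-sample linear part plus \(U\)-statistic cross terms. The cross terms are orthogonal to the linear part and add nonnegatively to the second moment, so it suffices to lower bound the linear (Hájek) projection, which is the single-sample computation above. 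Combining Steps 1–3 and summing over \(i\ne j\) with \(i,j\ge i_0\) gives the claim.
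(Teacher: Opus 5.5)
Steps 1--2 follow the paper's route: whiten, compare the empirical filter with the population filter, diagonalize, and reduce to a coordinatewise noise bound. Step 3, however, has a genuine gap that goes beyond the ``heuristic'' caveat you flag: the single-sample object you analyze is not the linear (H\'ajek) part of \(\widehat E\). The matrix \(\widehat E\) is not an average of i.i.d.\ centered terms, because \(\widehat\Sigma_xA^\star\widehat\Sigma_y\) is a double sum over sample pairs \((n,m)\). Write \(\widehat\Sigma_x=\Sigma+\Delta_x\), \(\widehat\Sigma_y=\Sigma+\Delta_y\), \(\widehat C=C_M+\Delta_{xy}\) and use \(\Sigma A^\star\Sigma=C_M\). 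Then
\[
    \widehat E=\Delta_{xy}-\Delta_xA^\star\Sigma-\Sigma A^\star\Delta_y-\Delta_xA^\star\Delta_y ,
\]
so the linear part replaces one factor of the product by its mean \(\Sigma\), not by the same sample. In your coordinates its per-sample summand is \(x_iy_j-\kappa_jx_ix_j-\kappa_iy_iy_j\). By contrast, \(x_iy_jS\) with \(S=\sum_k\frac{\kappa_k}{\mu_k}x_ky_k\) comes only from the \(n=m\) terms of \(\Delta_xA^\star\Delta_y\) and carries an extra factor \(1/N\). Your estimate \(\mu_i\mu_j\,\mathbb E(1-S)^2\gtrsim\mu_i\mu_j\) therefore controls the wrong quantity.

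The difference is not cosmetic. In the degenerate configuration \(\kappa_i=0\), \(\kappa_j=1\), we have \(y_j=x_j\) almost surely and the correct summand is \(x_i(y_j-x_j)\equiv0\). So the order-\(1/N\) part of \(\mathbb E[\widehat Z_{ij}^2]\) vanishes, while your expression stays nondegenerate. This is exactly why the hypothesis \(\kappa_i,\kappa_j\le1-c_\kappa\) is needed, and your sketch never genuinely uses it.

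The missing computation is Lemma~\ref{lem:contrastive-noise-coordinate-lower} in the paper. In whitened variables, put \(\eta_i=\kappa_i\xi_i+\sqrt{1-\kappa_i^2}\,\zeta_i\) and \(\eta_j=\kappa_j\xi_j+\sqrt{1-\kappa_j^2}\,\zeta_j\). Expand \(f_{ij}=\xi_i\eta_j-\kappa_j\xi_i\xi_j-\kappa_i\eta_i\eta_j\) in the \(L^2\)-orthogonal monomials \(\xi_i\xi_j,\xi_i\zeta_j,\zeta_i\xi_j,\zeta_i\zeta_j\). The coefficient of \(\xi_i\zeta_j\) is \((1-\kappa_i^2)\sqrt{1-\kappa_j^2}\), which is bounded below under the hypothesis. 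Your Hoeffding-orthogonality idea is a legitimate, arguably cleaner, way to discard the degenerate \(n\neq m\) part of \(\Delta_xA^\star\Delta_y\), in place of the paper's separate bound \(\mathbb E[(G_xK_\star G_y)_{ij}^2]\lesssim N^{-2}\). It only works once applied to this correct linear summand, with the \(n=m\) contribution treated as a lower-order correction.

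Step 1 is also more complicated than necessary, and as you set it up it does not give the lemma as stated. Lemma~\ref{lem:variance-filter-lower-comparison} is a pointwise multiplicative comparison: \(\|\widehat{\mathscr V}^{\mathrm w}_L(Z)\|_F\ge(1-Cc_{\mathrm{rel}})\|\mathscr V_L(Z)\|_F\) for every \(Z\). Taking \(Z=\widehat Z\) gives \(\operatorname{Var}_L\gtrsim\mathbb E\|\mathscr V_L(\widehat Z)\|_F^2\) with no additive error. Absorbing an additive error of size \(c_{\mathrm{rel}}N^{-1}\sum_{i,j\le M}\min\{1,(R\mu_i\mu_j)^2\}\) into the restricted off-diagonal sum over \(i,j\ge i_0\) would make the constants depend on \(i_0\). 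It would also force the absolute constant \(c_{\mathrm{rel}}\) in \(\mathcal E_{\mathrm{cov}}(R)\) to shrink with \(i_0\). Your fallback of absorbing only after specializing to \(D_{\times}(R,M)\) would prove a weaker statement.

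Finally, the filter lower bound you need is Lemma~\ref{lem:scalar-variance-filter-lower}, not Lemma~\ref{lem:scalar-variance-filter}. With these repairs your argument coincides with the paper's; as written, the essential coordinate noise bound is not established.
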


\begin{proof}
We work in whitened coordinates. For any matrix \(B\), define
\[
    Z_B:=\Sigma^{1/2}B\Sigma^{1/2}.
\]
Then
\[
    \|B\|_{\Sigma,\Sigma}=\|Z_B\|_F.
\]
Let
\[
    \widehat Z
    :=
    \Sigma^{1/2}\widehat E\Sigma^{1/2}.
\]
Let \(\widehat{\mathscr V}^{\mathrm w}_L\) denote the empirical variance filter in these
whitened coordinates. Then
\[
    \left\|
        \widehat{\mathscr V}_L(\widehat E)
    \right\|_{\Sigma,\Sigma}^2
    =
    \left\|
        \widehat{\mathscr V}^{\mathrm w}_L(\widehat Z)
    \right\|_F^2.
\]
Therefore,
\[
    \operatorname{Var}_L
    =
    \frac12
    \mathbb E
    \left[
        \left\|
            \widehat{\mathscr V}^{\mathrm w}_L(\widehat Z)
        \right\|_F^2
    \right].
\]

By Lemma~\ref{lem:variance-filter-lower-comparison},
\[
    \mathbb E
    \left[
        \left\|
            \widehat{\mathscr V}^{\mathrm w}_L(\widehat Z)
        \right\|_F^2
    \right]
    \gtrsim
    \mathbb E
    \left[
        \left\|
            \mathscr V_L(\widehat Z)
        \right\|_F^2
    \right],
\]
where
\[
    \mathscr V_L
    :=
    \sum_{t=1}^L
    \gamma_t
    \prod_{s=t+1}^L
    (I-\gamma_s\mathscr H),
    \qquad
    \mathscr H(Z):=\Sigma Z\Sigma.
\]

Since \(\mathscr H\) is diagonal in the basis
\[
    \{v_iv_j^\top:1\le i,j\le M\},
\]
with eigenvalue
\[
    s_{ij}:=\mu_i\mu_j
\]
on direction \(v_iv_j^\top\), the population variance filter satisfies
\[
    \mathscr V_L(v_iv_j^\top)
    =
    g_L(s_{ij})v_iv_j^\top,
\]
where
\[
    g_L(s):=
    \sum_{t=1}^L
    \gamma_t
    \prod_{r=t+1}^L(1-\gamma_r s).
\]
Therefore,
\[
\begin{aligned}
    \mathbb E
    \left[
        \left\|
            \mathscr V_L(\widehat Z)
        \right\|_F^2
    \right]
    &=
    \sum_{i,j=1}^M
    g_L(\mu_i\mu_j)^2
    \mathbb E[\widehat Z_{ij}^2]  \\
    &\ge
    \sum_{\substack{i,j\le M\\ i,j\ge i_0\\ i\neq j}}
    g_L(\mu_i\mu_j)^2
    \mathbb E[\widehat Z_{ij}^2].
\end{aligned}
\]
By Lemma~\ref{lem:contrastive-noise-coordinate-lower}, for
\(i,j\ge i_0\) and \(i\neq j\),
\[
    \mathbb E[\widehat Z_{ij}^2]
    \gtrsim
    \frac{\mu_i^2\mu_j^2}{N}.
\]
Hence
\[
\begin{aligned}
    \mathbb E
    \left[
        \left\|
            \mathscr V_L(\widehat Z)
        \right\|_F^2
    \right]
    &\gtrsim
    \frac1N
    \sum_{\substack{i,j\le M\\ i,j\ge i_0\\ i\neq j}}
    g_L(\mu_i\mu_j)^2
    \mu_i^2\mu_j^2.
\end{aligned}
\]
By Lemma~\ref{lem:scalar-variance-filter-lower},
\[
    g_L(\mu_i\mu_j)^2\mu_i^2\mu_j^2
    \gtrsim
    \min\{1,(R\mu_i\mu_j)^2\}.
\]
Therefore,
\[
    \mathbb E
    \left[
        \left\|
            \mathscr V_L(\widehat Z)
        \right\|_F^2
    \right]
    \gtrsim
    \frac1N
    \sum_{\substack{i,j\le M\\ i,j\ge i_0\\ i\neq j}}
    \min\{1,(R\mu_i\mu_j)^2\}.
\]
Combining this with the empirical-to-population filter comparison gives
\[
    \operatorname{Var}_L
    \gtrsim
    \frac1N
    \sum_{\substack{i,j\le M\\ i,j\ge i_0\\ i\neq j}}
    \min\{1,(R\mu_i\mu_j)^2\}.
\]
This proves the claim.
\end{proof}

% ============================================================
% Specific GD-variance scaling
% ============================================================

\subsection{Specific GD-variance scaling}
\label{subsec:gd-variance-specific}

We now specialize the general GD-variance bounds using the power-law notation
\(R:=L_{\mathrm{eff}}\gamma\) and \(T:=R^{1/b}\). By
Assumption~\mainref{ass:contrastive-source}, \(\mu_i\asymp i^{-b}\); in particular, \(\|\Sigma\|\lesssim1\).
Define the product effective dimension
\[
    d_{\mathrm{prod}}(R,M)
    :=
    \sum_{i,j=1}^M
    \min\{1,(R\mu_i\mu_j)^2\}.
\]
We also use the piecewise product scale
\[
    D_{\times}(R,M)
    :=
    \begin{cases}
    T\log(eT), & 1\le T\le M, \\
    T\left(1+\log\dfrac{M^2}{T}\right), & M<T<M^2, \\
    M^2, & T\ge M^2.
    \end{cases}
\]
By Assumption~\mainref{ass:contrastive-source},
\[
    R\mu_i\mu_j
    \asymp
    R(ij)^{-b}.
\]
Thus
\[
    d_{\mathrm{prod}}(R,M)
    \asymp
    \sum_{i,j=1}^M
    \min\{1,R^2(ij)^{-2b}\}.
\]

\begin{theorem}[Specific GD-variance bounds]
\label{thm:gd-variance-specific}
Suppose Assumptions~\mainref{ass:aligned-power-laws},
\mainref{ass:gd-schedule}, and \mainref{ass:contrastive-source} hold, and
suppose that \(\mathcal E_{\mathrm{cov}}(R)\) occurs. Suppose also that
\(N\gtrsim M\). Then
\[
    \operatorname{Var}_L
    \lesssim
    \frac{1}{N}
    d_{\mathrm{prod}}(R,M).
\]
Moreover,
\[
    d_{\mathrm{prod}}(R,M)
    \lesssim
    D_{\times}(R,M),
\]
and therefore
\[
    \boxed{
    \operatorname{Var}_L
    \lesssim
    \frac{D_{\times}(R,M)}{N}.
    }
\]

For the lower bound, if \(T\gtrsim1\), then
\[
    \operatorname{Var}_L
    \gtrsim
    \frac{D_{\times}(R,M)}{N}.
\]
Consequently,
\[
    \boxed{
    \operatorname{Var}_L
    \asymp
    \frac{D_{\times}(R,M)}{N}.
    }
\]
\end{theorem}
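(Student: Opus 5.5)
The proof follows directly from the two general lemmas once the product sum is evaluated under the power-law spectrum. For the upper bound, invoke Lemma~\ref{lem:gd-variance-upper-general}; its hypotheses ($\mathcal E_{\mathrm{cov}}(R)$, $N\gtrsim M$, and $\|\Sigma\|\lesssim1$ from $\mu_i\asymp i^{-b}$) are exactly those of the theorem. This gives $\operatorname{Var}_L\lesssim d_{\mathrm{prod}}(R,M)/N$. For the lower bound, invoke Lemma~\ref{lem:gd-variance-lower-general}. Its nondegeneracy condition holds automatically: $\kappa_i\asymp i^{-\delta}\to0$, so there is a constant $i_0$, depending only on the assumption constants, with $\kappa_i\le 1/2$ for $i\ge i_0$. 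The lemma then gives
$\operatorname{Var}_L\gtrsim N^{-1}\sum_{i\ne j,\; i,j\in[i_0,M]}\min\{1,R^2(ij)^{-2b}\}$.
It therefore remains to show that both the full sum and this restricted off-diagonal sum are $\asymp D_\times(R,M)$.

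To evaluate the sum, write $T=R^{1/b}$, so that $R^2(ij)^{-2b}=(T/(ij))^{2b}$. Group the pairs by the product $n=ij$. The number of pairs $(i,j)\in[M]^2$ with $ij\in[n,2n)$ is $\asymp n\log n$ when $n\le M$, and $\asymp n(1+\log(M^2/n))$ when $M<n\le M^2$. (Use $\sum_{i\le M}\#\{j\le M: ij\sim n\}\asymp\sum_{i\le \min(M,n)} \min(n/i,M)$.) Pairs with $ij\le T$ contribute $1$ each, so their count is the number of lattice points under the hyperbola $ij\le T$ inside the box. Pairs with $ij>T$ contribute $(T/ij)^{2b}$; since $2b>1$, summing this over dyadic shells gives a geometric series dominated by the shell $n\asymp T$, of size at most $T\cdot(\log$ factor at $T)$. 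The three cases then fall out. When $T\le M$ the hyperbola count is $T\log(eT)$. When $M<T<M^2$ it is $T(1+\log(M^2/T))$. When $T\ge M^2$ all $M^2$ pairs are saturated, and the tail is empty or bounded by $M^2$. This proves $d_{\mathrm{prod}}\lesssim D_\times$.

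For the matching lower bound, count only the saturated pairs $ij\le T$ with $i,j\ge i_0$ and $i\ne j$. Removing the diagonal costs at most $\min(M,\sqrt T)\ll D_\times$. Removing rows and columns below the constant $i_0$ costs $O(\min(T,M))$. In the regime $T\le M$ this is $O(T)$, which is negligible against $T\log(eT)$ only when $T$ is large. When $T\asymp1$ (allowed since $T\gtrsim1$), I would instead keep the pair $(i_0,i_0+1)$, giving $\min\{1,(T/(i_0(i_0+1)))^{2b}\}\gtrsim1\asymp D_\times$ with constants depending on $i_0$. In the middle regime the loss is $O(M)$ against $T(1+\log(M^2/T))\ge T>M$; this needs more care when $T\asymp M$, but there $T\log$ still dominates $M$ up to constants if one counts the region $i,j\ge 2i_0$, whose hyperbola count is still $\asymp T/(4i_0^2)\cdot\log$. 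The saturated regime retains $\asymp M^2$ pairs.

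\textbf{Main obstacle.} The lattice-point bookkeeping is routine. The delicate step is making the lower bound uniform across regime boundaries, in particular $T$ of constant order and $T\asymp M$, after the index restrictions $i,j\ge i_0$, $i\ne j$. My plan there is to rescale: count pairs with $i,j\ge 2i_0$ and $ij\le T/(4i_0^2)$, which reduces to the same hyperbola count with $T$ replaced by a constant multiple. This works because $D_\times(cT,M)\asymp D_\times(T,M)$ for fixed $c>0$.
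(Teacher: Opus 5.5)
Your proposal is correct and follows essentially the same proof as the paper. It uses the same two general lemmas (with $i_0$ obtained from $\kappa_i\to0$), the same split of $d_{\mathrm{prod}}$ into saturated pairs $ij\le T$ plus a tail controlled by $2b>1$ (you organize it by dyadic shells in $ij$, the paper by row sums), and the same single-pair $(i_0,i_0+1)$ argument for bounded $T$. Your rescaling $(i',j')\mapsto(2i_0i',2i_0j')$ is a more explicit version of the counting in Lemma~\ref{lem:product-effective-dimension-piecewise-lower}. Note that it also shrinks the box to side $M/(2i_0)$, so the fact you need is $D_\times(T/(4i_0^2),M/(2i_0))\asymp D_\times(T,M)$ for $T$ large, not just rescaling in $T$. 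This holds because the rescaling preserves $M^2/T$.
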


\begin{proof}
The general GD-variance upper bound in
Lemma~\ref{lem:gd-variance-upper-general} gives
\[
    \operatorname{Var}_L
    \lesssim
    \frac1N
    \sum_{i,j=1}^M
    \min\{1,(R\mu_i\mu_j)^2\}
    =
    \frac1N d_{\mathrm{prod}}(R,M).
\]
It remains to estimate \(d_{\mathrm{prod}}(R,M)\).

Using Assumption~\mainref{ass:contrastive-source}, we have
\[
    d_{\mathrm{prod}}(R,M)
    \lesssim
    \sum_{i,j=1}^M
    \min\{1,R^2(ij)^{-2b}\}.
\]
Since \(T=R^{1/b}\),
\[
    R^2(ij)^{-2b}
    =
    \left(\frac{T}{ij}\right)^{2b}.
\]
We split the sum into the region \(ij\le T\) and the region \(ij>T\):
\[
    d_{\mathrm{prod}}(R,M)
    \lesssim
    \underbrace{
    \sum_{\substack{i,j\le M\\ ij\le T}}1
    }_{S_1}
    +
    \underbrace{
    \sum_{\substack{i,j\le M\\ ij>T}}
    \left(\frac{T}{ij}\right)^{2b}
    }_{S_2}.
\]

We first bound \(S_1\). If \(T\le M\), then
\[
    S_1
    \le
    \sum_{i\le T}\frac{T}{i}
    \lesssim
    T\log(eT).
\]
If \(M<T<M^2\), then
\[
\begin{aligned}
    S_1
    &\le
    \sum_{i\le T/M}M
    +
    \sum_{T/M<i\le M}\frac{T}{i}  \\
    &\lesssim
    T
    +
    T\log\frac{M^2}{T}.
\end{aligned}
\]
If \(T\ge M^2\), then \(S_1\le M^2\). Hence \(S_1\lesssim D_{\times}(R,M)\).

We next bound \(S_2\). Since \(b>1/2\), we have \(2b>1\). If \(T\le M\), then
\[
\begin{aligned}
    S_2
    &\lesssim
    T^{2b}
    \sum_{i\le T}i^{-2b}\left(\frac{T}{i}\right)^{1-2b}
    +
    T^{2b}\sum_{i>T}i^{-2b}  \\
    &\lesssim
    T\sum_{i\le T}\frac1i
    +T
    \lesssim
    T\log(eT).
\end{aligned}
\]
If \(M<T<M^2\), the tail region is nonempty only for \(i>T/M\), and therefore
\[
\begin{aligned}
    S_2
    &\lesssim
    T^{2b}
    \sum_{T/M<i\le M}
    i^{-2b}
    \left(\frac{T}{i}\right)^{1-2b}  \\
    &=
    T
    \sum_{T/M<i\le M}\frac1i
    \lesssim
    T\log\frac{M^2}{T}.
\end{aligned}
\]
If \(T\ge M^2\), then \(S_2=0\). Combining the estimates for \(S_1\) and
\(S_2\), we obtain
\[
    d_{\mathrm{prod}}(R,M)
    \lesssim
    D_{\times}(R,M).
\]
Thus
\[
    \operatorname{Var}_L
    \lesssim
    \frac{D_{\times}(R,M)}{N}.
\]

We now prove the lower bound. By Lemma~\ref{lem:gd-variance-lower-general},
we need a fixed index cutoff on which the source coefficients are uniformly
bounded away from one. This non-degeneracy follows automatically here: by
Assumption~\mainref{ass:contrastive-source},
\(\kappa_i\asymp i^{-\delta}\) with \(\delta>0\), so there exist an integer
\(i_0\ge1\) and a constant \(c_\kappa\in(0,1)\), independent of \(M,N,L\), such
that
\[
    \kappa_i\le 1-c_\kappa,
    \qquad i\ge i_0.
\]
Applying Lemma~\ref{lem:gd-variance-lower-general} with this choice gives
\[
    \operatorname{Var}_L
    \gtrsim
    \frac1N
    \sum_{\substack{i,j\le M\\i,j\ge i_0\\i\neq j}}
    \min\{1,(R\mu_i\mu_j)^2\}.
\]
By Lemma~\ref{lem:product-effective-dimension-piecewise-lower}, if \(T\gtrsim1\)
and \(M\) is sufficiently large relative to \(i_0\), then
\[
    \sum_{\substack{i,j\le M\\i,j\ge i_0\\i\neq j}}
    \min\{1,(R\mu_i\mu_j)^2\}
    \gtrsim
    D_{\times}(R,M).
\]
Therefore,
\[
    \operatorname{Var}_L
    \gtrsim
    \frac{D_{\times}(R,M)}{N}.
\]
Combining the upper and lower bounds proves
\[
    \operatorname{Var}_L
    \asymp
    \frac{D_{\times}(R,M)}{N}.
\]
\end{proof}
% ============================================================
% Auxiliary lemmas for the contrastive approximation analysis
% and the GD-bias analysis
% ============================================================

% Recommended packages:
% \usepackage{amsmath,amssymb,amsthm}
% \usepackage{natbib}

% Throughout, for PSD matrices A and C, define
% \|B\|_{A,C}^2 := \operatorname{tr}(B^\top A B C).
% In particular,
% \|B\|_{\Sigma,\Sigma}^2
% =
% \operatorname{tr}(B^\top \Sigma B\Sigma).

% ============================================================
% Lemmas used for the approximation error
% ============================================================

\section{Auxiliary Lemmas}
\label{app:auxiliary}

% ============================================================
% Lemmas used for empirical covariance concentration
% ============================================================

\begin{lemma}[High-probability covariance event]
\label{lem:covariance-event-concentration}
Fix the sketch matrix \(S\) and assume \(\Sigma=SHS^\top\) is positive
definite. Let
\[
    R:=L_{\mathrm{eff}}\gamma.
\]
There exist absolute constants \(C,c>0\) such that, for every \(t\ge0\),
conditional on \(S\),
\[
\max_{\sharp\in\{x,y\}}
\left\|
\Sigma^{-1/2}\widehat\Sigma_{\sharp}\Sigma^{-1/2}-I
\right\|
\le
C\left(
    \sqrt{\frac{M+t}{N}}
    +
    \frac{M+t}{N}
\right)
\]
with probability at least \(1-2\exp(-ct)\). Consequently, if
\[
    N\ge C R(M+t),
\]
then \(\mathcal E_{\mathrm{cov}}(R)\) holds with probability at least
\(1-2\exp(-ct)\) after increasing \(C\) by a constant depending only on
\(c_{\mathrm{rel}}\). In particular, taking \(t\asymp M\),
\(\mathcal E_{\mathrm{cov}}(R)\) holds with probability at least
\(1-\exp(-\Omega(M))\) whenever
\[
    N\gtrsim RM,
    \qquad\text{equivalently}\qquad
    R\lesssim N/M .
\]
\end{lemma}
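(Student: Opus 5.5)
The argument reduces to a standard Gaussian sample-covariance bound after whitening. Fix $S$ and set $g_n:=\Sigma^{-1/2}\widetilde x_n$ for $n=1,\dots,N$. Since $x_n\sim\mathcal N(0,H)$ i.i.d. (the marginal of $z+\epsilon_x$), we have $\widetilde x_n=Sx_n\sim\mathcal N(0,SHS^\top)=\mathcal N(0,\Sigma)$. Hence $g_n\sim\mathcal N(0,I_M)$ i.i.d., and
\[
    \Sigma^{-1/2}\widehat\Sigma_x\Sigma^{-1/2}-I=\frac1N\sum_{n=1}^N g_ng_n^\top-I .
\]
The same holds for $y$, because $y$ has the same marginal covariance $H$. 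The cross-dependence between $x_n$ and $y_n$ is irrelevant here, since each marginal is treated separately and we finish with a union bound over $\sharp\in\{x,y\}$.

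For a single view we use a covering argument. Take a $1/4$-net $\mathcal N$ of the unit sphere $S^{M-1}$ with $|\mathcal N|\le 9^M$. For a fixed unit $u$, the quantity $u^\top(\frac1N\sum g_ng_n^\top-I)u=\frac1N\sum(\langle g_n,u\rangle^2-1)$ is an average of centered sub-exponential variables (centered $\chi^2_1$). Bernstein's inequality then gives a deviation of at most $C(\sqrt{s/N}+s/N)$ with probability at least $1-2e^{-cs}$.

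Next we choose $s=C'(M+t)$ so that the union bound over $9^M$ net points costs only $e^{-ct}$. The standard net lemma for symmetric matrices, $\|A\|\le 2\max_{u\in\mathcal N}|u^\top Au|$, then transfers the bound to the operator norm. This yields the first display, with probability at least $1-2\exp(-ct)$ after adjusting constants and union-bounding over the two views. Alternatively, one can cite Vershynin's covariance estimation theorem for Gaussian vectors directly.

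For the consequence, assume $N\ge CR(M+t)$ with $R\gtrsim1$. Then $\frac{M+t}{N}\le\frac{1}{CR}\le\frac{1}{\sqrt{C}}\sqrt{\frac{M+t}{N}}$. Both terms are therefore at most a constant times $(CR)^{-1/2}$, and enlarging $C$, depending on $c_{\mathrm{rel}}$, makes the bound at most $c_{\mathrm{rel}}R^{-1/2}$. This is exactly the event $\mathcal E_{\mathrm{cov}}(R)$.

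Finally, take $t=M$. The requirement becomes $N\gtrsim RM$, i.e.\ $R\lesssim N/M$, and the failure probability is $2e^{-cM}=\exp(-\Omega(M))$.

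The only nontrivial step is the net/Bernstein estimate, and even that is standard. The one care point is ensuring that the condition $N\ge CR(M+t)$ controls both the $\sqrt{\cdot}$ and the linear terms simultaneously, which uses $R\gtrsim1$ from Assumption~\ref{ass:gd-schedule}.
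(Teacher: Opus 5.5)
Your proposal is correct and follows the paper's proof. Both arguments whiten to i.i.d.\ $\mathcal N(0,I_M)$ vectors, apply the standard Gaussian sample-covariance bound (which you derive via a $1/4$-net and Bernstein, where the paper simply cites it), union-bound over the two views, and use $N\ge CR(M+t)$ with $R\gtrsim1$ to reach the threshold $c_{\mathrm{rel}}R^{-1/2}$. One small slip: the middle inequality $\frac{1}{CR}\le\frac{1}{\sqrt C}\sqrt{\frac{M+t}{N}}$ fails when $N\gg CR(M+t)$, but your conclusion holds directly, since $\frac{M+t}{N}\le (CR)^{-1}\le (CR)^{-1/2}$ once $CR\ge1$.
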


\begin{proof}
Conditional on \(S\), the whitened variables
\[
    \xi:=\Sigma^{-1/2}\widetilde x,
    \qquad
    \eta:=\Sigma^{-1/2}\widetilde y
\]
have standard Gaussian marginal distributions in \(\mathbb R^M\). Hence
\begin{align*}
    \Sigma^{-1/2}\widehat\Sigma_x\Sigma^{-1/2}
    =
    \frac1N\sum_{n=1}^N \xi_n\xi_n^\top \\
    \Sigma^{-1/2}\widehat\Sigma_y\Sigma^{-1/2}
    =
    \frac1N\sum_{n=1}^N \eta_n\eta_n^\top.
\end{align*}
The standard Gaussian sample-covariance concentration bound gives, for each
\(\sharp\in\{x,y\}\),
\[
\left\|
\Sigma^{-1/2}\widehat\Sigma_{\sharp}\Sigma^{-1/2}-I
\right\|
\le
C\left(
    \sqrt{\frac{M+t}{N}}
    +
    \frac{M+t}{N}
\right)
\]
with probability at least \(1-\exp(-ct)\). A union bound over the two marginals
proves the first display. If \(N\ge C R(M+t)\), then the first term is at most a
constant multiple of \(R^{-1/2}\), while the second is no larger than a constant
multiple of \(R^{-1}\), hence also of \(R^{-1/2}\) in the non-degenerate regime
\(R\gtrsim1\). Choosing the constant in the sample-size condition sufficiently
large relative to \(c_{\mathrm{rel}}\) yields \(\mathcal E_{\mathrm{cov}}(R)\).
\end{proof}

% ============================================================
% Lemmas used for the GD-bias upper bound
% ============================================================

\begin{lemma}[Two-sided norm equivalence on the covariance event]
\label{lem:covariance-event-norm-equivalence}
Suppose \(\mathcal E_{\mathrm{cov}}(R)\) occurs. Let
\[
    \rho:=R^{-1/2},
    \qquad
    c_L:=1-c_{\mathrm{rel}}\rho,
    \qquad
    c_U:=1+c_{\mathrm{rel}}\rho.
\]
Then, for every matrix \(B\),
\[
    c_L^2\|B\|_{\Sigma,\Sigma}^2
    \le
    \|B\|_{\widehat\Sigma_x,\widehat\Sigma_y}^2
    \le
    c_U^2\|B\|_{\Sigma,\Sigma}^2.
\]
\end{lemma}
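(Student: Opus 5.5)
The plan is to reduce both inequalities to an operator comparison in whitened coordinates. For any \(B\), set \(Z:=\Sigma^{1/2}B\Sigma^{1/2}\), so that \(\|B\|_{\Sigma,\Sigma}^2=\|Z\|_F^2\). On \(\mathcal E_{\mathrm{cov}}(R)\) write, as in the proof of Lemma~\ref{lem:gd-bias-lower-general},
\[
    \widehat\Sigma_x=\Sigma^{1/2}G_x\Sigma^{1/2},
    \qquad
    \widehat\Sigma_y=\Sigma^{1/2}G_y\Sigma^{1/2},
\]
where \(G_\sharp:=\Sigma^{-1/2}\widehat\Sigma_\sharp\Sigma^{-1/2}\) is symmetric positive semidefinite. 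The event gives \(\|G_\sharp-I\|\le c_{\mathrm{rel}}\rho\), hence
\[
    c_L I\preceq G_\sharp\preceq c_U I,
\]
and \(c_L>0\) as soon as \(c_{\mathrm{rel}}\rho<1\). This holds because \(R\gtrsim1\) and \(c_{\mathrm{rel}}\) is small.

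Next I expand the empirical norm:
\[
    \|B\|_{\widehat\Sigma_x,\widehat\Sigma_y}^2
    =\operatorname{tr}\bigl(B^\top\Sigma^{1/2}G_x\Sigma^{1/2}B\Sigma^{1/2}G_y\Sigma^{1/2}\bigr)
    =\operatorname{tr}(Z^\top G_xZG_y),
\]
using cyclicity of the trace and the definition of \(Z\). So it suffices to show that for symmetric \(G_x,G_y\) with spectra in \([c_L,c_U]\),
\[
    c_L^2\|Z\|_F^2\le\operatorname{tr}(Z^\top G_xZG_y)\le c_U^2\|Z\|_F^2 .
\]

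To prove this, write \(\operatorname{tr}(Z^\top G_xZG_y)=\|G_x^{1/2}ZG_y^{1/2}\|_F^2\). The upper bound follows from \(\|PZQ\|_F\le\|P\|\,\|Z\|_F\,\|Q\|\), which gives \(\|G_x^{1/2}\|^2\|G_y^{1/2}\|^2\le c_U^2\). For the lower bound, write \(Z=G_x^{-1/2}(G_x^{1/2}ZG_y^{1/2})G_y^{-1/2}\). This gives
\[
    \|Z\|_F\le\|G_x^{-1/2}\|\,\|G_x^{1/2}ZG_y^{1/2}\|_F\,\|G_y^{-1/2}\|
    \le c_L^{-1}\|G_x^{1/2}ZG_y^{1/2}\|_F .
\]
An alternative is to diagonalize \(G_x\) and \(G_y\): the map \(Z\mapsto G_xZG_y\) is then the Kronecker operator \(G_y\otimes G_x\), whose eigenvalues are products \(\alpha_i\beta_j\in[c_L^2,c_U^2]\).

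There is no real obstacle. The only care needed is to confirm \(c_L>0\), so that the whitened empirical covariances are invertible, and to note that \(\Sigma\) is positive definite, so that the change of variables \(B\mapsto Z\) is a bijection. Both follow from Assumption~\ref{ass:contrastive-source} and the smallness of \(c_{\mathrm{rel}}\rho\).
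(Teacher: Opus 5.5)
Your proof is correct, but it takes a different route from the paper's.

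The paper never whitens. It reads off the Loewner sandwiches $c_L\Sigma\preceq\widehat\Sigma_\sharp\preceq c_U\Sigma$ from $\mathcal E_{\mathrm{cov}}(R)$. It then shows that $(A,C)\mapsto\operatorname{tr}(B^\top ABC)$ is monotone on PSD matrices by writing $\operatorname{tr}(B^\top A_2BC_2)-\operatorname{tr}(B^\top A_1BC_1)$ as $\operatorname{tr}(B^\top(A_2-A_1)BC_2)+\operatorname{tr}(B^\top A_1B(C_2-C_1))$. Each term is a trace of a product of two PSD matrices. Applying this twice and pulling out the scalars gives the $c_L^2$ and $c_U^2$ factors.

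You instead pass to $Z=\Sigma^{1/2}B\Sigma^{1/2}$ and rewrite the empirical norm as $\|G_x^{1/2}ZG_y^{1/2}\|_F^2$. You then conclude by operator-norm submultiplicativity, or by the Kronecker spectrum $\{\alpha_i\beta_j\}\subset[c_L^2,c_U^2]$.

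What each approach buys:
\begin{itemize}
\item The paper's version needs no square roots or inverses. It applies verbatim to arbitrary PSD sandwiches $A_1\preceq\widehat\Sigma_x\preceq A_2$, $C_1\preceq\widehat\Sigma_y\preceq C_2$, not only to scalar multiples of a common $\Sigma$.
\item Yours works in the same whitened coordinates the paper later uses for the bias lower bound and the variance-filter comparison. The Kronecker form also identifies the exact spectrum of $Z\mapsto G_xZG_y$, which is the coordinatewise structure behind the empirical product-filter lemma.
\end{itemize}

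You also explicitly check $c_L>0$ from $R\gtrsim1$ and the smallness of $c_{\mathrm{rel}}$. The paper's argument needs this too, since its second term is nonnegative only when $c_L\Sigma\succeq0$, but it leaves the point implicit. Without it the stated lower bound can fail: if $c_{\mathrm{rel}}\rho>1$, the event permits $\widehat\Sigma_x=0$, and then $c_L^2\|B\|_{\Sigma,\Sigma}^2>0=\|B\|_{\widehat\Sigma_x,\widehat\Sigma_y}^2$ for every $B\neq0$.
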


\begin{proof}
By \(\mathcal E_{\mathrm{cov}}(R)\),
\[
    c_L\Sigma\preceq \widehat\Sigma_x\preceq c_U\Sigma,
    \qquad
    c_L\Sigma\preceq \widehat\Sigma_y\preceq c_U\Sigma.
\]
For PSD matrices \(A_1\preceq A_2\) and \(C_1\preceq C_2\), we have
\[
    \operatorname{tr}(B^\top A_1BC_1)
    \le
    \operatorname{tr}(B^\top A_2BC_2).
\]
Indeed,
\[
    \operatorname{tr}(B^\top A_2BC_2)
    -
    \operatorname{tr}(B^\top A_1BC_1)
\]
equals
\[
    \operatorname{tr}(B^\top(A_2-A_1)BC_2)
    +
    \operatorname{tr}(B^\top A_1B(C_2-C_1)),
\]
and both terms are nonnegative. Applying this monotonicity twice gives
\[
    \|B\|_{\widehat\Sigma_x,\widehat\Sigma_y}^2
    \le
    c_U^2\|B\|_{\Sigma,\Sigma}^2
\]
and
\[
    \|B\|_{\widehat\Sigma_x,\widehat\Sigma_y}^2
    \ge
    c_L^2\|B\|_{\Sigma,\Sigma}^2.
\]
This proves the claim.
\end{proof}

\begin{lemma}[Scalar GD filter]
\label{lem:scalar-gd-filter}
Suppose the stepsize schedule in Assumption~\mainref{ass:gd-schedule} holds and
the stepsizes satisfy
\[
    0\le \gamma_t s\le 1,
    \qquad t=1,\ldots,L.
\]
Let
\[
    \psi_L(s):=\prod_{t=1}^L(1-\gamma_t s).
\]
Then
\[
    0\le \psi_L(s)^2\le 1,
\]
and
\[
    s\psi_L(s)^2
    \lesssim
    \frac{1}{L_{\mathrm{eff}}\gamma}.
\]
\end{lemma}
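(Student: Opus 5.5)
Since each factor satisfies \(0\le 1-\gamma_t s\le 1\) by the hypothesis \(0\le\gamma_t s\le1\), the product \(\psi_L(s)\) lies in \([0,1]\), and so does its square. This settles the first claim with no further work.

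For the second claim I will use \(1-x\le e^{-x}\) to get
\[
    \psi_L(s)^2\le \exp\Bigl(-2s\sum_{t=1}^L\gamma_t\Bigr).
\]
Next I lower bound the accumulated stepsize using the geometric schedule. For the first stage \(\ell=0\), meaning indices \(t<L_{\mathrm{eff}}\), each stepsize equals \(\gamma\). Hence, as soon as \(L_{\mathrm{eff}}\ge 2\) (with the case \(L_{\mathrm{eff}}=1\) absorbed into constants), we have
\[
    \sum_{t=1}^L\gamma_t\ge (L_{\mathrm{eff}}-1)\gamma\gtrsim L_{\mathrm{eff}}\gamma=R.
\]
It then follows that \(s\psi_L(s)^2\le s\,e^{-cRs}\) for an absolute constant \(c>0\). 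Finally, the elementary bound \(\sup_{u\ge0}ue^{-u}=e^{-1}\), applied with \(u=cRs\), gives \(s\,e^{-cRs}\le (ceR)^{-1}\lesssim 1/(L_{\mathrm{eff}}\gamma)\).

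The main point needing care is the edge bookkeeping of the schedule. Note that \(\ell=\lfloor t/L_{\mathrm{eff}}\rfloor\) makes the first stage contain only \(L_{\mathrm{eff}}-1\) indices. We also need \(L\ge L_{\mathrm{eff}}\), which holds since \(L_{\mathrm{eff}}=\lfloor L/\log L\rfloor\le L\) for \(L\ge3\). The assumption \(R\gtrsim1\) from Assumption~\mainref{ass:gd-schedule} covers degenerate small cases, because for \(s\le1/\gamma\) we can fall back on the crude bound \(s\psi_L^2\le s\le 1/\gamma\lesssim 1/R\) whenever \(L_{\mathrm{eff}}\) is bounded. No later stages are needed, since dropping them only weakens the lower bound on \(\sum_t\gamma_t\).
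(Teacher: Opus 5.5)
Your proposal is correct and follows essentially the same route as the paper: the first claim comes from \(0\le 1-\gamma_t s\le 1\), and the second from \(1-x\le e^{-x}\), the lower bound \(\sum_t\gamma_t\gtrsim L_{\mathrm{eff}}\gamma\) supplied by the constant-stepsize first stage of the schedule, and \(\sup_{u\ge0}ue^{-u}=e^{-1}\). Your explicit bookkeeping is slightly more careful than the paper's one-line version, namely that the first stage contains only \(L_{\mathrm{eff}}-1\) indices and that bounded \(L_{\mathrm{eff}}\) is handled by the crude bound \(s\psi_L(s)^2\le s\lesssim 1/\gamma\).
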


\begin{proof}
The first claim follows from \(0\le 1-\gamma_t s\le 1\). For the second
claim, the geometrically decaying schedule keeps a constant-order fraction of
the first \(L_{\mathrm{eff}}\) steps at scale comparable to \(\gamma\). Hence
\[
    \psi_L(s)^2
    \le
    (1-c\gamma s)^{cL_{\mathrm{eff}}}
    \le
    \exp(-cL_{\mathrm{eff}}\gamma s)
\]
for an absolute constant \(c>0\). Therefore,
\[
    s\psi_L(s)^2
    \le
    s\exp(-cL_{\mathrm{eff}}\gamma s)
    \le
    \frac{C}{L_{\mathrm{eff}}\gamma},
\]
because \(\sup_{s\ge 0}s e^{-as}=1/(ae)\). This proves the claim.
\end{proof}

\begin{lemma}[Empirical product-norm filter and contraction]
\label{lem:empirical-product-filter}
Let
\[
    \widehat{\mathscr B}_L
    :=
    \prod_{t=1}^L(I-\gamma_t\widehat{\mathscr H}),
    \qquad
    \widehat{\mathscr H}(B):=\widehat\Sigma_xB\widehat\Sigma_y.
\]
Suppose Assumption~\mainref{ass:gd-schedule} holds,
\(\mathcal E_{\mathrm{cov}}(R)\) occurs, and \(\|\Sigma\|\lesssim1\). If
\(c_\gamma\) is sufficiently small, then, for every matrix \(B\),
\[
    \left\|
        \widehat{\mathscr B}_L(B)
    \right\|_{\widehat\Sigma_x,\widehat\Sigma_y}^2
    \lesssim
    \frac{1}{L_{\mathrm{eff}}\gamma}\|B\|_F^2.
\]
Moreover,
\[
    \left\|
        \widehat{\mathscr B}_L(B)
    \right\|_{\widehat\Sigma_x,\widehat\Sigma_y}^2
    \le
    \|B\|_{\widehat\Sigma_x,\widehat\Sigma_y}^2.
\]
\end{lemma}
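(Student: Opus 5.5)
The plan is to diagonalize the empirical Hessian \(\widehat{\mathscr H}(B)=\widehat\Sigma_xB\widehat\Sigma_y\) exactly, so that both claims reduce to the scalar estimates of Lemma~\ref{lem:scalar-gd-filter}. Write the spectral decompositions
\[
\widehat\Sigma_x=\sum_{i=1}^M\alpha_i p_ip_i^\top,
\qquad
\widehat\Sigma_y=\sum_{j=1}^M\beta_j q_jq_j^\top,
\qquad
\alpha_i,\beta_j\ge0,
\]
with \(\{p_i\}\) and \(\{q_j\}\) orthonormal bases. The rank-one matrices \(\{p_iq_j^\top\}_{i,j}\) form an orthonormal basis of \(\mathbb R^{M\times M}\) in the Frobenius inner product, and \(\widehat{\mathscr H}(p_iq_j^\top)=\alpha_i\beta_j\,p_iq_j^\top\). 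So \(\widehat{\mathscr H}\) is self-adjoint and positive semidefinite, with eigenvalues \(s_{ij}:=\alpha_i\beta_j\).

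The factors \(I-\gamma_t\widehat{\mathscr H}\) are polynomials in the same operator, so they commute and are simultaneously diagonal in this basis. Writing \(b_{ij}:=p_i^\top Bq_j\), we get
\[
\widehat{\mathscr B}_L(B)=\sum_{i,j}\psi_L(s_{ij})\,b_{ij}\,p_iq_j^\top,
\qquad
\psi_L(s)=\prod_{t=1}^L(1-\gamma_ts).
\]
Next, observe that \(\|B\|_{\widehat\Sigma_x,\widehat\Sigma_y}^2=\operatorname{tr}(B^\top\widehat\Sigma_xB\widehat\Sigma_y)=\langle B,\widehat{\mathscr H}(B)\rangle_F=\sum_{i,j}s_{ij}b_{ij}^2\). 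It follows that
\[
\bigl\|\widehat{\mathscr B}_L(B)\bigr\|_{\widehat\Sigma_x,\widehat\Sigma_y}^2
=\sum_{i,j}s_{ij}\,\psi_L(s_{ij})^2\,b_{ij}^2 .
\]

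Before invoking the scalar lemma, I have to check its hypothesis \(0\le\gamma_ts_{ij}\le1\) for all \(t\). This is the only step needing the covariance event and the smallness of \(c_\gamma\). On \(\mathcal E_{\mathrm{cov}}(R)\) we have \(\widehat\Sigma_\sharp\preceq(1+c_{\mathrm{rel}}R^{-1/2})\Sigma\), so \(\|\widehat\Sigma_\sharp\|\le 2\|\Sigma\|\lesssim1\) for \(\sharp\in\{x,y\}\). Hence \(s_{ij}\le\|\widehat\Sigma_x\|\|\widehat\Sigma_y\|\le C_0\) for a constant \(C_0\). Since \(\gamma_t\le\gamma\le c_\gamma\), choosing \(c_\gamma\le 1/C_0\) gives \(\gamma_ts_{ij}\in[0,1]\).

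Lemma~\ref{lem:scalar-gd-filter} then yields two facts. First, \(s\psi_L(s)^2\lesssim 1/(L_{\mathrm{eff}}\gamma)\) uniformly in \(s\in[0,C_0]\). Summing against \(b_{ij}^2\) and using \(\sum_{i,j}b_{ij}^2=\|B\|_F^2\) (orthonormality of the basis) gives the first claim. Second, \(0\le\psi_L(s)^2\le1\), which gives the termwise bound \(s_{ij}\psi_L(s_{ij})^2b_{ij}^2\le s_{ij}b_{ij}^2\). Summing this bound gives the contraction \(\|\widehat{\mathscr B}_L(B)\|_{\widehat\Sigma_x,\widehat\Sigma_y}^2\le\|B\|_{\widehat\Sigma_x,\widehat\Sigma_y}^2\).

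The main obstacle is the stepsize-stability check above, since it is where the event \(\mathcal E_{\mathrm{cov}}(R)\) and \(\|\Sigma\|\lesssim1\) enter. A secondary dependency is the decay estimate \(\psi_L(s)^2\le\exp(-cL_{\mathrm{eff}}\gamma s)\) for the geometric schedule, which Lemma~\ref{lem:scalar-gd-filter} supplies because the first \(L_{\mathrm{eff}}\) steps use stepsize \(\gamma\). Everything else is exact linear algebra, and it works because the Kronecker-type structure makes \(\widehat{\mathscr H}\) diagonal even though \(\widehat\Sigma_x\neq\widehat\Sigma_y\).
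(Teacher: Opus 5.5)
Your proposal is correct and takes essentially the same route as the paper. You diagonalize $\widehat\Sigma_x$ and $\widehat\Sigma_y$ so that $\widehat{\mathscr H}$ acts coordinatewise with eigenvalues $\alpha_i\beta_j$, use $\mathcal E_{\mathrm{cov}}(R)$, $\|\Sigma\|\lesssim1$, and small $c_\gamma$ to guarantee $0\le\gamma_t\alpha_i\beta_j\le1$, and then apply the scalar bounds $s\psi_L(s)^2\lesssim 1/(L_{\mathrm{eff}}\gamma)$ and $0\le\psi_L(s)^2\le1$ termwise (a negligible detail: under $\gamma_t=\gamma/2^{\lfloor t/L_{\mathrm{eff}}\rfloor}$ it is the first $L_{\mathrm{eff}}-1$ steps that use stepsize exactly $\gamma$, which does not affect the argument).
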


\begin{proof}
Diagonalize the empirical marginal covariances:
\[
    \widehat\Sigma_x=U_x\operatorname{diag}(\alpha_i)U_x^\top,
    \qquad
    \widehat\Sigma_y=U_y\operatorname{diag}(\beta_j)U_y^\top.
\]
For any matrix \(B\), write
\[
    \widetilde B:=U_x^\top B U_y.
\]
In this basis, the empirical Hessian operator acts coordinatewise:
\[
    \widehat{\mathscr H}:\widetilde B_{ij}
    \mapsto
    \alpha_i\beta_j\widetilde B_{ij}.
\]
Therefore,
\[
    \widehat{\mathscr B}_L:
    \widetilde B_{ij}
    \mapsto
    \psi_L(\alpha_i\beta_j)\widetilde B_{ij},
\]
where
\[
    \psi_L(s):=\prod_{t=1}^L(1-\gamma_t s).
\]
Hence
\[
    \left\|
        \widehat{\mathscr B}_L(B)
    \right\|_{\widehat\Sigma_x,\widehat\Sigma_y}^2
    =
    \sum_{i,j}
    \alpha_i\beta_j
    \psi_L(\alpha_i\beta_j)^2
    \widetilde B_{ij}^2.
\]
By \(\mathcal E_{\mathrm{cov}}(R)\),
\[
    \alpha_i
    \le
    \|\widehat\Sigma_x\|
    \le
    c_U\|\Sigma\|,
    \qquad
    \beta_j
    \le
    \|\widehat\Sigma_y\|
    \le
    c_U\|\Sigma\|,
\]
where \(c_U=1+c_{\mathrm{rel}}R^{-1/2}\lesssim1\) since \(R\gtrsim1\).
Because \(\|\Sigma\|\lesssim1\) and \(\gamma_t\le\gamma\le c_\gamma\), taking
\(c_\gamma\) sufficiently small gives
\(0\le\gamma_t\alpha_i\beta_j\le1\). Thus Lemma~\ref{lem:scalar-gd-filter}
gives
\[
    \alpha_i\beta_j
    \psi_L(\alpha_i\beta_j)^2
    \lesssim
    \frac{1}{L_{\mathrm{eff}}\gamma}.
\]
Therefore,
\[
    \left\|
        \widehat{\mathscr B}_L(B)
    \right\|_{\widehat\Sigma_x,\widehat\Sigma_y}^2
    \lesssim
    \frac{1}{L_{\mathrm{eff}}\gamma}
    \sum_{i,j}\widetilde B_{ij}^2
    =
    \frac{1}{L_{\mathrm{eff}}\gamma}\|B\|_F^2.
\]
The contraction bound follows from
\[
    0\le \psi_L(\alpha_i\beta_j)^2\le 1,
\]
again by Lemma~\ref{lem:scalar-gd-filter}. Thus
\[
    \left\|
        \widehat{\mathscr B}_L(B)
    \right\|_{\widehat\Sigma_x,\widehat\Sigma_y}^2
    \le
    \sum_{i,j}
    \alpha_i\beta_j\widetilde B_{ij}^2
    =
    \|B\|_{\widehat\Sigma_x,\widehat\Sigma_y}^2.
\]
\end{proof}

% ============================================================
% Auxiliary lemmas for the GD-variance upper bound
% ============================================================

\begin{lemma}[Scalar variance filter]
\label{lem:scalar-variance-filter}
Let
\[
    \psi_L(s):=\prod_{t=1}^L(1-\gamma_t s),
    \qquad
    g_L(s):=\sum_{t=1}^L
    \gamma_t
    \prod_{r=t+1}^L(1-\gamma_r s).
\]
Assume the stepsize schedule in Assumption~\mainref{ass:gd-schedule} and suppose
\(0\le \gamma_t s\le 1\) for all \(t=1,\ldots,L\). Then
\[
    0\le s g_L(s)\le 1,
\]
and
\[
    s^2 g_L(s)^2
    \lesssim
    \min\{1,(L_{\mathrm{eff}}\gamma s)^2\}.
\]
\end{lemma}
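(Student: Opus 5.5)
The proof is a direct computation with the scalar recursion behind $g_L$, followed by a two-regime estimate. First I establish the telescoping identity
\[
    s\,g_L(s)
    =
    \sum_{t=1}^L \gamma_t s\prod_{r=t+1}^L(1-\gamma_r s)
    =
    1-\prod_{t=1}^L(1-\gamma_t s)
    =
    1-\psi_L(s).
\]
It follows by induction on $L$, since $g_L(s)=(1-\gamma_L s)g_{L-1}(s)+\gamma_L$. Equivalently, each summand is $\prod_{r>t}(1-\gamma_r s)-\prod_{r\ge t}(1-\gamma_r s)$, so the sum telescopes. Because $0\le\gamma_t s\le1$, every factor $1-\gamma_t s$ lies in $[0,1]$. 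Hence $0\le\psi_L(s)\le1$, which gives $0\le s g_L(s)\le 1$. This is the first claim, and it also gives the bound $s^2g_L(s)^2\le1$.

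For the second claim it remains to show $s^2g_L(s)^2\lesssim (L_{\mathrm{eff}}\gamma s)^2$, that is, $s\,g_L(s)\lesssim L_{\mathrm{eff}}\gamma s$. Every product $\prod_{r>t}(1-\gamma_r s)$ is at most $1$, so $g_L(s)\le\sum_{t=1}^L\gamma_t$. I then bound the total stepsize under the geometric schedule. On phase $\ell$, consisting of at most $L_{\mathrm{eff}}$ steps with $\lfloor t/L_{\mathrm{eff}}\rfloor=\ell$, the stepsize is $\gamma/2^\ell$. Summing over phases gives
\[
    \sum_{t=1}^L\gamma_t\le \sum_{\ell\ge0}L_{\mathrm{eff}}\,\gamma 2^{-\ell}=2L_{\mathrm{eff}}\gamma .
\]
This is the same estimate $\sum_t\gamma_t\lesssim L_{\mathrm{eff}}\gamma$ already used in the proof of Lemma~\ref{lem:gd-bias-lower-general}. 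Therefore $s g_L(s)\le 2L_{\mathrm{eff}}\gamma s$, and squaring yields $s^2g_L(s)^2\le 4(L_{\mathrm{eff}}\gamma s)^2$.

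Combining the two regimes gives $s^2g_L(s)^2\le\min\{1,4(L_{\mathrm{eff}}\gamma s)^2\}\lesssim\min\{1,(L_{\mathrm{eff}}\gamma s)^2\}$. The only step needing care is the stepsize sum, in particular that the number of phases is finite (at most $L/L_{\mathrm{eff}}+1$). This causes no difficulty, since the geometric series is dominated by its infinite sum regardless of how many phases there are. No lower-bound direction is required here; that is the content of the separate Lemma~\ref{lem:scalar-variance-filter-lower}.
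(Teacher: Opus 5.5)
Your proof is correct and follows the paper's argument: the telescoping identity $s\,g_L(s)=1-\psi_L(s)$ with $\psi_L(s)\in[0,1]$ gives the first claim, and the bound $s\,g_L(s)\le s\sum_{t}\gamma_t\lesssim L_{\mathrm{eff}}\gamma s$ gives the second. Your phase-by-phase estimate $\sum_t\gamma_t\le 2L_{\mathrm{eff}}\gamma$ simply spells out the step that the paper states only as $\sum_t\gamma_t\lesssim L_{\mathrm{eff}}\gamma$.
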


\begin{proof}
First observe that
\[
    1-\psi_L(s)
    =
    1-\prod_{t=1}^L(1-\gamma_t s).
\]
Expanding this telescopically gives
\[
    1-\psi_L(s)
    =
    \sum_{t=1}^L
    \left[
        \prod_{r=t+1}^L(1-\gamma_r s)
    \right]
    \left[
        1-(1-\gamma_t s)
    \right].
\]
Since \(1-(1-\gamma_t s)=\gamma_t s\), we get
\[
    1-\psi_L(s)
    =
    s\sum_{t=1}^L
    \gamma_t
    \prod_{r=t+1}^L(1-\gamma_r s)
    =
    s g_L(s).
\]
Because \(0\le \gamma_t s\le 1\), every factor \(1-\gamma_t s\) lies in
\([0,1]\), and hence
\[
    0\le \psi_L(s)\le 1.
\]
Therefore,
\[
    0\le s g_L(s)=1-\psi_L(s)\le 1.
\]
This proves
\[
    s^2g_L(s)^2\le 1.
\]

It remains to prove the small-\(s\) bound. Since
\[
    1-\prod_{t=1}^L(1-\gamma_t s)
    \le
    \sum_{t=1}^L \gamma_t s,
\]
we have
\[
    s g_L(s)
    \le
    s\sum_{t=1}^L\gamma_t.
\]
Under the geometrically decaying schedule,
\[
    \sum_{t=1}^L\gamma_t
    \lesssim
    L_{\mathrm{eff}}\gamma.
\]
Therefore,
\[
    s g_L(s)
    \lesssim
    L_{\mathrm{eff}}\gamma s.
\]
Squaring gives
\[
    s^2g_L(s)^2
    \lesssim
    (L_{\mathrm{eff}}\gamma s)^2.
\]
Combining this with \(s^2g_L(s)^2\le 1\), we obtain
\[
    s^2g_L(s)^2
    \lesssim
    \min\{1,(L_{\mathrm{eff}}\gamma s)^2\}.
\]
\end{proof}

\begin{lemma}[Gaussian covariance-fluctuation moment bound]
\label{lem:gaussian-cov-fluctuation-moment}
Let \((\xi,\eta)\in\mathbb R^M\times\mathbb R^M\) be jointly Gaussian with
\[
    \mathbb E[\xi\xi^\top]=I,
    \qquad
    \mathbb E[\eta\eta^\top]=I,
    \qquad
    \mathbb E[\xi\eta^\top]=K_0,
\]
where \(\|K_0\|\le 1\). Then, for every deterministic matrix \(W\in\mathbb R^{M\times M}\),
\[
    \mathbb E
    \left[
        \left\langle \xi\eta^\top-K_0,W\right\rangle_F^2
    \right]
    \lesssim
    \|W\|_F^2,
\]
\[
    \mathbb E
    \left[
        \left\langle \xi\xi^\top-I,W\right\rangle_F^2
    \right]
    \lesssim
    \|W\|_F^2,
\]
and
\[
    \mathbb E
    \left[
        \left\langle \eta\eta^\top-I,W\right\rangle_F^2
    \right]
    \lesssim
    \|W\|_F^2.
\]
Consequently, if \((\xi_n,\eta_n)_{n=1}^N\) are i.i.d. copies of
\((\xi,\eta)\), and
\[
    G_{xy}:=\frac1N\sum_{n=1}^N(\xi_n\eta_n^\top-K_0),
\]
\[
    G_x:=\frac1N\sum_{n=1}^N(\xi_n\xi_n^\top-I),
    \qquad
    G_y:=\frac1N\sum_{n=1}^N(\eta_n\eta_n^\top-I),
\]
then
\[
    \mathbb E\left[\langle G_{xy},W\rangle_F^2\right]
    \lesssim
    \frac1N\|W\|_F^2,
\]
\[
    \mathbb E\left[\langle G_x,W\rangle_F^2\right]
    \lesssim
    \frac1N\|W\|_F^2,
    \qquad
    \mathbb E\left[\langle G_y,W\rangle_F^2\right]
    \lesssim
    \frac1N\|W\|_F^2.
\]
\end{lemma}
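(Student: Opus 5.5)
The plan is to prove the single-sample bounds first and then pass to the averages by independence. Since every quantity is linear in \(W\), we may rescale and assume \(\|W\|_F=1\). For the cross term, the key identity is Isserlis' (Wick's) formula for jointly Gaussian vectors. Writing \(\langle \xi\eta^\top,W\rangle_F=\xi^\top W\eta\), the variance of this bilinear form is
\[
    \operatorname{Var}(\xi^\top W\eta)
    =
    \operatorname{tr}\bigl(W^\top \mathbb E[\xi\xi^\top] W\,\mathbb E[\eta\eta^\top]\bigr)
    +
    \operatorname{tr}\bigl(W\,\mathbb E[\eta\xi^\top]\,W\,\mathbb E[\eta\xi^\top]\bigr).
\]
To see this, expand \(\mathbb E[\xi_a\eta_b\xi_c\eta_d]\) into its three pairings and drop the pairing \(\mathbb E[\xi_a\eta_b]\mathbb E[\xi_c\eta_d]\), which is exactly the squared mean \(\langle K_0,W\rangle_F^2\). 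With the given covariances, the first trace is \(\operatorname{tr}(W^\top W)=\|W\|_F^2\). The second trace is \(\operatorname{tr}(WK_0^\top WK_0^\top)\). By Cauchy--Schwarz and \(\|K_0\|\le1\), its absolute value is at most \(\|WK_0^\top\|_F\,\|K_0^\top W\|_F\le\|W\|_F^2\). Hence the second moment of the centred quantity \(\langle \xi\eta^\top-K_0,W\rangle_F\) is at most \(2\|W\|_F^2\).

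The marginal terms follow from the same computation with \(\eta=\xi\) and \(K_0=I\). This gives \(\mathbb E\langle\xi\xi^\top-I,W\rangle_F^2=\|W\|_F^2+\operatorname{tr}(W^2)\le2\|W\|_F^2\), and the bound for \(\eta\) is identical. If I wanted to avoid Wick's formula, an alternative is to symmetrize the quadratic case, \(\xi^\top W\xi=\xi^\top W_s\xi\) with \(W_s=(W+W^\top)/2\), diagonalize \(W_s\), and use \(\operatorname{Var}(\xi^\top W_s\xi)=2\|W_s\|_F^2\le2\|W\|_F^2\). The cross case could then be reduced to this one by stacking \((\xi,\eta)\) into a single Gaussian vector whose covariance has operator norm at most \(2\). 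I expect this bookkeeping, keeping the constant absolute and using only \(\|K_0\|\le1\), to be the only real step.

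For the averages, \(G_{xy}\), \(G_x\) and \(G_y\) are each an average of \(N\) i.i.d. mean-zero terms. Pairing any one of them with a fixed \(W\) gives a scalar average of i.i.d. centred random variables. Its second moment is therefore \(1/N\) times the single-sample second moment, which yields each claimed bound \(\lesssim\|W\|_F^2/N\). No concentration argument is needed, since only second moments are involved.
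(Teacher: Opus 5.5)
Your proposal is correct and follows essentially the same route as the paper. You apply Isserlis' formula to $\xi^\top W\eta$, drop the pairing that reproduces $\langle K_0,W\rangle_F^2$, and bound the remaining cross-pairing trace by $\|W\|_F^2$ using Cauchy--Schwarz and $\|K_0\|\le 1$; you then obtain the marginal cases by setting $\eta=\xi$, $K_0=I$, and get the $1/N$ factor for the averages from independence and centering.
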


\begin{proof}
We prove the first bound; the other two are the standard special cases with
\(\eta=\xi\) and \(K_0=I\).

Observe that
\[
    \left\langle \xi\eta^\top-K_0,W\right\rangle_F
    =
    \xi^\top W\eta-\operatorname{tr}(K_0^\top W).
\]
Therefore,
\[
    \mathbb E
    \left[
        \left\langle \xi\eta^\top-K_0,W\right\rangle_F^2
    \right]
    =
    \operatorname{Var}(\xi^\top W\eta).
\]
Since \((\xi,\eta)\) is jointly Gaussian, Isserlis' formula gives
\[
    \mathbb E[(\xi^\top W\eta)^2]
    =
    \sum_{i,j,k,\ell}
    W_{ij}W_{k\ell}
    \mathbb E[\xi_i\eta_j\xi_k\eta_\ell].
\]
For jointly Gaussian variables,
\[
    \mathbb E[\xi_i\eta_j\xi_k\eta_\ell]
    =
    \mathbb E[\xi_i\eta_j]\mathbb E[\xi_k\eta_\ell]
    +
    \mathbb E[\xi_i\xi_k]\mathbb E[\eta_j\eta_\ell]
    +
    \mathbb E[\xi_i\eta_\ell]\mathbb E[\eta_j\xi_k].
\]
Using
\[
    \mathbb E[\xi_i\eta_j]=(K_0)_{ij},
    \qquad
    \mathbb E[\xi_i\xi_k]=\delta_{ik},
    \qquad
    \mathbb E[\eta_j\eta_\ell]=\delta_{j\ell},
\]
we get
\[
    \mathbb E[(\xi^\top W\eta)^2]
    =
    \operatorname{tr}(K_0^\top W)^2
    +
    \|W\|_F^2
    +
    \operatorname{tr}(W^\top K_0 W^\top K_0).
\]
Thus
\[
    \operatorname{Var}(\xi^\top W\eta)
    =
    \|W\|_F^2
    +
    \operatorname{tr}(W^\top K_0 W^\top K_0).
\]
Since \(\|K_0\|\le 1\),
\[
    \left|
        \operatorname{tr}(W^\top K_0 W^\top K_0)
    \right|
    \le
    \|W^\top K_0\|_F^2
    \le
    \|W\|_F^2.
\]
Hence
\[
    \mathbb E
    \left[
        \left\langle \xi\eta^\top-K_0,W\right\rangle_F^2
    \right]
    \lesssim
    \|W\|_F^2.
\]

For the empirical average, independence gives
\[
    \mathbb E\left[\langle G_{xy},W\rangle_F^2\right]
    =
    \frac1{N^2}
    \sum_{n=1}^N
    \mathbb E
    \left[
        \left\langle \xi_n\eta_n^\top-K_0,W\right\rangle_F^2
    \right],
\]
because the cross terms vanish by centering. Therefore,
\[
    \mathbb E\left[\langle G_{xy},W\rangle_F^2\right]
    \lesssim
    \frac1N\|W\|_F^2.
\]
The same argument proves the bounds for \(G_x\) and \(G_y\).
\end{proof}

\begin{lemma}[Contrastive empirical-noise covariance upper bound]
\label{lem:contrastive-noise-covariance-upper}
Let
\[
    \widehat E
    :=
    \widehat C
    -
    \widehat\Sigma_x A^\star \widehat\Sigma_y,
\]
and define its whitened version
\[
    \widehat Z
    :=
    \Sigma^{1/2}\widehat E\Sigma^{1/2}.
\]
Assume the contrastive source condition, so that
\[
    K_\star
    :=
    \Sigma^{1/2}A^\star\Sigma^{1/2}
\]
satisfies
\[
    0\preceq K_\star\preceq I.
\]
Assume also that \(N\gtrsim M\). Then, for every deterministic matrix
\(W\in\mathbb R^{M\times M}\),
\[
    \mathbb E
    \left[
        \langle \widehat Z,W\rangle_F^2
    \right]
    \lesssim
    \frac1N
    \|\Sigma W\Sigma\|_F^2.
\]
\end{lemma}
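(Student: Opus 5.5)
We will reduce the claim to the Gaussian fluctuation bounds of Lemma~\ref{lem:gaussian-cov-fluctuation-moment} by passing to whitened coordinates. Let \(\xi_n:=\Sigma^{-1/2}\widetilde x_n\) and \(\eta_n:=\Sigma^{-1/2}\widetilde y_n\). Conditional on the sketch, these are standard Gaussian in \(\mathbb R^M\) with cross-covariance
\[
\mathbb E[\xi\eta^\top]=\Sigma^{-1/2}C_M\Sigma^{-1/2}=K_\star .
\]
Under the source condition, \(0\preceq K_\star\preceq I\), so \(\|K_\star\|\le1\). With \(G_{xy},G_x,G_y\) as in Lemma~\ref{lem:gaussian-cov-fluctuation-moment}, we have
\[
\widehat C=\Sigma^{1/2}(K_\star+G_{xy})\Sigma^{1/2},\qquad \widehat\Sigma_\sharp=\Sigma^{1/2}(I+G_\sharp)\Sigma^{1/2}.
\]
Substituting these into \(\widehat Z=\Sigma^{1/2}\widehat E\Sigma^{1/2}\) and using \(\Sigma^{1/2}A^\star\Sigma^{1/2}=K_\star\), the population terms cancel exactly, which is just the normal equation \(\Sigma A^\star\Sigma=C_M\). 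This leaves
\[
\widehat Z=\Sigma\bigl(G_{xy}-G_xK_\star-K_\star G_y-G_xK_\star G_y\bigr)\Sigma .
\]

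Next we pair with \(W\) and move the outer \(\Sigma\) factors onto it. Setting \(\widetilde W:=\Sigma W\Sigma\), we get
\[
\langle\widehat Z,W\rangle_F=\langle G_{xy},\widetilde W\rangle_F-\langle G_x,\widetilde W K_\star\rangle_F-\langle G_y,K_\star\widetilde W\rangle_F-\langle G_xK_\star G_y,\widetilde W\rangle_F .
\]
We square and use \((\sum_{k=1}^4 a_k)^2\le4\sum_{k=1}^4 a_k^2\). The three linear terms are handled directly by Lemma~\ref{lem:gaussian-cov-fluctuation-moment} with the deterministic test matrices \(\widetilde W\), \(\widetilde WK_\star\) and \(K_\star\widetilde W\). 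Since \(\|K_\star\|\le1\), each of these has Frobenius norm at most \(\|\widetilde W\|_F\), so each contributes \(\lesssim\|\Sigma W\Sigma\|_F^2/N\).

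The quadratic term \(\langle G_xK_\star G_y,\widetilde W\rangle_F=\operatorname{tr}(\widetilde W^\top G_xK_\star G_y)\) is the main obstacle, because it is not linear in a single fluctuation and the test matrix is random. We plan to handle it with Cauchy--Schwarz on fourth moments. Writing the trace as \(\langle G_x,\widetilde W G_y^\top K_\star^\top\rangle_F\), we have
\[
|\langle G_x,\widetilde W G_y^\top K_\star^\top\rangle_F|\le\|G_x\|\,\|\widetilde W\|_F\,\|G_y\|,
\]
using \(|\operatorname{tr}(AB)|\le\|A\|\|B\|_*\)-type bounds, or more simply \(\|\widetilde W G_y^\top K_\star^\top\|_F\le\|\widetilde W\|_F\|G_y\|\) together with \(|\langle G_x,V\rangle_F|\le\|G_x\|\,\|V\|_*\). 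This crude route costs a nuclear norm, so the cleaner approach is to bound
\[
\mathbb E\bigl[\|G_x\|^2\|G_y\|^2\|\widetilde W\|_F^2\bigr]\le\|\widetilde W\|_F^2\,\bigl(\mathbb E\|G_x\|^4\bigr)^{1/2}\bigl(\mathbb E\|G_y\|^4\bigr)^{1/2}
\]
via the Frobenius identity \(\|G_xK_\star G_y\|_F\le\|G_x\|\,\|K_\star G_y\|_F\). Standard Gaussian sample-covariance moment bounds, i.e. the integrated tail form of Lemma~\ref{lem:covariance-event-concentration}, give \(\mathbb E\|G_\sharp\|^4\lesssim(M/N)^2\) whenever \(N\gtrsim M\).

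That crude route yields only \((M/N)^2\|\widetilde W\|_F^2\), not \(1/N\). To recover the factor \(1/N\), we instead condition on \(G_y\) and use the independence structure as far as possible. The cleanest route is to expand \(G_xK_\star G_y=N^{-2}\sum_{n,m}(\xi_n\xi_n^\top-I)K_\star(\eta_m\eta_m^\top-I)\). Its second moment is then a double sum in which only index coincidences survive, giving \(\lesssim N^{-2}\,\mathbb E_1\) terms of order \(M\|\widetilde W\|_F^2/N^2\), plus diagonal \(n=m\) terms of order \(M\|\widetilde W\|_F^2/N^2\), via Isserlis with \(\|K_\star\|\le1\). Under \(N\gtrsim M\), both are \(\lesssim\|\widetilde W\|_F^2/N\), which closes the bound:
\[
\mathbb E\bigl[\langle\widehat Z,W\rangle_F^2\bigr]\lesssim\frac1N\,\|\Sigma W\Sigma\|_F^2 .
\]
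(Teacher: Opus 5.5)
Your whitening, the identity $\widehat Z=\Sigma(G_{xy}-G_xK_\star-K_\star G_y-G_xK_\star G_y)\Sigma$, and your treatment of the three linear terms via Lemma~\ref{lem:gaussian-cov-fluctuation-moment} match the paper's proof and are fine. The gap is in your last step for the quadratic term. Write $A_n:=\xi_n\xi_n^\top-I$, $B_m:=\eta_m\eta_m^\top-I$ and $X_{nm}:=\langle A_nK_\star B_m,\widetilde W\rangle_F$. The second moment is then the quadruple sum $N^{-4}\sum_{n,m,n',m'}\mathbb E[X_{nm}X_{n'm'}]$. The coincidence pattern $n=m\neq n'=m'$ contributes about $N^{-2}(\mathbb E X_{11})^2$.

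You treat the $n=m$ terms as fluctuations, but $X_{11}$ is not centred. By Isserlis, $\mathbb E[A_1K_\star B_1]=K_\star^3+\|K_\star\|_F^2K_\star$, hence $\mathbb E[G_xK_\star G_y]=N^{-1}(K_\star^3+\|K_\star\|_F^2K_\star)$. If you use only $\|K_\star\|\le1$, so that $\|K_\star\|_F^2\le M$, this block can be of order $M^3\|\widetilde W\|_F^2/N^2$, not $M\|\widetilde W\|_F^2/N^2$.

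This is not a bookkeeping slip. Take $K_\star=cI$ with a fixed $c\in(0,1]$ and $\widetilde W=I$ (i.e.\ $W=\Sigma^{-2}$). The linear terms are centred, so $\mathbb E[\langle\widehat Z,W\rangle_F^2]\ge(\mathbb E\langle G_xK_\star G_y,I\rangle_F)^2=c^6M^2(M+1)^2/N^2$. This exceeds $\|\Sigma W\Sigma\|_F^2/N=M/N$ by a factor of order $M^2$ when $N\asymp M$. So no argument that uses only $0\preceq K_\star\preceq I$ and $N\gtrsim M$ can close.

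The missing ingredient is the Frobenius control contained in the full source condition: $\|K_\star\|_F^2=\sum_{i\le M}\kappa_i^2\lesssim\sum_i i^{-2\delta}\lesssim1$, because $\delta=a-b>\tfrac12$. With it, your expansion does close:
\begin{itemize}
\item The $n=m\neq n'=m'$ block is $\lesssim\|\widetilde W\|_F^2/N^2$.
\item The all-equal block is $N^{-3}\mathbb E X_{11}^2\lesssim\|\widetilde W\|_F^2/N^3$ by hypercontractivity, since $\xi^\top K_\star\eta$ is then $O(1)$ in every $L^p$.
\item For $n\neq m$, independence of $A_n$ and $B_m$ gives $\mathbb E X_{nm}^2\le2\,\mathbb E\|\widetilde WB_mK_\star\|_F^2\le2(\|K_\star\|_F^2+1)\|\widetilde W\|_F^2$. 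So the two remaining pairings are $\lesssim\|\widetilde W\|_F^2/N^2$ by Cauchy--Schwarz, and $N\gtrsim M$ is not even needed for this term.
\end{itemize}

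Do not model this step on the paper. It conditions on $G_y$ and applies a decoupling bound as though $G_x$ were independent of $G_y$, which discards exactly the same-sample coupling that produces the mean. Your explicit index-coincidence count is the better vehicle, provided you compute the non-centred diagonal contribution and control it through $\|K_\star\|_F$ rather than $\|K_\star\|$.
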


\begin{proof}
Define the whitened paired variables
\[
    \xi:=\Sigma^{-1/2}\widetilde x,
    \qquad
    \eta:=\Sigma^{-1/2}\widetilde y.
\]
Then
\[
    \mathbb E[\xi\xi^\top]=I,
    \qquad
    \mathbb E[\eta\eta^\top]=I,
    \qquad
    \mathbb E[\xi\eta^\top]=K_\star.
\]
By the contrastive source condition,
\[
    0\preceq K_\star\preceq I.
\]
For the empirical quantities, define
\[
    G_{xy}
    :=
    \frac1N\sum_{n=1}^N(\xi_n\eta_n^\top-K_\star),
\]
\[
    G_x
    :=
    \frac1N\sum_{n=1}^N(\xi_n\xi_n^\top-I),
    \qquad
    G_y
    :=
    \frac1N\sum_{n=1}^N(\eta_n\eta_n^\top-I).
\]
Then
\[
    \widehat\Sigma_x
    =
    \Sigma^{1/2}(I+G_x)\Sigma^{1/2},
    \qquad
    \widehat\Sigma_y
    =
    \Sigma^{1/2}(I+G_y)\Sigma^{1/2},
\]
and
\[
    \widehat C
    =
    \Sigma^{1/2}(K_\star+G_{xy})\Sigma^{1/2}.
\]
Since
\[
    A^\star
    =
    \Sigma^{-1/2}K_\star\Sigma^{-1/2},
\]
we obtain
\[
\begin{aligned}
    \widehat Z
    &=
    \Sigma^{1/2}
    \left(
        \widehat C
        -
        \widehat\Sigma_xA^\star\widehat\Sigma_y
    \right)
    \Sigma^{1/2} \\
    &=
    \Sigma
    \left[
        K_\star+G_{xy}
        -
        (I+G_x)K_\star(I+G_y)
    \right]
    \Sigma \\
    &=
    \Sigma
    \left[
        G_{xy}
        -
        G_xK_\star
        -
        K_\star G_y
        -
        G_xK_\star G_y
    \right]
    \Sigma .
\end{aligned}
\]
Let
\[
    M_W:=\Sigma W\Sigma.
\]
Then
\[
    \langle \widehat Z,W\rangle_F
    =
    \left\langle
        G_{xy}
        -
        G_xK_\star
        -
        K_\star G_y
        -
        G_xK_\star G_y,
        M_W
    \right\rangle_F.
\]
Using \((a+b+c+d)^2\le 4(a^2+b^2+c^2+d^2)\), it suffices to bound the four
terms separately.

First, by Lemma~\ref{lem:gaussian-cov-fluctuation-moment},
\[
    \mathbb E[\langle G_{xy},M_W\rangle_F^2]
    \lesssim
    \frac1N\|M_W\|_F^2.
\]

Second,
\[
    \langle G_xK_\star,M_W\rangle_F
    =
    \langle G_x,M_WK_\star^\top\rangle_F.
\]
Since \(\|K_\star\|\le 1\),
\[
    \|M_WK_\star^\top\|_F
    \le
    \|M_W\|_F.
\]
Thus, again by Lemma~\ref{lem:gaussian-cov-fluctuation-moment},
\[
    \mathbb E[\langle G_xK_\star,M_W\rangle_F^2]
    \lesssim
    \frac1N\|M_W\|_F^2.
\]
Similarly,
\[
    \langle K_\star G_y,M_W\rangle_F
    =
    \langle G_y,K_\star^\top M_W\rangle_F,
\]
and
\[
    \|K_\star^\top M_W\|_F
    \le
    \|M_W\|_F.
\]
Therefore,
\[
    \mathbb E[\langle K_\star G_y,M_W\rangle_F^2]
    \lesssim
    \frac1N\|M_W\|_F^2.
\]

It remains to control the quadratic empirical-covariance term. Conditional on
\(G_y\), the matrix \(M_WG_y^\top K_\star^\top\) is fixed with respect to the
samples entering \(G_x\) up to the correlation between the two views. A standard
Gaussian decoupling argument for jointly Gaussian quadratic forms gives
\[
    \mathbb E
    \left[
        \langle G_xK_\star G_y,M_W\rangle_F^2
        \,\middle|\,G_y
    \right]
    \lesssim
    \frac1N
    \|M_WG_y^\top K_\star^\top\|_F^2.
\]
Using \(\|K_\star\|\le 1\),
\[
    \|M_WG_y^\top K_\star^\top\|_F^2
    \le
    \|M_W\|_F^2\|G_y\|^2.
\]
Taking expectation and using the standard Gaussian sample-covariance moment
bound
\[
    \mathbb E\|G_y\|^2\lesssim \frac{M}{N}+1
\]
together with \(N\gtrsim M\), we obtain
\[
    \mathbb E
    \left[
        \langle G_xK_\star G_y,M_W\rangle_F^2
    \right]
    \lesssim
    \frac1N\|M_W\|_F^2.
\]
Combining the four estimates yields
\[
    \mathbb E
    \left[
        \langle \widehat Z,W\rangle_F^2
    \right]
    \lesssim
    \frac1N\|M_W\|_F^2.
\]
Since \(M_W=\Sigma W\Sigma\), this is exactly
\[
    \mathbb E
    \left[
        \langle \widehat Z,W\rangle_F^2
    \right]
    \lesssim
    \frac1N
    \|\Sigma W\Sigma\|_F^2.
\]
\end{proof}

% ============================================================
% Auxiliary lemmas for the GD-variance lower bound
% ============================================================

\begin{lemma}[Scalar variance filter lower bound]
\label{lem:scalar-variance-filter-lower}
Let
\[
    \psi_L(s):=\prod_{t=1}^L(1-\gamma_t s),
    \qquad
    g_L(s):=\sum_{t=1}^L
    \gamma_t
    \prod_{r=t+1}^L(1-\gamma_r s).
\]
Assume the stepsize schedule in Assumption~\mainref{ass:gd-schedule}. Suppose
\(0\le \gamma_t s\le 1\) for all \(t=1,\ldots,L\). Then
\[
    s g_L(s)=1-\psi_L(s).
\]
Moreover, there exist absolute constants \(c,C>0\) such that
\[
    s^2g_L(s)^2
    \ge
    c\min\{1,(L_{\mathrm{eff}}\gamma s)^2\},
\]
whenever \(s\le C/\gamma\).
\end{lemma}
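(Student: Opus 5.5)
The identity \(sg_L(s)=1-\psi_L(s)\) is the same telescoping computation already done in the proof of Lemma~\ref{lem:scalar-variance-filter}, so I reuse it verbatim. The lower bound then reduces to showing \(1-\psi_L(s)\gtrsim \min\{1,L_{\mathrm{eff}}\gamma s\}\). I split into two regimes according to whether \(L_{\mathrm{eff}}\gamma s\ge 1\) or \(L_{\mathrm{eff}}\gamma s<1\). The condition \(s\le C/\gamma\) with \(C\) small (in fact \(C\le 1\), consistent with \(\gamma_t s\le1\)) ensures every factor \(1-\gamma_t s\in[0,1]\). It also ensures \(\log(1-\gamma_t s)\) is comparable to \(-\gamma_t s\), say \(\log(1-u)\ge -2u\) for \(u\le 1/2\).

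The key structural fact about the schedule is that the first block \(t=1,\ldots,\min\{L,L_{\mathrm{eff}}-1\}\) has \(\gamma_t=\gamma\) (since \(\ell=0\) there). Its length is \(\gtrsim L_{\mathrm{eff}}\), because \(L\ge L_{\mathrm{eff}}\) and \(L_{\mathrm{eff}}\gtrsim 1\) by Assumption~\ref{ass:gd-schedule}. I also record that \(\sum_{t}\gamma_t\le 2L_{\mathrm{eff}}\gamma\cdot\) (number of blocks is at most \(\log L+1\), but geometric decay sums to at most \(2L_{\mathrm{eff}}\gamma\)). Thus \(\sum_t\gamma_t\asymp L_{\mathrm{eff}}\gamma\).

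\textbf{Large regime} (\(L_{\mathrm{eff}}\gamma s\ge 1\)). I use only the first block. Since \(1-u\le e^{-u}\),
\[
\psi_L(s)\le \prod_{t<L_{\mathrm{eff}}}(1-\gamma s)\le \exp\bigl(-c'L_{\mathrm{eff}}\gamma s\bigr)\le e^{-c'},
\]
so \(1-\psi_L(s)\ge 1-e^{-c'}\gtrsim 1\).

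\textbf{Small regime} (\(L_{\mathrm{eff}}\gamma s<1\)). I use the inequality \(1-\prod(1-u_t)\ge 1-\exp(-\sum u_t)\) together with \(1-e^{-x}\ge x/e\) for \(x\le 1\). With \(x=\sum_{t<L_{\mathrm{eff}}}\gamma s\asymp L_{\mathrm{eff}}\gamma s\le 1\), this gives \(1-\psi_L(s)\gtrsim L_{\mathrm{eff}}\gamma s\). Here only the first block is needed, so no upper bound on later steps is required.

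Squaring both regimes gives \(s^2g_L(s)^2\ge c\min\{1,(L_{\mathrm{eff}}\gamma s)^2\}\).

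The main (mild) obstacle is bookkeeping at the edges. One must ensure the constant-step block has length comparable to \(L_{\mathrm{eff}}\) even when \(L_{\mathrm{eff}}\) is small, e.g.\ \(L_{\mathrm{eff}}=1\) makes the block \(t<1\) empty. In that case I would instead use the step \(t=1\), with \(\ell=\lfloor 1/L_{\mathrm{eff}}\rfloor\) giving stepsize \(\gamma/2\). More generally, I would use the first \(L_{\mathrm{eff}}\) steps, which all have stepsize at least \(\gamma/2\). This changes only constants.
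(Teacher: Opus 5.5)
Your proof is correct and follows essentially the same route as the paper: you use the telescoping identity, then the bound $\psi_L(s)\le\exp(-c\,L_{\mathrm{eff}}\gamma s)$ coming from the first $L_{\mathrm{eff}}$ steps, whose stepsize is at least $\gamma/2$, and finally $1-e^{-u}\gtrsim\min\{1,u\}$, which you simply unpack into the two regimes. Your explicit bookkeeping of those first $L_{\mathrm{eff}}$ steps, including the edge case $L_{\mathrm{eff}}=1$, is slightly more careful than the paper's one-line appeal to the schedule.
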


\begin{proof}
First, by telescoping,
\[
\begin{aligned}
    1-\psi_L(s)
    &=
    1-\prod_{t=1}^L(1-\gamma_t s)  \\
    &=
    \sum_{t=1}^L
    \left[
        \prod_{r=t+1}^L(1-\gamma_r s)
    \right]
    \left[
        1-(1-\gamma_t s)
    \right]  \\
    &=
    s\sum_{t=1}^L
    \gamma_t
    \prod_{r=t+1}^L(1-\gamma_r s)
    =
    s g_L(s).
\end{aligned}
\]
Thus it remains to lower-bound \(1-\psi_L(s)\).

By the definition of the geometrically decaying schedule, at least
\(L_{\mathrm{eff}}\) steps have stepsize comparable to \(\gamma\). Therefore,
there exists an absolute constant \(c_1>0\) such that
\[
    \sum_{t=1}^L \gamma_t
    \ge
    c_1 L_{\mathrm{eff}}\gamma .
\]
Since \(0\le \gamma_t s\le 1\), we use
\[
    1-u\le e^{-u}
    \qquad (u\ge 0)
\]
to obtain
\[
    \psi_L(s)
    =
    \prod_{t=1}^L(1-\gamma_t s)
    \le
    \exp\left(
        -s\sum_{t=1}^L\gamma_t
    \right)
    \le
    \exp(-c_1 L_{\mathrm{eff}}\gamma s).
\]
Hence
\[
    1-\psi_L(s)
    \ge
    1-\exp(-c_1L_{\mathrm{eff}}\gamma s).
\]
For all \(u\ge 0\),
\[
    1-e^{-u}\ge c_2\min\{1,u\}
\]
for an absolute constant \(c_2>0\). Taking
\(u=c_1L_{\mathrm{eff}}\gamma s\), we get
\[
    1-\psi_L(s)
    \ge
    c_3\min\{1,L_{\mathrm{eff}}\gamma s\}.
\]
Since \(s g_L(s)=1-\psi_L(s)\), we conclude
\[
    s^2g_L(s)^2
    =
    (1-\psi_L(s))^2
    \ge
    c\min\{1,(L_{\mathrm{eff}}\gamma s)^2\}.
\]
\end{proof}

\begin{lemma}[Coordinatewise contrastive noise non-degeneracy]
\label{lem:contrastive-noise-coordinate-lower}
Let
\[
    \xi:=\Sigma^{-1/2}\widetilde x,
    \qquad
    \eta:=\Sigma^{-1/2}\widetilde y.
\]
Assume that \((\xi,\eta)\) is jointly Gaussian and
\begin{align*}
    \mathbb E[\xi\xi^\top]=I,
    \qquad
    \mathbb E[\eta\eta^\top]=I, \\
    \mathbb E[\xi\eta^\top]
    =
    K_\star
    =
    \operatorname{diag}(\kappa_1,\ldots,\kappa_M).
\end{align*}
Fix \(c_\kappa\in(0,1)\), and define the non-degenerate index set
\[
    \mathcal I_\kappa
    :=
    \{(i,j): i\neq j,\ \kappa_i\le 1-c_\kappa,\ \kappa_j\le 1-c_\kappa\}.
\]
Let
\[
    \widehat E
    :=
    \widehat C
    -
    \widehat\Sigma_x A^\star \widehat\Sigma_y,
    \qquad
    \widehat Z
    :=
    \Sigma^{1/2}\widehat E\Sigma^{1/2}.
\]
Then, for every \((i,j)\in\mathcal I_\kappa\), if \(N\) is sufficiently large,
\[
    \mathbb E[\widehat Z_{ij}^2]
    \gtrsim
    \frac{\mu_i^2\mu_j^2}{N}.
\]
The hidden constant may depend on \(c_\kappa\), but not on \(i,j,N,M\).
\end{lemma}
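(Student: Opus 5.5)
The plan is to reduce everything to an explicit computation in the diagonal spectral coordinates. I start from the identity already derived in the proof of Lemma~\ref{lem:contrastive-noise-covariance-upper},
\[
    \widehat Z=\Sigma\bigl[G_{xy}-G_xK_\star-K_\star G_y-G_xK_\star G_y\bigr]\Sigma .
\]
Working in the basis $\{v_i\}$, where $\Sigma=\operatorname{diag}(\mu_i)$ and $K_\star=\operatorname{diag}(\kappa_i)$, this gives $\widehat Z_{ij}=\mu_i\mu_j X_{ij}$ with
\[
    X_{ij}=\mathcal L_{ij}-\mathcal Q_{ij},
    \qquad
    \mathcal L_{ij}:=(G_{xy})_{ij}-\kappa_j(G_x)_{ij}-\kappa_i(G_y)_{ij},
    \qquad
    \mathcal Q_{ij}:=\sum_k(G_x)_{ik}\kappa_k(G_y)_{kj}.
\]
It therefore suffices to show $\mathbb E[X_{ij}^2]\gtrsim 1/N$ for $(i,j)\in\mathcal I_\kappa$. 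I will get this from the reverse triangle inequality in $L^2$,
\[
    \sqrt{\mathbb E X_{ij}^2}\ge\sqrt{\mathbb E\mathcal L_{ij}^2}-\sqrt{\mathbb E\mathcal Q_{ij}^2}.
\]

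\textbf{Linear part.} Since $i\ne j$, the diagonal means drop out and $\mathcal L_{ij}=\frac1N\sum_n f(\xi_n,\eta_n)$ with centered i.i.d. summands
\[
    f=\xi_i\eta_j-\kappa_j\xi_i\xi_j-\kappa_i\eta_i\eta_j=(\xi_i-\kappa_i\eta_i)\eta_j-\kappa_j\xi_i\xi_j .
\]
Because $K_\star$ is diagonal and the vector is Gaussian, the blocks $(\xi_i,\eta_i)$ and $(\xi_j,\eta_j)$ are independent, with within-block correlations $\kappa_i$ and $\kappa_j$. Expanding the square and using $\mathbb E[(\xi_i-\kappa_i\eta_i)^2]=1-\kappa_i^2$, $\mathbb E[(\xi_i-\kappa_i\eta_i)\xi_i]=1-\kappa_i^2$ and $\mathbb E[\xi_j\eta_j]=\kappa_j$ gives
\[
    \mathbb E f^2=(1-\kappa_i^2)(1-\kappa_j^2)+\kappa_i^2\kappa_j^2\ge c_\kappa^2,
\]
where the last step uses $0\le\kappa\le 1-c_\kappa$, hence $1-\kappa^2\ge c_\kappa$. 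Thus $\mathbb E\mathcal L_{ij}^2\ge c_\kappa^2/N$.

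\textbf{Quadratic part.} I expect this to be the main, though routine, obstacle. I will show $\mathbb E\mathcal Q_{ij}^2\lesssim\|K_\star\|_F^2/N^2$. Write $\mathcal Q_{ij}=N^{-2}\sum_{n,m}\sum_k\kappa_k a^{(n)}_{ik}b^{(m)}_{kj}$, where $a^{(n)}=\xi_n\xi_n^\top-I$ and $b^{(m)}=\eta_n\eta_n^\top-I$ are centered. Expanding $\mathbb E\mathcal Q_{ij}^2$ over the four sample indices, only index patterns in which every sample appears at least twice survive. That leaves $O(N^2)$ surviving terms, each weighted by $N^{-4}$.
\begin{itemize}
\item For fixed samples, the $k$-sum involves only bounded Gaussian fourth and eighth moments.
\item Independence across the coordinates $k\notin\{i,j\}$ kills the terms with $k\ne l$, except for finitely many terms with $k,l\in\{i,j\}$.
\end{itemize}
Together these give $\mathbb E\mathcal Q_{ij}^2\le C(1+\sum_k\kappa_k^2)/N^2$. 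Since $\delta>1/2$, $\sum_k\kappa_k^2\lesssim\sum_k k^{-2\delta}=O(1)$, so $\mathbb E\mathcal Q_{ij}^2\le C'/N^2$. If this bookkeeping becomes unwieldy, my fallback is the cruder conditional decoupling bound from Lemma~\ref{lem:contrastive-noise-covariance-upper}, which gives $\mathbb E\mathcal Q_{ij}^2\lesssim N^{-1}\mathbb E\|G_y\|^2\lesssim M/N^2$. This fallback suffices only when $N\gg M$, which is the regime in which the lemma is applied.

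\textbf{Conclusion.} For $N$ sufficiently large, depending only on $c_\kappa$ and $C'$, we have $\sqrt{C'}/N\le\tfrac12 c_\kappa/\sqrt N$. Hence $\mathbb E X_{ij}^2\ge c_\kappa^2/(4N)$, and multiplying by $\mu_i^2\mu_j^2$ gives the claim. The constant depends only on $c_\kappa$ and the assumption constants, not on $i$, $j$, $N$ or $M$.
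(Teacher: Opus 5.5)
Your proposal follows the paper's proof. It uses the same identity $\widehat Z_{ij}=\mu_i\mu_j(\mathcal L_{ij}-\mathcal Q_{ij})$, a single-sample variance lower bound for the linear part, an $O(N^{-2})$ bound for the quadratic part, and absorption for large $N$. Your $L^2$ reverse triangle inequality plays the role of the paper's $(a-b)^2\ge\frac12a^2-b^2$. Your closed form $\mathbb E f^2=(1-\kappa_i^2)(1-\kappa_j^2)+\kappa_i^2\kappa_j^2$ is correct and makes explicit the paper's ``some coefficient is bounded below'' step. Your point that the quadratic bound needs $\sum_k\kappa_k^2=O(1)$ (from the standing source condition) or $N\gg M$ is also well taken. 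The paper derives $\mathbb E[(G_xK_\star G_y)_{ij}^2]\lesssim N^{-2}$ from $\|K_\star\|\le1$ alone, and that is not uniform in $M$: if $\kappa_k\approx1$ for all $k\notin\{i,j\}$, the diagonal terms $k=l$ already give order $M/N^2$.

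One step of your bookkeeping is wrong as stated, though it is easy to repair. Independence across coordinates does not kill the $k\ne l$ terms. In the fully diagonal sample pattern $n=m=n'=m'$, every block $i,j,k,l$ is paired inside one sample, and the expectation is $\mathbb E[\xi_i^2]\,\mathbb E[\eta_j^2]\,\mathbb E[\xi_k\eta_k]\,\mathbb E[\xi_l\eta_l]=\kappa_k\kappa_l\neq0$. Likewise, the roughly $4M$ terms with exactly one of $k,l$ in $\{i,j\}$ survive in that pattern, so the exceptions are not finitely many.

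These terms are still harmless. There are only $N$ such patterns, and each expectation carries a factor $\kappa_k\kappa_l$ on top of the weight $\kappa_k\kappa_l$, so they contribute $O\bigl(N^{-3}(1+\sum_k\kappa_k^2)^2\bigr)$. That extra factor matters. If you bound these terms only by ``bounded eighth moments'', as your first bullet suggests, you get $N^{-3}(\sum_k\kappa_k)^2\asymp M^{2-2\delta}N^{-3}$ for $\delta<1$, and your threshold on $N$ would then depend on $M$.

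With this correction, $\mathbb E\mathcal Q_{ij}^2\lesssim(1+\sum_k\kappa_k^2)N^{-2}+(1+\sum_k\kappa_k^2)^2N^{-3}=O(N^{-2})$, and the rest of your argument goes through. Prefer this direct computation over your fallback. The fallback inherits the paper's conditional decoupling step, which glosses over the fact that $G_x$ is not independent of $G_y$, since $\xi_n$ and $\eta_n$ are correlated.
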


\begin{proof}
Define the whitened empirical fluctuations
\[
    G_{xy}:=
    \frac1N\sum_{n=1}^N
    (\xi_n\eta_n^\top-K_\star),
\]
\[
    G_x:=
    \frac1N\sum_{n=1}^N
    (\xi_n\xi_n^\top-I),
    \qquad
    G_y:=
    \frac1N\sum_{n=1}^N
    (\eta_n\eta_n^\top-I).
\]
Since
\[
    A^\star
    =
    \Sigma^{-1/2}K_\star\Sigma^{-1/2},
\]
we have
\[
\begin{aligned}
    \widehat Z
    &=
    \Sigma^{1/2}
    \left(
        \widehat C
        -
        \widehat\Sigma_xA^\star\widehat\Sigma_y
    \right)
    \Sigma^{1/2}  \\
    &=
    \Sigma
    \left[
        K_\star+G_{xy}
        -
        (I+G_x)K_\star(I+G_y)
    \right]
    \Sigma  \\
    &=
    \Sigma
    \left[
        G_{xy}
        -
        G_xK_\star
        -
        K_\star G_y
        -
        G_xK_\star G_y
    \right]
    \Sigma .
\end{aligned}
\]
For \(i\neq j\), define the first-order coordinate fluctuation
\[
    F_{ij}
    :=
    (G_{xy})_{ij}
    -
    \kappa_j(G_x)_{ij}
    -
    \kappa_i(G_y)_{ij}.
\]
Then
\[
    \widehat Z_{ij}
    =
    \mu_i\mu_j
    \left[
        F_{ij}
        -
        (G_xK_\star G_y)_{ij}
    \right].
\]
By the inequality
\[
    (a-b)^2\ge \frac12 a^2-b^2,
\]
we get
\[
    \mathbb E[\widehat Z_{ij}^2]
    \ge
    \mu_i^2\mu_j^2
    \left[
        \frac12\mathbb E[F_{ij}^2]
        -
        \mathbb E[(G_xK_\star G_y)_{ij}^2]
    \right].
\]

We first lower-bound \(\mathbb E[F_{ij}^2]\). For a single sample define
\[
    f_{ij}
    :=
    \xi_i\eta_j
    -
    \kappa_j\xi_i\xi_j
    -
    \kappa_i\eta_i\eta_j.
\]
Then
\[
    F_{ij}=\frac1N\sum_{n=1}^N f_{ij}^{(n)}.
\]
Since \(f_{ij}\) is centered and the samples are independent,
\[
    \mathbb E[F_{ij}^2]
    =
    \frac1N\mathbb E[f_{ij}^2].
\]

We now compute a lower bound on \(\mathbb E[f_{ij}^2]\). Because
\(K_\star\) is diagonal, the pairs \((\xi_i,\eta_i)\) and
\((\xi_j,\eta_j)\) are independent for \(i\neq j\). We may write
\[
    \eta_i
    =
    \kappa_i\xi_i
    +
    \sqrt{1-\kappa_i^2}\,\zeta_i,
    \qquad
    \eta_j
    =
    \kappa_j\xi_j
    +
    \sqrt{1-\kappa_j^2}\,\zeta_j,
\]
where \(\xi_i,\xi_j,\zeta_i,\zeta_j\) are independent standard Gaussian
random variables. Substituting this representation into \(f_{ij}\), we obtain
a second-order Gaussian polynomial. Since the monomials
\[
    \xi_i\xi_j,\quad
    \xi_i\zeta_j,\quad
    \zeta_i\xi_j,\quad
    \zeta_i\zeta_j
\]
are orthogonal in \(L^2\), the variance is the sum of the squared
coefficients. In particular, when
\[
    \kappa_i\le 1-c_\kappa,
    \qquad
    \kappa_j\le 1-c_\kappa,
\]
at least one coefficient has magnitude bounded below by a positive constant
depending only on \(c_\kappa\). Therefore
\[
    \mathbb E[f_{ij}^2]\ge c(c_\kappa)>0.
\]
Hence
\[
    \mathbb E[F_{ij}^2]
    \ge
    \frac{c(c_\kappa)}{N}.
\]

It remains to show that the quadratic term is lower order. Since
\(\|K_\star\|\le 1\), standard Gaussian sample-covariance moment bounds give
\[
    \mathbb E[(G_xK_\star G_y)_{ij}^2]
    \lesssim
    \frac1{N^2}.
\]
Consequently, for \(N\) sufficiently large,
\[
    \frac12\mathbb E[F_{ij}^2]
    -
    \mathbb E[(G_xK_\star G_y)_{ij}^2]
    \gtrsim
    \frac1N.
\]
Therefore,
\[
    \mathbb E[\widehat Z_{ij}^2]
    \gtrsim
    \frac{\mu_i^2\mu_j^2}{N}.
\]
\end{proof}

\begin{lemma}[Empirical-to-population variance-filter comparison]
\label{lem:variance-filter-lower-comparison}
Let
\[
    \rho:=(L_{\mathrm{eff}}\gamma)^{-1/2},
    \qquad
    R:=L_{\mathrm{eff}}\gamma.
\]
Assume Assumption~\mainref{ass:gd-schedule}, \(\mathcal E_{\mathrm{cov}}(R)\),
and \(\|\Sigma\|\lesssim1\). Let
\[
    \mathscr V_L
    :=
    \sum_{t=1}^L
    \gamma_t
    \prod_{s=t+1}^L
    (I-\gamma_s\mathscr H),
    \qquad
    \mathscr H(Z):=\Sigma Z\Sigma,
\]
be the population variance filter in whitened coordinates, and let
\(\widehat{\mathscr V}^{\mathrm w}_L\) be the corresponding empirical variance filter.
If \(c_{\mathrm{rel}}>0\) in the definition of \(\mathcal E_{\mathrm{cov}}(R)\)
is sufficiently small, then
for every matrix \(Z\),
\[
    \left\|
        \widehat{\mathscr V}^{\mathrm w}_L(Z)
    \right\|_F^2
    \asymp
    \left\|
        \mathscr V_L(Z)
    \right\|_F^2.
\]
In particular,
\[
    \mathbb E
    \left[
        \left\|
            \widehat{\mathscr V}^{\mathrm w}_L(\widehat Z)
        \right\|_F^2
    \right]
    \asymp
    \mathbb E
    \left[
        \left\|
            \mathscr V_L(\widehat Z)
        \right\|_F^2
    \right].
\]
\end{lemma}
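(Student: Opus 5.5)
We reduce both filters to scalar functions of the step-dependent operators and compare them by a perturbation argument in whitened coordinates, exactly as in the proof of Lemma~\ref{lem:gd-bias-lower-general}.

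\textbf{Setup.} On \(\mathcal E_{\mathrm{cov}}(R)\) write \(\widehat\Sigma_x=\Sigma^{1/2}(I+E_x)\Sigma^{1/2}\) and \(\widehat\Sigma_y=\Sigma^{1/2}(I+E_y)\Sigma^{1/2}\), with \(\|E_x\|\vee\|E_y\|\le c_{\mathrm{rel}}\rho\). Then
\[
\widehat{\mathscr H}_{\mathrm w}(Z)=\Sigma(I+E_x)Z(I+E_y)\Sigma,
\]
and \(\widehat{\mathscr V}^{\mathrm w}_L=\sum_t\gamma_t\prod_{s>t}(I-\gamma_s\widehat{\mathscr H}_{\mathrm w})\).

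A direct resolvent-style perturbation of \(\widehat{\mathscr V}^{\mathrm w}_L\) around \(\mathscr V_L\) costs a factor \(\sum_t\gamma_t\asymp R\) times \(\|\widehat{\mathscr H}_{\mathrm w}-\mathscr H\|\lesssim c_{\mathrm{rel}}\rho\). This gives a relative error of order \(c_{\mathrm{rel}}R^{1/2}\), which is not small. So we avoid telescoping at the operator level and use a symmetrization instead.

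\textbf{Symmetrization.} Set
\[
P_x:=(I+E_x)^{1/2},\qquad P_y:=(I+E_y)^{1/2},\qquad \mathcal S(Z):=P_xZP_y .
\]
Then \(\widehat{\mathscr H}_{\mathrm w}=\mathcal S^{-1}\widetilde{\mathscr H}\mathcal S\), where
\[
\widetilde{\mathscr H}(Z):=\widetilde\Sigma_xZ\widetilde\Sigma_y,\qquad \widetilde\Sigma_\sharp:=P_\sharp\Sigma P_\sharp .
\]
This operator is self-adjoint and positive semidefinite in the Frobenius inner product. Hence
\[
\widehat{\mathscr V}^{\mathrm w}_L=\mathcal S^{-1}g_L(\widetilde{\mathscr H})\,\mathcal S,
\]
with \(g_L\) the scalar variance filter. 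Since \(\|\mathcal S\|,\|\mathcal S^{-1}\|\le 1+O(c_{\mathrm{rel}})\), it suffices to prove
\[
\|g_L(\widetilde{\mathscr H})Z\|_F\asymp\|g_L(\mathscr H)Z\|_F .
\]

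\textbf{Comparison of the spectral calculi.} By Lemmas~\ref{lem:scalar-variance-filter} and~\ref{lem:scalar-variance-filter-lower},
\[
g_L(s)\asymp\min\{R,1/s\}=:\phi(s)
\]
on the admissible range, with \(\gamma_t s\le1\) guaranteed by \(c_\gamma\) small. The key inputs are:
\begin{enumerate}
\item Since \(\widetilde\Sigma_\sharp=P_\sharp\Sigma P_\sharp\) and \(P_\sharp^2\in[1-c\rho,1+c\rho]\), we have \(\widetilde\Sigma_\sharp\asymp\Sigma\) in Loewner order, so \(\widetilde{\mathscr H}\asymp\mathscr H\) as PSD operators on \(\mathbb R^{M\times M}\) with constants \(1\pm O(c_{\mathrm{rel}})\).
\item \(\phi\) is an operator-monotone-type function of \(s\): \(s\mapsto(R^{-1}+s)^{-1}\) is operator monotone decreasing, and \(\phi(s)\asymp(R^{-1}+s)^{-1}\).
\end{enumerate}
Combining these gives
\[
(R^{-1}+\widetilde{\mathscr H})^{-1}\asymp(R^{-1}+\mathscr H)^{-1}.
\]

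These Loewner bounds, however, control quadratic forms \(\langle Z,F(\cdot)Z\rangle\), not \(\|F(\cdot)Z\|_F^2\), which involves the square. To handle this, use \(g_L^2(s)\asymp(R^{-1}+s)^{-2}\). The map \(s\mapsto s^{-2}\) is not operator monotone, so instead factor
\[
(R^{-1}+\widetilde{\mathscr H})^{-1}=T^{1/2}\,\bigl[T^{-1/2}(R^{-1}+\widetilde{\mathscr H})^{-1}T^{-1/2}\bigr]\,T^{1/2},\qquad T:=(R^{-1}+\mathscr H)^{-1}.
\]
The middle operator \(X\) satisfies \(X\asymp I\) by the previous comparison, so
\[
\|(R^{-1}+\widetilde{\mathscr H})^{-1}Z\|\le\|T^{1/2}\|_{\to}\cdots
\]
This still needs commutation control, and that is the \textbf{main obstacle}.

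\textbf{Handling the obstacle.} The first attempt is a direct Heinz-type argument. Write
\[
\widetilde{\mathscr H}=\mathcal D^{*}\mathscr H\mathcal D\quad\text{with}\quad \mathcal D(Z)=\Sigma^{-1/2}P_x\Sigma^{1/2}Z\,\Sigma^{1/2}P_y\Sigma^{-1/2},
\]
and bound \(\|\mathcal D-I\|\). This requires \(\|\Sigma^{-1/2}E_x\Sigma^{1/2}\|\) to be small, which fails in general.

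The fallback, which we would actually carry out, proceeds as follows:
\begin{itemize}
\item Square-root perturbation theory (\(\|A^{1/2}-B^{1/2}\|\) bounds) gives the relative bound
\[
\bigl\|(R^{-1}+\mathscr H)^{1/2}(R^{-1}+\widetilde{\mathscr H})^{-1/2}\bigr\|\le 1+O(c_{\mathrm{rel}}) .
\]
\item Squaring this via the identity \(\|A^{-1}Z\|=\|A^{-1/2}(A^{-1/2}Z)\|\), with the same bound applied twice, yields the two-sided comparison with constants \((1+O(c_{\mathrm{rel}}))^2\). These constants are harmless once \(c_{\mathrm{rel}}\) is small.
\item Replacing \(g_L\) by \(\phi\) on both sides closes the proof of the deterministic statement.
\end{itemize}

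\textbf{Conclusion.} Apply the deterministic statement with \(Z=\widehat Z\) and take expectations on the event to obtain the averaged statement.
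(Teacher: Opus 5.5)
\paragraph{There is a genuine gap, at the reduction step.} The argument breaks where you say it ``suffices to prove $\|g_L(\widetilde{\mathscr H})Z\|_F\asymp\|g_L(\mathscr H)Z\|_F$.'' Your conjugation identity $\widehat{\mathscr V}^{\mathrm w}_L=\mathcal S^{-1}g_L(\widetilde{\mathscr H})\mathcal S$ is correct, but it only gives $\|\widehat{\mathscr V}^{\mathrm w}_L(Z)\|_F\asymp\|g_L(\widetilde{\mathscr H})(\mathcal SZ)\|_F$. The inner $\mathcal S$ acts on the input before a filter whose norm can be of order $R$. It is close to the identity only additively ($\|\mathcal S-I\|\lesssim c_{\mathrm{rel}}\rho$), not relative to the spectral scales of $\mathscr H$, and it does not commute with $\mathscr H$.

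Here is a concrete instance. Take $M=2$, $\Sigma=\operatorname{diag}(1,\mu)$ with $\mu\le R^{-1}$, $E_y=0$, $P_x=I+\epsilon J$ with $J$ the coordinate swap and $\epsilon=c_{\mathrm{rel}}\rho/3$, and $Z=e_1e_1^\top$.
\begin{itemize}
\item On one side, $\|g_L(\mathscr H)Z\|_F=g_L(1)\le 1$.
\item On the other, $\mathcal SZ=(e_1+\epsilon e_2)e_1^\top$, so $g_L(\mathscr H)(\mathcal SZ)$ has $(2,1)$ entry $\epsilon g_L(\mu)\asymp\epsilon R\asymp c_{\mathrm{rel}}R^{1/2}$.
\end{itemize}
So even your target claim would not bring you back to $\|\mathscr V_L(Z)\|_F$. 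Worse, the target claim is itself false. The lemma is true (see below), so in this example $\|g_L(\widetilde{\mathscr H})(\mathcal SZ)\|_F=\|\mathcal S\,\widehat{\mathscr V}^{\mathrm w}_L(Z)\|_F\lesssim 1$, while $\|g_L(\mathscr H)(\mathcal SZ)\|_F\gtrsim c_{\mathrm{rel}}R^{1/2}$. No repair of your last step can save this route.

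The fallback is also invalid on its own terms. Set $\mathcal A:=R^{-1}+\widetilde{\mathscr H}$ and $\mathcal A_0:=R^{-1}+\mathscr H$. From $\|\mathcal A_0^{1/2}\mathcal A^{-1/2}\|\le 1+O(c_{\mathrm{rel}})$ you only get $\|\mathcal A^{-1}w\|\lesssim\|\mathcal A_0^{-1/2}\mathcal A^{-1/2}w\|$. The remaining $\mathcal A^{-1/2}$ cannot be traded for $\mathcal A_0^{-1/2}$ because the two do not commute; this is exactly the non-monotonicity of $s\mapsto s^{-2}$ you had flagged. Your item 1 is also false as stated: $e_2^\top P_x\Sigma P_xe_2=\mu+\epsilon^2$, so $\widetilde\Sigma_x\not\asymp\Sigma$. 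Only the shifted comparison $\mathcal A\asymp\mathcal A_0$ survives, and only because $\epsilon^2\lesssim c_{\mathrm{rel}}^2R^{-1}$.

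\paragraph{The missing idea: keep the $\Sigma$-weight.} In whitened coordinates, $\widehat{\mathscr H}_{\mathrm w}=\mathscr H\circ\mathcal X$ with $\mathcal X(Z):=(I+E_x)Z(I+E_y)$. This $\mathcal X$ is Frobenius-self-adjoint with $c_L^2I\preceq\mathcal X\preceq c_U^2I$. You symmetrized with $\mathcal X^{1/2}=\mathcal S$; symmetrize with $\mathscr H^{1/2}=\mathcal L$ instead. This gives $\widehat{\mathscr V}^{\mathrm w}_L=\mathscr H^{1/2}g_L(\widehat{\mathscr H})\mathscr H^{-1/2}$, where $\widehat{\mathscr H}=\mathscr H^{1/2}\mathcal X\mathscr H^{1/2}$ is just $B\mapsto\widehat\Sigma_xB\widehat\Sigma_y$. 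With $B=\mathscr H^{-1/2}Z$, the lemma becomes
$\|\mathscr H^{1/2}g_L(\widehat{\mathscr H})B\|_F\asymp\|\mathscr H^{1/2}g_L(\mathscr H)B\|_F$. It then follows in five steps.
\begin{enumerate}
\item Swap the front weight $\mathscr H^{1/2}$ for $\widehat{\mathscr H}^{1/2}$, using the Loewner equivalence of Lemma~\ref{lem:covariance-event-norm-equivalence}.
\item Replace $g_L$ by $\phi(s)=(\lambda+s)^{-1}$, with $\lambda=R^{-1}$, inside the single functional calculus of $\widehat{\mathscr H}$, using $g_L\asymp\phi$ on its spectrum.
\item Swap the weight back to $\mathscr H^{1/2}$.
\item Compare resolvents through
\[
\mathscr H^{1/2}(\lambda+\widehat{\mathscr H})^{-1}(\lambda+\mathscr H)\mathscr H^{-1/2}=(\mathcal D+\mathcal X)^{-1}(\mathcal D+I),\qquad \mathcal D:=\lambda\mathscr H^{-1}\succeq 0 .
\]
This operator equals $I+(\mathcal D+\mathcal X)^{-1}(I-\mathcal X)$, so its norm is at most $1+\|I-\mathcal X\|/c_L^2$. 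Its inverse is $I+(\mathcal D+I)^{-1}(\mathcal X-I)$, with norm at most $1+\|\mathcal X-I\|$.
\item Return from $\phi$ to $g_L$ for $\mathscr H$.
\end{enumerate}

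The paper instead asserts a telescoping bound in which each $\Delta$ is charged $c_{\mathrm{rel}}\rho$ together with a curvature factor. Your objection to unweighted telescoping is fair. However, your symmetrization discards exactly the relative structure $\widehat{\mathscr H}=\mathscr H^{1/2}\mathcal X\mathscr H^{1/2}$ that makes the comparison work.
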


\begin{proof}
Write
\[
    \widehat{\mathscr H}_{\mathrm w}
    =
    \mathscr H+\Delta,
\]
where
\[
    \widehat{\mathscr H}_{\mathrm w}(Z)
    =
    \Sigma(I+E_x)Z(I+E_y)\Sigma,
    \qquad
    \|E_x\|\vee\|E_y\|\le c_{\mathrm{rel}}\rho.
\]
Thus
\[
    \Delta(Z)
    =
    \Sigma E_xZ\Sigma
    +
    \Sigma ZE_y\Sigma
    +
    \Sigma E_xZE_y\Sigma.
\]
Using \(\|\Sigma\|\lesssim 1\) and the relative concentration assumption,
\[
    \|\Delta(Z)\|_F
    \lesssim
    c_{\mathrm{rel}}\rho
    \left(
        \|\Sigma Z\Sigma\|_F+\rho\|Z\|_F
    \right).
\]
The second term is harmless on the effective spectral range because
\(\rho^2=1/R\) is the learning threshold.

By the resolvent/telescoping identity for variance filters,
\[
\begin{aligned}
    \widehat{\mathscr V}^{\mathrm w}_L-\mathscr V_L
    =
    -\sum_{t=1}^L
    \gamma_t
    \sum_{r=t+1}^L
    \widehat{\mathscr B}^{\mathrm w}_{r+1:L}
    \Delta
    \mathscr B_{t+1:r-1},
\end{aligned}
\]
where
\[
    \mathscr B_{a:b}:=\prod_{s=a}^b(I-\gamma_s\mathscr H),
    \qquad
    \widehat{\mathscr B}^{\mathrm w}_{a:b}:=\prod_{s=a}^b(I-\gamma_s\widehat{\mathscr H}_{\mathrm w}),
\]
and empty products are interpreted as the identity. By
\(\mathcal E_{\mathrm{cov}}(R)\), \(\|\Sigma\|\lesssim1\), and the deterministic
stepsize condition, the population and empirical filters are contractions.
Moreover, the
double sum is controlled by the effective horizon \(R=L_{\mathrm{eff}}\gamma\),
while every occurrence of \(\Delta\) contributes a factor
\(c_{\mathrm{rel}}\rho\) together with one curvature factor. Combining the
scalar filter bounds yields
\[
    \left\|
        (\widehat{\mathscr V}^{\mathrm w}_L-\mathscr V_L)(Z)
    \right\|_F
    \le
    Cc_{\mathrm{rel}}
    \left\|
        \mathscr V_L(Z)
    \right\|_F
\]
for every matrix \(Z\), where \(C>0\) is an absolute constant. Therefore,
\[
    \widehat{\mathscr V}^{\mathrm w}_L(Z)
    =
    \mathscr V_L(Z)
    +
    (\widehat{\mathscr V}^{\mathrm w}_L-\mathscr V_L)(Z),
\]
the triangle inequality and reverse triangle inequality give
\[
    \left\|
        \widehat{\mathscr V}^{\mathrm w}_L(Z)
    \right\|_F
    \le
    (1+Cc_{\mathrm{rel}})
    \left\|
        \mathscr V_L(Z)
    \right\|_F
\]
and
\[
    \left\|
        \widehat{\mathscr V}^{\mathrm w}_L(Z)
    \right\|_F
    \ge
    (1-Cc_{\mathrm{rel}})
    \left\|
        \mathscr V_L(Z)
    \right\|_F.
\]
Taking \(c_{\mathrm{rel}}>0\) sufficiently small gives
\[
    \left\|
        \widehat{\mathscr V}^{\mathrm w}_L(Z)
    \right\|_F^2
    \asymp
    \left\|
        \mathscr V_L(Z)
    \right\|_F^2.
\]
Applying this to \(Z=\widehat Z\) and taking expectation proves the expectation
comparison.
\end{proof}

\begin{lemma}[Piecewise product effective-dimension lower bound]
\label{lem:product-effective-dimension-piecewise-lower}
Assume
\[
    \mu_i\asymp i^{-b},
    \qquad i=1,\ldots,M,
\]
for some \(b>1/2\). Let
\[
    R:=L_{\mathrm{eff}}\gamma,
    \qquad
    T:=R^{1/b},
\]
and define
\[
    D_{\times}(R,M)
    :=
    \begin{cases}
    T\log(eT), & 1\le T\le M, \\
    T\left(1+\log\dfrac{M^2}{T}\right), & M<T<M^2, \\
    M^2, & T\ge M^2.
    \end{cases}
\]
Fix a constant \(i_0\ge 1\). If \(T\gtrsim1\) and \(M\) is sufficiently large
relative to \(i_0\), then
\[
    \sum_{\substack{i,j\le M\\ i,j\ge i_0\\ i\neq j}}
    \min\{1,(R\mu_i\mu_j)^2\}
    \gtrsim
    D_{\times}(R,M).
\]
\end{lemma}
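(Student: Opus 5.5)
First I will discard the spectral constants. Because \(\mu_i\ge c_\mu i^{-b}\), every summand obeys \(\min\{1,(R\mu_i\mu_j)^2\}\ge c\min\{1,(T/(ij))^{2b}\}\) with \(c=\min\{1,c_\mu^4\}\). The quantity to lower bound is therefore
\[
    \sum_{\substack{i,j\le M\\ i,j\ge i_0\\ i\neq j}}\min\{1,(T/(ij))^{2b}\}.
\]
I will only use the \emph{head region} \(\{ij\le T\}\), where each summand equals \(1\). This turns the problem into counting lattice points \((i,j)\) with \(i_0\le i,j\le M\), \(i\ne j\), and \(ij\le T\). In the saturated case the tail terms would have contributed the same order anyway, since the upper-bound computation of \(S_2\) matched \(S_1\). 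So the head count alone should suffice. Throughout, ``\(T\gtrsim1\)'' is read as \(T\ge C_0 i_0^2\) for a large constant \(C_0\). If \(T\) is of order one, \(D_\times\) is also of order one, and the pair \((i_0,i_0+1)\) already contributes \(\gtrsim1\) because \(T/(i_0(i_0+1))\) is bounded below.

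I will handle the three branches separately. \textbf{Case \(T\le M\).} For each \(i\in[i_0,\sqrt T]\), count the integers \(j\in[i_0,\min\{M,T/i\}]=[i_0,T/i]\). There are at least \(T/i-i_0-1\ge T/(2i)\) of them for \(i\le\sqrt T\) and \(T\) large, and excluding \(j=i\) removes at most one. Summing gives \(\gtrsim T\sum_{i_0\le i\le\sqrt T}1/i\gtrsim T\log T\), which is \(\asymp T\log(eT)\) once \(T\ge C_0i_0^2\). \textbf{Case \(M<T<M^2\).} For \(i\in[\max\{i_0,T/M\},\,M]\), \(j\) ranges over \([i_0,\min\{M,T/i\}]=[i_0,T/i]\), giving \(\gtrsim T/i\) points when \(T/i\ge 2i_0+2\), that is, when \(i\le T/(2i_0+2)\). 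This requires \(M\) large relative to \(i_0\) and a separate treatment near \(T\approx M^2\). Summing over \(i\) from \(T/M\) to \(\min\{M,T/(2i_0+2)\}\) gives \(T\log(M^2/T)\) up to constants whenever \(M^2/T\) exceeds a constant. To get the additive \(T\) term, I take \(i\in[i_0,T/M]\), where \(j\) ranges over all of \([i_0,M]\) and contributes \(\gtrsim M\) points each, for a total of \(\gtrsim M\cdot T/M=T\) once \(T/M\ge 2i_0\). The regime \(T/M<2i_0\), i.e.\ \(T\asymp M\), is covered by the first-case estimate, since there \(T\log(eT)\ge T(1+\log(M^2/T))\) up to constants. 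When \(M^2/T\) is bounded, the target is \(\asymp M^2\), and the third-case argument below applies with \(T\) replaced by \(cM^2\). \textbf{Case \(T\ge M^2\).} Here every pair satisfies \(ij\le M^2\le T\), so the count is the number of off-diagonal pairs in \([i_0,M]^2\). That is \((M-i_0+1)(M-i_0)\gtrsim M^2\) for \(M\ge 2i_0\).

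The main obstacle is bookkeeping at the branch boundaries, where the restrictions \(i,j\ge i_0\), \(i\ne j\), and \(j\le M\) could erode the count. These are the regions \(T\approx M\) and \(T\approx M^2\), together with small \(T\). I would first settle them with the monotonicity remark that \(D_\times(R,M)\) is comparable across adjacent branches at the breakpoints. Then I would use the freedom to shrink \(T\) by a constant factor, noting that the sum is monotone in \(T\) and \(D_\times\) changes only by constants. This lets each boundary case reduce to an interior estimate. All hidden constants depend only on \(b\), \(c_\mu\), and \(i_0\).
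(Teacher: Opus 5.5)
Your proposal is correct and follows essentially the same route as the paper. Both arguments keep only the active pairs with $ij\lesssim T$, each contributing a constant, handle bounded $T$ with the single pair $(i_0,i_0+1)$, and obtain $T\log T$ from the harmonic sum $\sum_i T/i$ when $T\le M$; in the middle regime both split into full rows $i\lesssim T/M$ (giving $T$) and hyperbola rows $T/M\lesssim i\lesssim M$ (giving $T\log(M^2/T)$), and for $T\ge M^2$ both count $\gtrsim M^2$ pairs. Your explicit boundary bookkeeping, using monotonicity of the sum in $T$, is somewhat more careful than the paper's sketch but does not change the argument.
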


\begin{proof}
Since \(\mu_i\asymp i^{-b}\), there is a sufficiently small constant
\(c_1>0\) such that
\[
    ij\le c_1T
    \quad\Longrightarrow\quad
    R\mu_i\mu_j\gtrsim 1.
\]
Therefore each admissible pair in the set \(ij\le c_1T\) contributes a constant
to the sum. It remains to count such pairs inside the \(M\times M\) box, with
the fixed lower cutoff \(i,j\ge i_0\) and the diagonal exclusion changing only
absolute constants.

If \(T=O(1)\), then \(D_{\times}(R,M)=O(1)\). Since \(i_0\) is fixed and
\(M\) is sufficiently large, the single off-diagonal pair \((i_0,i_0+1)\) gives
a constant contribution to the sum. Thus the claim holds in this bounded
horizon case.

If \(1\ll T\le M\), then for \(i_0\le i\le c_2T\), the number of admissible
\(j\)'s with \(ij\le c_1T\) is at least a constant multiple of \(T/i\). Thus
\[
    \#\{\text{admissible active pairs}\}
    \gtrsim
    T\log T
    \asymp
    T\log(eT).
\]

If \(M<T<M^2\), split the count into two parts. When the range
\(i_0\le i\le c_2T/M\) is nonempty, a constant fraction of all \(M\)
values of \(j\) are admissible for these \(i\)'s, giving a contribution
\(\gtrsim T\). If this range is empty, then \(T/M=O(1)\), and the
logarithmic contribution below already dominates the missing \(T\) term.
Second, for \(c_3T/M<i\le c_4M\), the number of admissible \(j\)'s is again
at least a constant multiple of \(T/i\), and hence
\[
    \sum_{c_3T/M<i\le c_4M}\frac{T}{i}
    \gtrsim
    T\log\frac{M^2}{T}.
\]
Together these two parts give
\[
    \#\{\text{admissible active pairs}\}
    \gtrsim
    T\left(1+\log\frac{M^2}{T}\right).
\]

If \(T\ge M^2\), choose a sufficiently small constant \(c_5>0\). Then for all
\(i,j\le c_5M\), one has \(ij\le c_1T\). Hence the admissible-pair count is
\[
    \gtrsim
    M^2.
\]
Combining the three regimes proves the claim.
\end{proof}
\section{Additional Experimental Details}
\label{app:experiment-details}

All synthetic experiments were run locally on a CPU-only machine. The numerical
implementation uses standard dense linear algebra and plotting routines; no GPU
acceleration was used. The experiments use fixed full-row-rank sketches. For the optimization
panels, we use ambient dimension \(D=4096\), sketch dimension \(M=32\), sample
size \(N=4096\), power-law exponents \(a=2.6\) and \(b=1.1\), and \(240\)
independent repetitions. The sketch is implemented with full row rank, and the
target coefficients are aligned with the covariance eigenbasis so that the source
condition in Assumption~\mainref{ass:contrastive-source} is enforced in the finite construction. The empirical horizon used in the plots
is the accumulated stepsize
\[
    \Gamma_L
    :=
    \sum_{t=1}^L\gamma_t,
\]
which plays the same scaling role as \(R\) in the theorem, up to the logarithmic
factors suppressed by the GD schedule.

For the optimization panels, the theorem is applied on the covariance event
\(\mathcal E_{\mathrm{cov}}(R)\), which is ensured theoretically by the sample
condition \(N\gtrsim RM\). In the numerical implementation we use controlled
marginal covariances rather than testing this concentration condition directly:
wherever the GD update or finite-filter reference uses \(\widehat\Sigma_x\) and
\(\widehat\Sigma_y\), we replace the raw empirical marginal covariances by the
corresponding population sketched covariance matrices with small bounded relative
perturbations. Equivalently, the perturbed matrices have the form
\(\Sigma^{1/2}(I+\Delta)\Sigma^{1/2}\) with \(\|\Delta\|_{\mathrm{op}}\)
bounded by the prescribed perturbation radius. This keeps the optimization
panels focused on the predicted GD bias/variance filters, while the sampled
cross-covariance fluctuations still generate the GD-variance component measured
across repetitions.

\begin{enumerate}
    \item \textbf{Approximation experiment.}
    The first experiment tests the approximation term by varying the sketch
    dimension \(M\). We use ambient dimension \(D=4096\), power-law exponents
    \(a=2.6\) and \(b=1.1\), and evaluate the corresponding full-row-rank
    spectral truncation sketches for each value of \(M\). The tested sketch dimensions are
    \(M\in\{128,192,256,384,512\}\). The empirical approximation error is
    computed from population sketched moments, without sampling finite datasets.
    The fitted empirical slope is approximately \(-2.01\), while the predicted
    slope is \(-2.00\). This shows that the observed approximation scaling is
    close to the theoretical sketch-dimension rate.

    \item \textbf{GD-bias experiment.}
    The second experiment tests the GD-bias term along the optimization horizon.
    We fix \(M=32\), \(N=4096\), and \(D=4096\), enforce the source-aligned target,
    and run empirical GD with the same decaying stepsize schedule used in the
    theory. The number of GD iterations ranges from \(16\) to \(98304\). The
    plotted reference is the \(\Gamma_L\)-rate from
    Theorem~\mainref{thm:main-gd-scaling}, rescaled by a single constant. The
    measured bias decreases throughout the tested horizon. A log--log fit over
    the full horizon gives an empirical slope of approximately \(-0.86\) as a
    function of \(\Gamma_L\), compared with the theorem-guided slope \(-0.91\).

    \item \textbf{GD-variance experiment.}
    The third experiment tests the GD-variance term along the same optimization
    horizon. We fix \(M=32\), \(N=4096\), and \(D=4096\), vary \(L\), independently
    generate empirical datasets from the sketched Gaussian law, and measure the
    variance component across repetitions. The expected reference curve in this
    panel is computed from the finite sketched spectrum using the exact GD
    variance filter
    \[
    \begin{aligned}
        d_{\mathrm{GD}}(L)
        &:={}
        \sum_{i,j\le M}
        \left(\widehat\mu_i\widehat\nu_j\right)^2
        g_{ij}(L)^2, \\
        g_{ij}(L)
        &:={}
        \sum_{t=1}^L
        \gamma_t
        \prod_{r=t+1}^L
        \left(1-\gamma_r\widehat\mu_i\widehat\nu_j/\tau\right),
    \end{aligned}
    \]
    and the plotted variance reference is proportional to \(d_{\mathrm{GD}}(L)/N\).
    Here \(\widehat\mu_i\) and \(\widehat\nu_j\) are the finite sketched marginal
    eigenvalues used in the empirical construction. The vertical marker indicates
    the unsaturated-to-intermediate transition, approximately
    \(L=1.54\times 10^3\); the second transition is beyond the plotted finite
    horizon. The empirical curve grows nearly linearly
    on the early log--log scale and bends afterward, but does not flatten into a
    saturated plateau over the tested range. Fitting the empirical curve on the
    early, intermediate, and late parts gives slopes approximately \(1.04\),
    \(0.71\), and \(0.42\), respectively.

    \item \textbf{Excess-risk decomposition experiment.}
    The fourth experiment tests the full excess-risk decomposition using the same
    values \(M=32\), \(N=4096\), \(D=4096\), and the same optimization checkpoints.
    At each checkpoint, we compute the empirical excess risk, the empirical
    GD-bias term, the empirical GD-variance term, and the cross term. The
    empirical excess risk is nearly identical to the bias-plus-variance curve
    across all checkpoints. The curve first decreases as bias is reduced and then
    increases once the variance term dominates. The cross term is very small: its
    largest relative magnitude is below \(0.6\%\) of the bias-plus-variance
    reference. This supports the treatment of the cross term as negligible for
    the upper-bound scaling law.
\end{enumerate}

In the approximation experiment, the reported means decrease from approximately
\(3.08\times10^{-5}\) at \(M=128\) to \(1.91\times10^{-6}\) at \(M=512\). In the
GD-bias experiment, the bias decreases from approximately \(4.48\times10^{-2}\)
at \(L=16\) to \(6.40\times10^{-5}\) at \(L=98304\). In the GD-variance
experiment, the variance increases from approximately \(1.87\times10^{-4}\) at
\(L=16\) to \(8.79\times10^{-2}\) at \(L=98304\). In the excess-risk
decomposition experiment, the empirical excess risk decreases to approximately
\(1.06\times10^{-2}\) near \(L=768\) and then increases to approximately
\(8.79\times10^{-2}\) at \(L=98304\), while staying close to the
bias-plus-variance reference. These numerical results
are consistent with the four plotted curves in Figure~\mainref{fig:synthetic-experiments}.
\clearpage
\onecolumn
\section{Notation Map}
\label{app:notation-map}

This section records the notation used in the main text and proofs. The goal is
to keep a one-to-one association between each recurring symbol and its formula.
Symbols used only as dummy variables inside a single displayed sum or inner
product are not listed.

\subsection*{Population and sketched objects}
\begin{center}
\scriptsize
\begin{tabular}{@{}p{0.24\linewidth}p{0.45\linewidth}p{0.23\linewidth}@{}}
\toprule
Symbol & Formula & Meaning \\
\midrule
\(D,M,N,L\) & -- & ambient dimension, sketch dimension, sample size, GD steps \\
\(z,\epsilon_x,\epsilon_y\) & Gaussian latent and view noises & paired-view data model \\
\(x,y\) & \(x=z+\epsilon_x,\ y=z+\epsilon_y\) & positive pair \\
\(H\) & \(\mathbb E[xx^\top]=\mathbb E[yy^\top]\) & full marginal covariance \\
\(C\) & \(\mathbb E[xy^\top]=\Lambda_z\) & full cross-covariance \\
\(u_i\) & common eigenbasis of \(\Lambda_z,\Lambda_\epsilon,H,C\) & population spectral directions \\
\(S\) & fixed full-row-rank matrix & spectral truncation sketch \\
\(\widetilde x,\widetilde y\) & \(Sx,Sy\) & sketched pair \\
\(\Sigma\) & \(SHS^\top\) & sketched marginal covariance \\
\(C_M\) & \(SCS^\top\) & sketched cross-covariance \\
\(\mu_i,v_i\) & \(\Sigma=\sum_{i=1}^M\mu_i v_iv_i^\top\) & sketched eigensystem \\
\(\kappa_i\) & \(C_M=\sum_i\kappa_i\mu_i v_iv_i^\top\) & sketched source coefficients \\
\(a,b,\delta\) & \(\lambda_{z,i}\asymp i^{-a},\ \lambda_{\epsilon,i}\asymp i^{-b},\ \delta=a-b\) & power-law exponents \\
\bottomrule
\end{tabular}
\end{center}

\subsection*{Risks, minimizers, and GD quantities}
\begin{center}
\scriptsize
\begin{tabular}{@{}p{0.24\linewidth}p{0.45\linewidth}p{0.23\linewidth}@{}}
\toprule
Symbol & Formula & Meaning \\
\midrule
\(s_W(x,y)\) & \(x^\top Wy\) & bilinear score \\
\(R(W)\) & \(-\langle W,C\rangle+\frac12\operatorname{tr}(W^\top HWH)\) & full quadratic risk \\
\(W^\star\) & \(H^{-1}CH^{-1}\) & full population minimizer \\
\(R_M(A)\) & \(-\langle A,C_M\rangle+\frac12\operatorname{tr}(A^\top\Sigma A\Sigma)\) & sketched risk \\
\(A^\star\) & \(\Sigma^{-1}C_M\Sigma^{-1}\) & sketched population minimizer \\
\(\|B\|_{\Sigma,\Sigma}^2\) & \(\operatorname{tr}(B^\top\Sigma B\Sigma)\) & contrastive norm \\
\(\widehat\Sigma_x,\widehat\Sigma_y\) & \(N^{-1}\sum_n\widetilde x_n\widetilde x_n^\top,\ N^{-1}\sum_n\widetilde y_n\widetilde y_n^\top\) & empirical marginals \\
\(\widehat C\) & \(N^{-1}\sum_n\widetilde x_n\widetilde y_n^\top\) & empirical cross-covariance \\
\(\widehat R_M(A)\) & \(-\langle A,\widehat C\rangle+\frac12\operatorname{tr}(A^\top\widehat\Sigma_xA\widehat\Sigma_y)\) & empirical risk \\
\(\widehat E\) & \(\widehat C-\widehat\Sigma_xA^\star\widehat\Sigma_y\) & empirical residual/noise \\
\(\mathscr H_{\mathrm{full}}\) & \(\mathscr H_{\mathrm{full}}(W)=HWH\) & full-space population Hessian \\
\(\widehat{\mathscr H}\) & \(\widehat{\mathscr H}(B)=\widehat\Sigma_xB\widehat\Sigma_y\) & empirical Hessian \\
\(\mathscr H\) & \(\mathscr H(B)=\Sigma B\Sigma\) & sketched population Hessian \\
\(\widehat{\mathscr B}_{r:s}\) & \(\prod_{t=r}^s(I-\gamma_t\widehat{\mathscr H})\) & empirical bias filter \\
\(\widehat{\mathscr B}_L\) & \(\widehat{\mathscr B}_{1:L}\) & full empirical bias filter \\
\(\widehat{\mathscr V}_L\) & \(\sum_{t=1}^L\gamma_t\widehat{\mathscr B}_{t+1:L}\) & empirical variance filter \\
\(\operatorname{Bias}_L\) & \(\frac12\|\widehat{\mathscr B}_L(A^\star)\|_{\Sigma,\Sigma}^2\) & GD-bias component \\
\(\operatorname{Var}_L\) & \(\frac12\mathbb E[\|\widehat{\mathscr V}_L(\widehat E)\|_{\Sigma,\Sigma}^2]\) & GD-variance component \\
\(\operatorname{Cross}_L\) & \(-\mathbb E\langle\widehat{\mathscr B}_L(A^\star),\widehat{\mathscr V}_L(\widehat E)\rangle_{\Sigma,\Sigma}\) & cross component \\
\(L_{\mathrm{eff}}\) & \(\lfloor L/\log L\rfloor\) & effective number of steps \\
\(R\) & \(L_{\mathrm{eff}}\gamma\) & effective optimization horizon \\
\(\rho\) & \(R^{-1/2}\) & covariance-comparison tolerance \\
\(\mathcal E_{\mathrm{cov}}(R)\) & \(\max_{\sharp\in\{x,y\}}\|\Sigma^{-1/2}\widehat\Sigma_\sharp\Sigma^{-1/2}-I\|\le c_{\mathrm{rel}}R^{-1/2}\) & empirical covariance event \\
\(c_L,c_U\) & \(1-c_{\mathrm{rel}}\rho,\ 1+c_{\mathrm{rel}}\rho\) & Loewner comparison constants on \(\mathcal E_{\mathrm{cov}}(R)\) \\
\bottomrule
\end{tabular}
\end{center}

\subsection*{Scaling terms and product dimensions}
\begin{center}
\scriptsize
\begin{tabular}{@{}p{0.24\linewidth}p{0.45\linewidth}p{0.23\linewidth}@{}}
\toprule
Symbol & Formula & Meaning \\
\midrule
\(R_{\mathrm{irr}}\) & \(-\frac12\sum_{i=1}^D(\lambda_{z,i}/(\lambda_{z,i}+\lambda_{\epsilon,i}))^2=-\Theta(1)\) & full-population risk floor \\
\(\mathcal A_M\) & \(\Theta(M^{1-2\delta})\) & approximation term \\
\(\mathcal B_R\) & \(\Theta(R^{(1-2\delta)/(2b)})\) & GD-bias term \\
\(\mathcal V_{N,M,R}\) & \(\Theta(D_{\times}(R,M)/N)\) & GD-variance term \\
\(\mathcal C_{N,M,R}\) & \(\BigO(R^{(1-2\delta)/(4b)}\mathcal V_{N,M,R}^{1/2})\) & cross term \\
\(d_{\mathrm{prod}}(R,M)\) & \(\sum_{i,j\le M}\min\{1,(R\mu_i\mu_j)^2\}\) & product effective dimension \\
\(T\) & \(R^{1/b}\) & product-dimension threshold \\
\(D_{\times}(R,M)\) & piecewise scale in Theorem~\mainref{thm:main-gd-scaling} & closed-form product scale \\
\(S_1,S_2\) & active and inactive parts of \(d_{\mathrm{prod}}\) split by \(ij\le T\) & variance-sum pieces \\
\(s_{ij}\) & \(\mu_i\mu_j\) & product curvature of pair \((i,j)\) \\
\(d_{\mathrm{GD}}(L)\) & \(\sum_{i,j\le M}(\widehat\mu_i\widehat\nu_j)^2h_{ij}(L)^2\) & finite-filter experimental dimension \\
\bottomrule
\end{tabular}
\end{center}

\subsection*{Proof coordinates, projections, and filters}
\begin{center}
\scriptsize
\begin{tabular}{@{}p{0.24\linewidth}p{0.45\linewidth}p{0.23\linewidth}@{}}
\toprule
Symbol & Formula & Meaning \\
\midrule
\(K\) & \(H^{-1/2}CH^{-1/2}\) & whitened full signal in approximation \\
\(K_M\) & \(\Sigma^{-1/2}C_M\Sigma^{-1/2}\) & whitened sketched signal in approximation \\
\(P_k,Q_k\) & \(\sum_{i\le k}v_iv_i^\top,\ I-P_k\) & sketched spectral head/tail projections \\
\(\mathcal T_k\) & \(\operatorname{span}\{v_iv_i^\top:i>k\}\) & diagonal tail subspace \\
\(K_\star\) & \(\Sigma^{1/2}A^\star\Sigma^{1/2}\) & whitened sketched target \\
\(K_{\mathrm h},K_{\mathrm t}\) & \(\sum_{i\le k}\kappa_i v_iv_i^\top,\ \sum_{i>k}\kappa_i v_iv_i^\top\) & head/tail target parts \\
\(\mathcal L\) & \(\mathcal L(B)=\Sigma^{1/2}B\Sigma^{1/2}\) & whitening map \\
\(Z_B\) & \(\Sigma^{1/2}B\Sigma^{1/2}\) & generic whitened matrix \\
\(\widehat Z\) & \(\Sigma^{1/2}\widehat E\Sigma^{1/2}\) & whitened empirical residual \\
\(\widehat{\mathscr H}_{\mathrm w}\) & \(\widehat{\mathscr H}_{\mathrm w}(Z)=\Sigma(I+E_x)Z(I+E_y)\Sigma\) & empirical Hessian in whitened coordinates \\
\(\mathscr B_{a:b}\) & \(\prod_{s=a}^b(I-\gamma_s\mathscr H)\) & population bias filter in whitened coordinates \\
\(\widehat{\mathscr B}^{\mathrm w}_{a:b}\) & \(\prod_{s=a}^b(I-\gamma_s\widehat{\mathscr H}_{\mathrm w})\) & empirical bias filter in whitened coordinates \\
\(\mathscr V_L\) & \(\sum_{t=1}^L\gamma_t\prod_{s=t+1}^L(I-\gamma_s\mathscr H)\) & population variance filter \\
\(\widehat{\mathscr V}^{\mathrm w}_L\) & conjugate of \(\widehat{\mathscr V}_L\) in whitened coordinates & empirical variance filter in whitened coordinates \\
\(\psi_L(s)\) & \(\prod_{t=1}^L(1-\gamma_t s)\) & scalar bias filter \\
\(g_L(s)\) & \(\sum_{t=1}^L\gamma_t\prod_{r=t+1}^L(1-\gamma_rs)\) & scalar variance filter \\
\bottomrule
\end{tabular}
\end{center}

\subsection*{Noise variables used in auxiliary lemmas}
\begin{center}
\scriptsize
\begin{tabular}{@{}p{0.24\linewidth}p{0.45\linewidth}p{0.23\linewidth}@{}}
\toprule
Symbol & Formula & Meaning \\
\midrule
\(\xi,\eta\) & \(\Sigma^{-1/2}\widetilde x,\ \Sigma^{-1/2}\widetilde y\) & whitened views \\
\(K_0\) & \(\mathbb E[\xi\eta^\top]\) & generic whitened cross-covariance in moment lemmas \\
\(G_{xy}\) & \(N^{-1}\sum_n(\xi_n\eta_n^\top-K_0)\); in contrastive applications, \(K_0=K_\star\) & whitened cross fluctuation \\
\(G_x,G_y\) & \(N^{-1}\sum_n(\xi_n\xi_n^\top-I),\ N^{-1}\sum_n(\eta_n\eta_n^\top-I)\) & whitened marginal fluctuations \\
\(M_W\) & \(\Sigma W\Sigma\) & covariance-weighted test matrix \\
\(E_x,E_y\) & \(\Sigma^{-1/2}\widehat\Sigma_x\Sigma^{-1/2}-I,\ \Sigma^{-1/2}\widehat\Sigma_y\Sigma^{-1/2}-I\) & relative covariance errors \\
\bottomrule
\end{tabular}
\end{center}

\end{document}